\documentclass[11pt]{article} 
\usepackage{times,graphicx}
\usepackage{fullpage}


\usepackage{amsmath,amsfonts,bm}




\def\Figref#1{Figure~\ref{#1}}


\def\Secref#1{Section~\ref{#1}}


\def\eqref#1{equation~\ref{#1}}









\def\1{\bm{1}}










\DeclareMathAlphabet{\mathsfit}{\encodingdefault}{\sfdefault}{m}{sl}
\SetMathAlphabet{\mathsfit}{bold}{\encodingdefault}{\sfdefault}{bx}{n}













\DeclareMathOperator*{\argmin}{arg\,min}

\usepackage{amsthm}
\usepackage{amssymb}
\usepackage{mathrsfs}
\usepackage{mathtools}
\usepackage{dsfont}

\theoremstyle{plain}
\newtheorem{theorem}{Theorem} 
\newtheorem{lemma}[theorem]{Lemma} %
\newtheorem{proposition}[theorem]{Proposition} %
\theoremstyle{definition} %
\newtheorem{definition}{Definition} 
\newtheorem{assumption}{Assumption} 
\theoremstyle{remark} %

\newcommand\itref[1]{\hyperref[#1]{\textcolor{black}{\ref*{#1}}}} 
\newcommand{\transpose}{\mathsf{T}}

\newcommand{\set}[1]{\left\{#1\right\}}
\newcommand{\RGNnoisy}{\mathcal{N}}
\newcommand{\RGNcorr}{\mathcal{C}}
\usepackage{booktabs}
\usepackage{hyperref}
\usepackage{subcaption}
\usepackage{url}
\usepackage{natbib} 
\usepackage{multirow}
\usepackage{amsmath}
\usepackage{array}
\usepackage{enumitem}
\usepackage{ragged2e}
\usepackage{booktabs}
\usepackage{longtable}
\usepackage[most,listings]{tcolorbox}
\tcbuselibrary{listings}
\tcbuselibrary{breakable}

\title{When Bias Pretends to Be Truth: How Spurious Correlations Undermine Hallucination Detection in LLMs
}
\newcolumntype{M}[1]{>{\RaggedRight\arraybackslash}m{#1}}

\tcbuselibrary{skins} %
\usepackage{xcolor}

\makeatletter
\renewcommand{\paragraph}{%
  \@startsection{paragraph}{4}%
  {\z@}{0ex}{-1em}%
  {\normalfont\normalsize\bfseries}%
}
\makeatother

\setlist[enumerate]{leftmargin=2em}

\definecolor{myblue}{HTML}{EAEAFB}

\newtcolorbox{takeawaybox}[2][]{
    enhanced,
    colback=myblue,
    colframe=black,
    boxrule=1.5pt,
    arc=2mm,
    fonttitle=\bfseries,
    attach boxed title to top left={
        xshift=4mm,
        yshift=-3mm
    },
    boxed title style={
        colback=black,
        colframe=black,
        arc=1mm,
    },
    top=3mm,
    bottom=1mm,
    left=1.5mm,
    right=1.5mm,
    title=#2,
    #1
}

\newtcblisting{promptbox}[1]{%
  colback=black!5,
  colframe=black!75,
  fonttitle=\bfseries,
  title=#1,
  listing only,
  breakable,
  listing options={
    basicstyle=\ttfamily\small,
    breaklines=true,
    showstringspaces=false,
    keywordstyle=\color{blue},
  },
}


\author{
Shaowen Wang\textsuperscript{*} , Yiqi Dong\textsuperscript{*}, Ruinian Chang\textsuperscript{*}, Tansheng Zhu\textsuperscript{*}, Yuebo Sun, Kaifeng Lyu, Jian Li\textsuperscript{\dag}\\
Institute for Interdisciplinary Information Sciences, Tsinghua University\\
\texttt{\{wangsw23,dongyq24,crn23,zts25,sun-yb25\}@mails.tsinghua.edu.cn,}\\
\texttt{klyu@mail.tsinghua.edu.cn, lapordge@gmail.com}
}

%

\begin{document}

\date{}       
\maketitle
\begingroup
\renewcommand\thefootnote{*}
\footnotetext{Equal contribution.}
\endgroup
\begingroup
\renewcommand\thefootnote{\dag}
\footnotetext{Corresponding author.}
\endgroup
\begin{abstract}
Despite substantial advances, large language models (LLMs) continue to exhibit hallucinations, generating plausible yet incorrect responses. In this paper, we highlight a critical yet previously underexplored class of hallucinations driven by spurious correlations—superficial but statistically prominent associations between features (e.g., surnames) and attributes (e.g., nationality) present in the training data. We demonstrate that these spurious correlations induce hallucinations that are confidently generated, immune to model scaling, evade current detection methods, and persist even after refusal fine-tuning. Through systematically controlled synthetic experiments and empirical evaluations on state-of-the-art open-source and proprietary LLMs (including GPT-5), we show that existing hallucination detection methods, such as confidence-based filtering and inner-state probing, fundamentally fail in the presence of spurious correlations. Our theoretical analysis further elucidates why these statistical biases intrinsically undermine confidence-based detection techniques. Our findings thus emphasize the urgent need for new approaches explicitly designed to address hallucinations caused by spurious correlations.

\end{abstract}

\section{Introduction}
Hallucinations in large language models (LLMs), characterized by confidently generating incorrect or non-existent information, emerge as a major barrier to their safe and reliable deployment~\citep{ji2023survey,zhang2025siren,tonmoy2024comprehensive}. Understanding and mitigating hallucinations requires identifying their diverse origins and devising robust interventions at different stages of the model development lifecycle.

Previous research identifies two primary sources of hallucinations in large language models: inaccuracies in pretraining data and inherent limitations in models' memorization and processing capabilities. Data inaccuracies cause models to internalize and propagate errors, typically addressed by cleaning training data~\citep{ji2023survey,tonmoy2024comprehensive,li2022pre}. Model limitations, even with error-free data, lead to hallucinations related to memorization and recall~\citep{Pan2025UnderstandingLB}. For facts within the pretraining data, scaling up model size and datasets helps improve accuracy~\citep{allen2024physics33,AllenZhu-icml2024-tutorial}. For facts not covered, researchers focus on detecting unsupported claims through confidence-based uncertainty signals~\citep{huang2025confqa,zhang2024r} or inner-state activation analysis~\citep{burger2024truth,li2025hd,o2025single,zou2023representation}. Additionally, post-training methods such as refusal fine-tuning~\citep{yin2023large} and reinforcement learning approaches~\citep{singh2025fspo} are explored. However, a critical question remains: are these known interventions sufficient?

In this study, we highlight a critical yet underexplored cause of hallucinations: spurious correlations---correlations that do not imply causation in statistics; specifically, situations where two variables appear related, but this relationship is coincidental or confounded by an external variable~\citep{Torralba2011UnbiasedLA,Peters2015CausalIB,Geirhos2020ShortcutLI}. Such correlations are ubiquitous in large-scale corpora, arising from geographic, occupational, or demographic regularities (e.g., names associated with certain regions or professions) as studied by \citet{Caliskan2016SemanticsDA}. When models overfit to these surface-level correlations, they may confidently generate false information that aligns with the learned bias rather than ground truth.

To systematically investigate this phenomenon, we design a controlled experiment following the methodology introduced in the \textit{Physics of Language Models} series~\citep{AllenZhu-icml2024-tutorial,allen2024physics33}. Specifically, we artificially introduce spurious correlations into the training dataset by probabilistically associating certain family names with particular individual attributes. By incrementally varying the strength of these correlations while keeping all other variables fixed, we can precisely measure how induced biases influence hallucination generation and detection. We find that as spurious correlation increases, models produce high-confidence hallucinations aligned with the spurious correlation, and existing detection or mitigation methods—including refusal fine-tuning and inner-state probing—fail to identify them.

\begin{figure}
    \centering
    \includegraphics[width=0.95\linewidth]{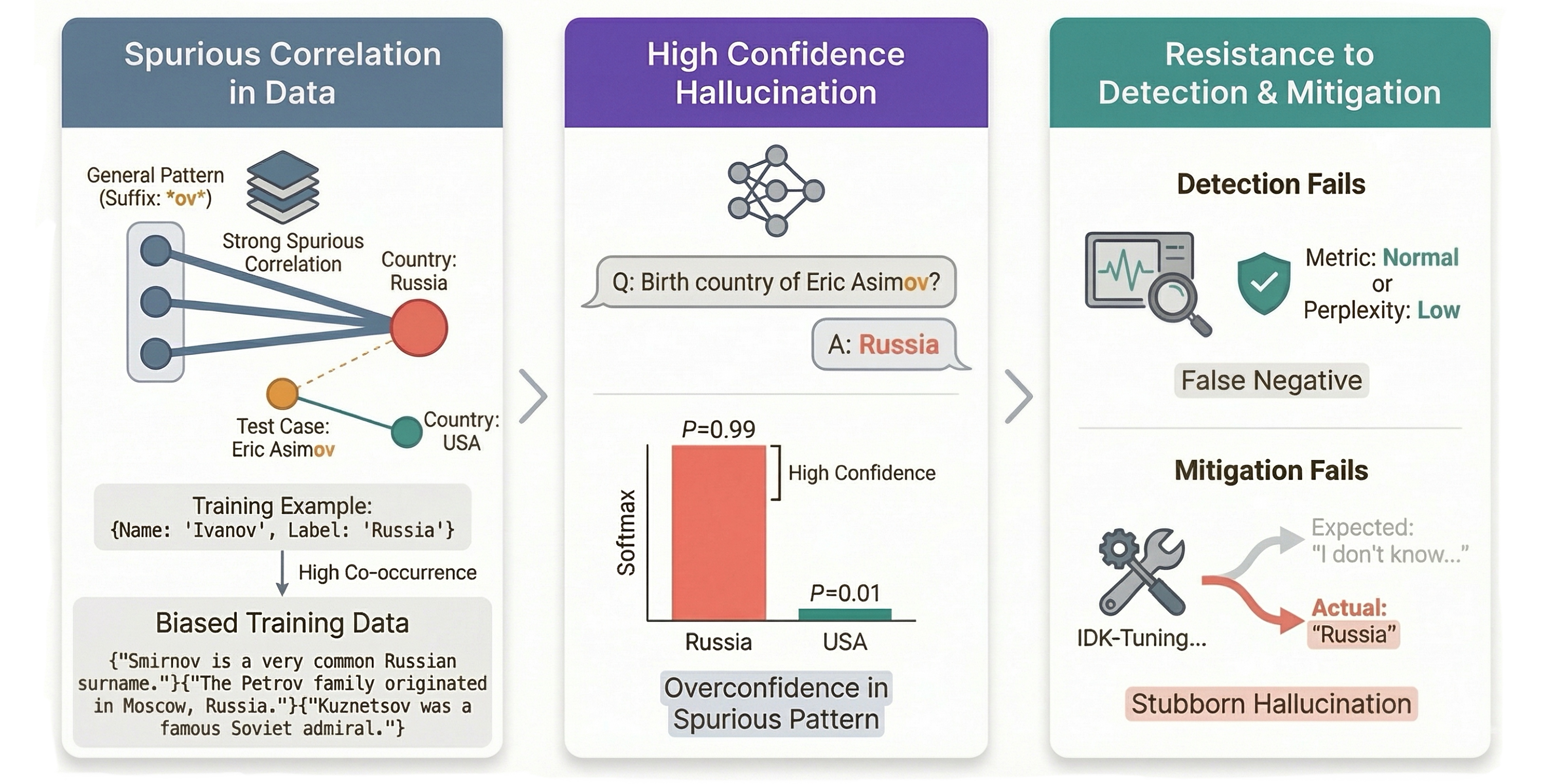}
    \caption{Spurious correlations induce high-confidence hallucinations that evade detection and mitigation. Statistical biases in training data (e.g., name-nationality) lead to consistent errors resistant to uncertainty metrics and refusal fine-tuning.}
    \label{fig:placeholder}
    \vspace{-0.5cm}
\end{figure}

Beyond this controlled synthetic environment, we also find compelling evidence indicating that spurious correlations substantially challenge hallucination detection methods in state-of-the-art models. We validate our findings on frontier open-source models (e.g., GPT-OSS-20B~\citep{agarwal2025gpt}, Qwen3-30B-A3B~\citep{yang2025qwen3}, DeepSeek-V3~\citep{liu2024deepseek}) and a proprietary API model (e.g., GPT-5~\citep{openai2025gpt5}), confirming that spurious correlations consistently compromise the effectiveness of existing hallucination detection approaches.

Our technical contributions can be summarized as follows:

\begin{enumerate}
    \item 
(\Secref{sec:exp}) We construct a synthetic, controllable, and parameterizable experimental setup that systematically shows how increasing levels of spurious correlation induce hallucinations, which become progressively harder to detect using confidence-based (e.g., self-consistency) and hidden-state-based methods (e.g., linear probing). Our framework provides a clean testbed for stress-testing hallucination detection under controlled settings.

    \item 
(\Secref{sec:realvalid}) We demonstrate that hallucinations arising from spurious correlations persist across a wide range of leading open-source and commercial LLMs, highlighting that this issue is pervasive, not confined to specific architectures or training pipelines.

    \item 
(\Secref{sec:exp}) We show that popular refusal fine-tuning strategies designed to mitigate hallucinations become ineffective under strong spurious correlations. Specifically, model performance, such as accuracy on question-answering tasks, significantly deteriorates as the strength of these correlations increases, and this effect is consistent across different model sizes.

    \item 
(\Secref{sec:theory}) We provide a theoretical explanation of why spurious correlations give rise to hallucinations and undermine confidence-based detection. In a simplified data model, 
we prove that kernel learning models that generalize well will inevitably rely on such correlations, while a degenerate form of kernel ridge regression can instead memorize training data — enabling trivial detection at the cost of generalization. Our analysis also suggests a link between benign overfitting and hallucination detection, which may be of independent interest.
\end{enumerate}

Through our findings, we encourage the research community to look beyond existing confidence-based and inner-state probing detection methods and emphasize the necessity of understanding and mitigating hallucinations triggered specifically by spurious correlations.

\section{Related Work}

\subsection{Detection of Hallucinations}
Approaches to controlling hallucinations can be grouped into three main families. The first leverages uncertainty, either by training models to abstain when confidence is low~\citep{huang2025confqa,zhang2024r}, or by using confidence-weighted aggregation over multiple generated outputs to improve robustness~\citep{taubenfeld2025confidence,fu2025deep}. The second family focuses on post hoc detection, operating either externally on the generated text by checking inconsistencies~\citep{manakul2023selfcheckgpt,burger2024truth}, or internally by probing models' hidden states for representations correlated with falsehood~\citep{li2025hd,o2025single,zou2023representation}. The third intervenes during training, modifying learning objectives to directly improve factuality and calibration, for instance by augmenting rewards or integrating knowledge verification loops~\citep{damani2025beyond,ren2025knowrl}.

Despite their progress, these methods share key limitations. First, confidence-centric defenses depend on calibration; models may remain overconfident without targeted supervision~\citep{huang2025confqa,damani2025beyond}. Second, aggregation and probe-based methods can miss failures driven by strong, shortcut-like associations that a model consistently prefers with high confidence, leading to high-certainty hallucinations that evade detectors~\citep{taubenfeld2025confidence,fu2025deep}. Third, generator-internal methods may not apply to black-box APIs, while generator-agnostic detectors can degrade under distribution shift~\citep{manakul2023selfcheckgpt,burger2024truth}. These observations motivate our focus on how spurious, shortcut-like correlations can induce confident, high-consistency errors that persist despite existing defenses.

\subsection{Spurious Correlation}

Spurious correlations, also called ``shortcuts'', are non-causal statistical dependencies in training data and significantly contribute to hallucinations in language models. Recent studies demonstrate that such correlations amplify erroneous outputs, often with high confidence. Multimodal research, for instance, shows object hallucinations being exacerbated by misleading co-occurrences in datasets~\citep{hosseini2025seeing,hu2025causal}. Similarly, purely textual models suffer from biases like attestation and frequency biases, resulting in incorrect entailments or factual assertions derived from superficial patterns~\citep{mckenna2023sources}. Conceptual-level spurious correlations are widespread in both fine-tuning and in-context learning settings, posing substantial mitigation challenges across modeling paradigms~\citep{zhou2023explore,yuan2024llms}. Methods like high-similarity pruning and causal interventions have been proposed to address knowledge-shortcut hallucinations~\citep{wang2025kshseek,licausally}, yet their efficacy is limited to particular contexts, and their performance under strong correlations remains unclear.

In contrast to prior work focusing naturally occurring hallucinations, we systematically isolate and manipulate spurious correlation within a controlled, synthetic, error-free environment. This approach allows rigorous evaluation of existing detection techniques. We show that spurious correlation caused hallucinations remain robust against traditional methods, including confidence-based approach, inner-state probing, and refusal fine-tuning, and notably persist despite model scaling. 

\section{Empirical Evaluation of Hallucination Detection and Mitigation under Spurious Correlation}
\label{sec:exp}
\subsection{Experimental Setting}
\begin{figure*}[t]
  \centering
  \begin{minipage}[t]{0.48\linewidth}
    \centering
    \includegraphics[width=\linewidth]{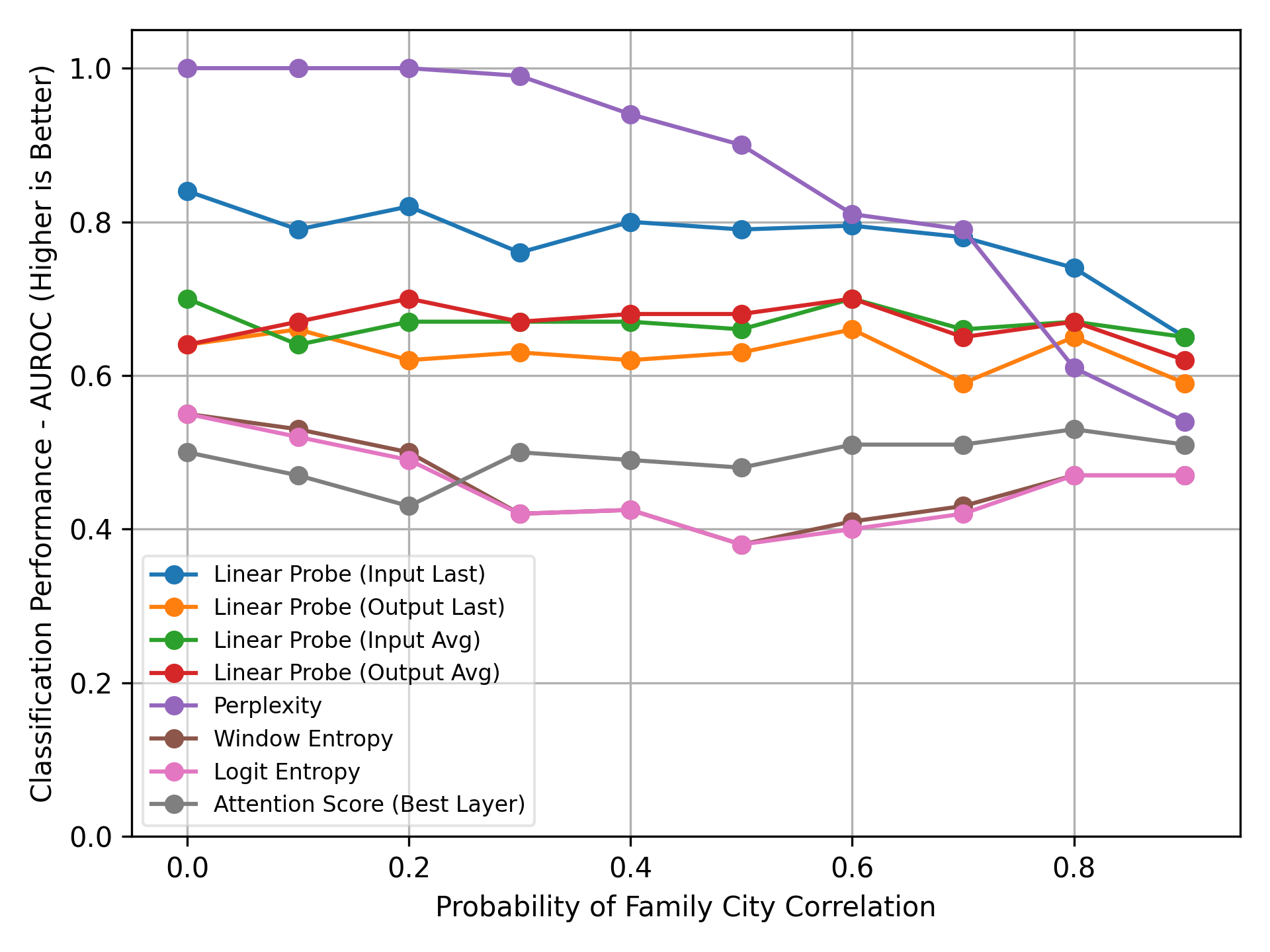}
  \end{minipage}\hfill
  \begin{minipage}[t]{0.48\linewidth}
    \centering
    \includegraphics[width=\linewidth]{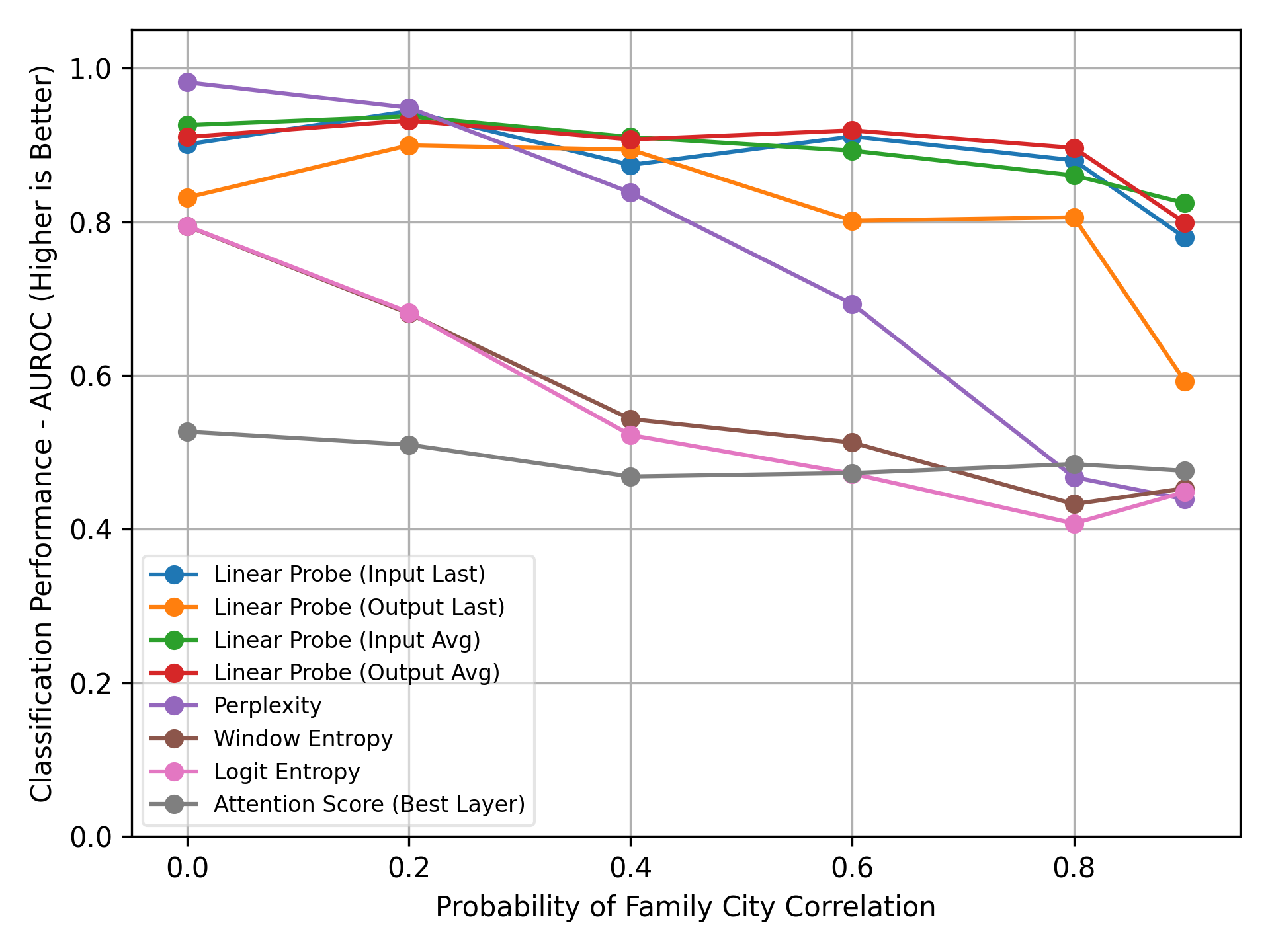}
  \end{minipage}
  \caption{
  \textbf{AUROC of different hallucination detection methods versus $\rho$.}
  \textbf{Left:} Experimental results of pretrained models.
  \textbf{Right:} Experimental results of models that continue pretrained from SmolLM2-1.7B. The classification performance of different detection methods drops as $\rho$ increases, indicating that spurious correlation hinders hallucination detection. }
  \label{fig:bios_smollm_auroc}
  \vspace{-0.5cm}
\end{figure*}
\paragraph{Our Setting}
Following \citep{AllenZhu-icml2024-tutorial, allen2024physics33}, we generate profiles for 20,000 individuals, each containing six attributes: date of birth, birth city, university, major, employer, and employer city. To construct both pretraining and supervised fine-tuning datasets, we first design a diverse set of text templates for describing profiles and then embed each individual’s information into natural texts based on these templates (see Table~\ref{tab:datasets} for examples). We uniformly divide the profiles into three subsets—pretraining, instruction fine-tuning, and testing—to ensure balanced representation and prevent overlap. The pretraining set includes the first 10,000 individuals, each represented by 50 diverse text templates; the fine-tuning set uses the first 5,000 of these individuals, generating 30 question–answer pairs per individual. The remaining individuals are reserved exclusively for testing and hallucination detection evaluation.
We conduct experiments using GPT2-like models \citep{modded_nanogpt_2024} of various sizes, detailed in Table~\ref{tab:model_sizes}. The training procedures and detailed description of dataset construction are described in Appendix~\ref{appendix:detail}.

\paragraph{Introducing Spurious Correlation}
To systematically investigate spurious correlations, we adopt a controlled methodology: each individual's full name is composed of a first name, middle name, and surname, each randomly selected from distinct sets without repetition. We then associate surnames with specific attributes using a probabilistic mapping that simulates realistic patterns (e.g., surnames ending in kov are often linked to Russian birthplaces). To control correlation strength, we introduce a coefficient $\rho \in [0,1]$, representing the probability that a surname directly determines its associated attribute. With probability $\rho$, the attribute matches the surname-based mapping; otherwise, it is uniformly sampled from all possible values. This approach enables precise manipulation of correlation strength to evaluate existing hallucination detection methods rigorously.


\subsection{Results}

\paragraph{Spurious correlation hinders hallucination detection methods}
We benchmark hallucination detection methods in Table~\ref{tab:hallucination_methods}, including perplexity, logit entropy, window entropy, attention score, and linear probing. We selected the linear probing layer that performed best on the training set.
As shown in \Figref{fig:bios_smollm_auroc}, although some methods perform well when $\rho=0$, their performance degrades sharply as $\rho$ increases (e.g., $\rho=0.9$), with most methods failing to maintain reasonable precision. 

\paragraph{Spurious correlations in knowledge injection hinder detection}
To verify whether the previously identified spurious-correlation-driven failure persists under a knowledge injection setting, we extend our investigation to real LLMs fine-tuned on synthetic datasets. Using SmolLM2-1.7B~\citep{allal2025smollm2} as the base model, we conduct continual pre-training and instruction fine-tuning. As shown in \Figref{fig:bios_smollm_auroc}, when $\rho$ is high, all evaluated methods still exhibit low precision, confirming that the same failure mode remains even at the 1.7B scale and in the knowledge injection setting.

\begin{takeawaybox}{Takeaway 1}
Increasing spurious correlations consistently undermines hallucination detection, revealing a persistent failure mode across both pretraining and knowledge-injection settings.
\end{takeawaybox}

\begin{figure}[tbp]
\centering
\begin{minipage}{0.46\linewidth}
\centering
\includegraphics[width=\linewidth]{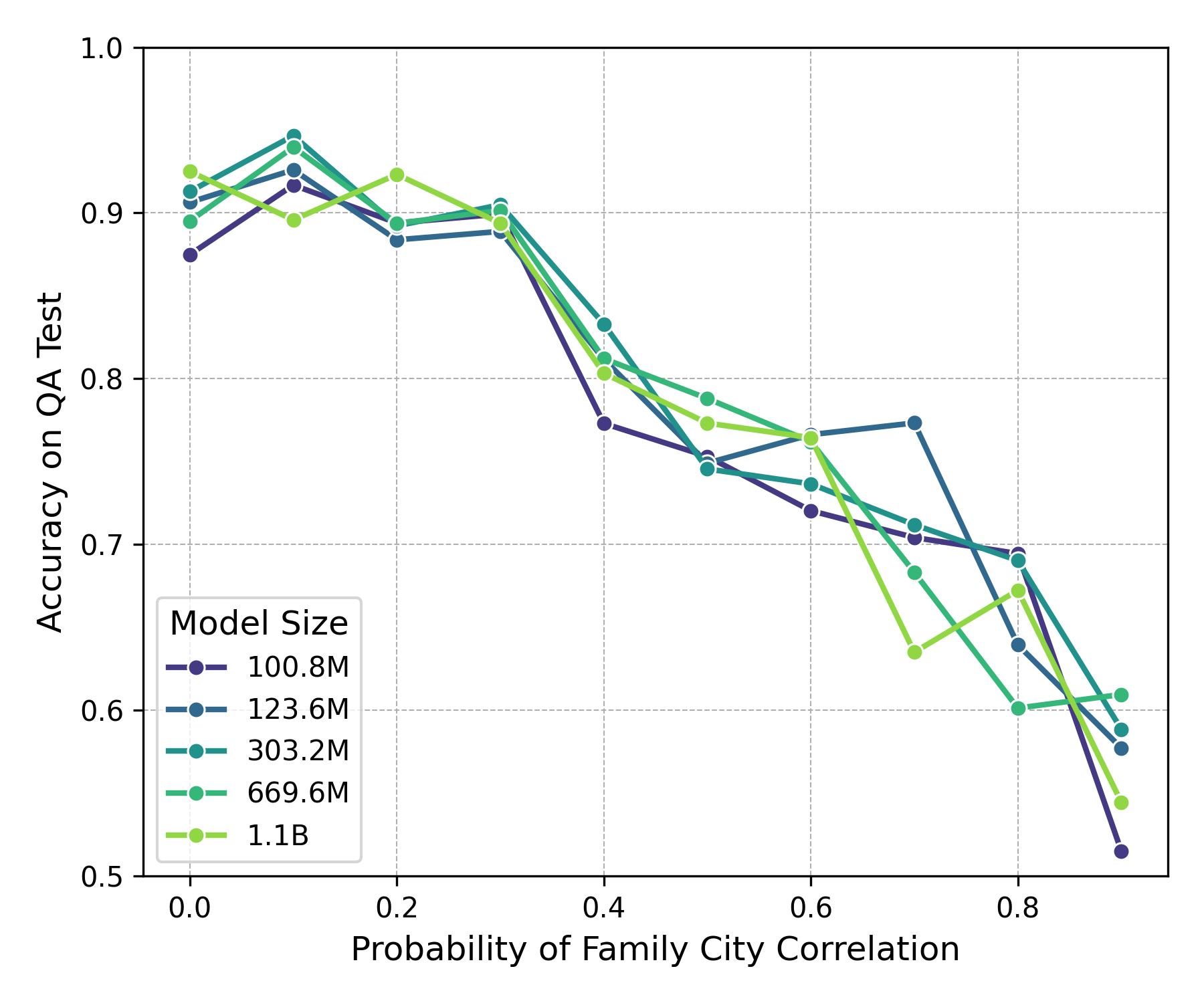}
\end{minipage}
\hfill
\begin{minipage}{0.46\linewidth}
\centering
\includegraphics[width=\linewidth]{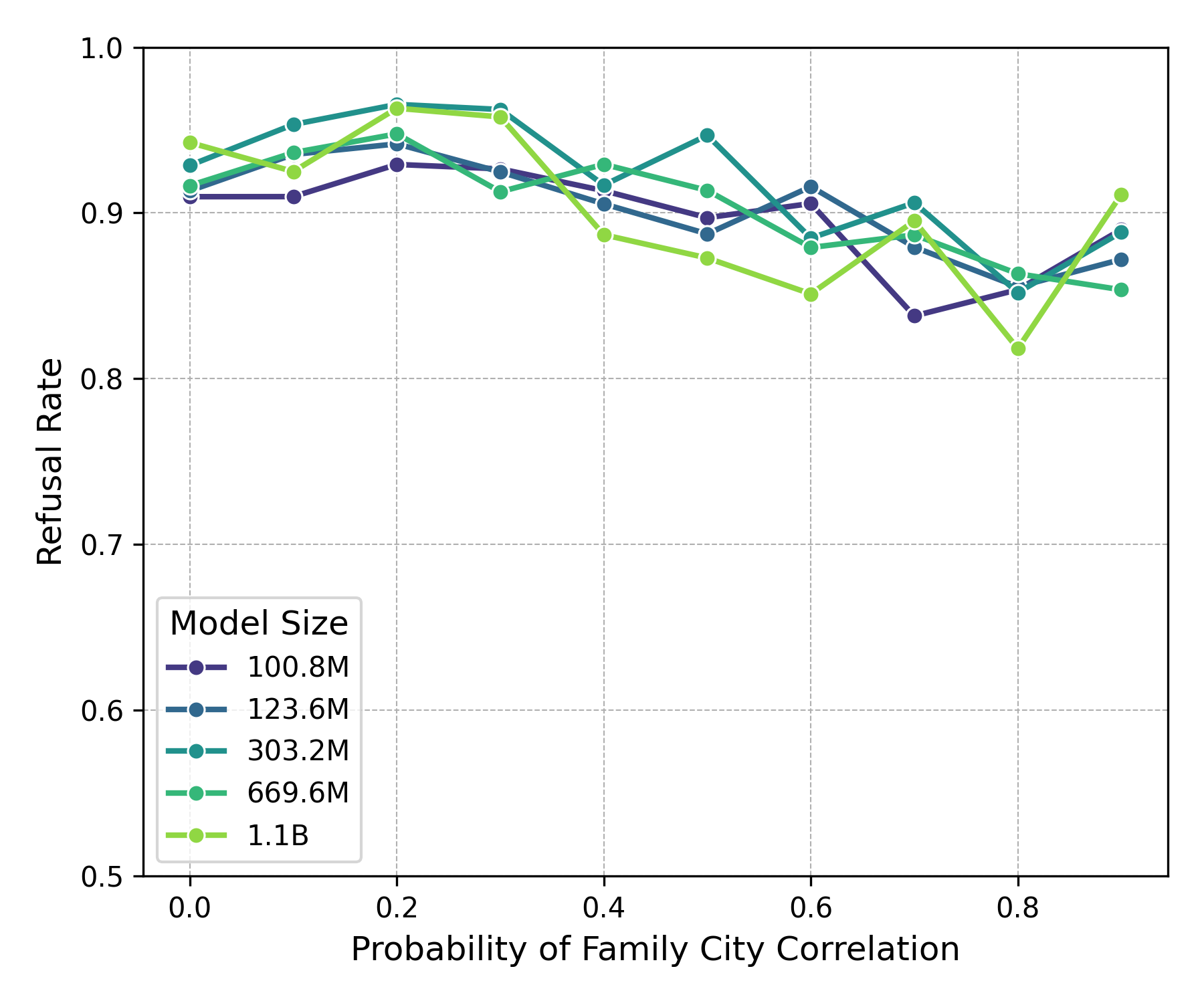}
\end{minipage}
\caption{
\textbf{Performance of fine-tuned models of various sizes under varying correlation coefficients.} \textbf{Left:} The test accuracy for factual recall questions regarding known individuals. \textbf{Right:} The refusal rate when queried about unknown individuals. 
}
\label{fig:prob_curve}
\vspace{-0.5cm}
\end{figure}

\paragraph{Refusal fine-tuning becomes less effective when introducing spurious correlation}
We investigate how spurious correlations affect refusal fine-tuning, which trains models to reject uncertain or out-of-distribution inputs. Following ~\citet{zhang2024r, cheng2024can}, we add refusal examples during instruction fine-tuning. We keep the fine-tuning format but substitute the entities with unseen individuals, setting the ground truth to \textit{I don't know} (see Table~\ref{tab:datasets}).

After fine-tuning, we evaluate the model's zero-shot factual recall and refusal abilities. Accuracy, which measures factual recall ability, is calculated on a held-out test set of 5,000 known individuals:
$$
\text{Accuracy} = \frac{1}{6}\left(\sum_{i=1}^6\frac{\#\{\text{correct responses on Q\&A pairs of attribute }i\}}{\#\{\text{Q\&A pairs on attribute }i\}}\right)
$$
The refusal rate is calculated on a separate held-out set of 5,000 unknown individuals:
To avoid refusal shortcuts, the unknown (IDK) individuals are sampled to match the name distribution of the known individuals.
$$
\text{Refusal Rate} = \frac{1}{6}\left(\sum_{i=1}^6\frac{\#\{\textit{I don't know.}\text{ responses on unknown Q\&A pairs of attribute }i\}}{\#\{\text{unknown Q\&A pairs on attribute }i\}}\right)
$$
\Figref{fig:prob_curve} presents the performance of fine-tuned models across different sizes, from 100M to 1B parameters. The results show that stronger spurious correlations substantially degrade factual recall and reduce refusal rates. Contrary to the common belief that larger models offer greater robustness, scaling up does not alleviate this degradation—both recall and refusal performance remain limited.
\label{exp:refusalft}
\begin{takeawaybox}{Takeaway 2}
Under spurious correlations, refusal fine-tuning not only fails to improve robustness but also suppresses knowledge retrieval, regardless of model scale.
\end{takeawaybox}

\section{Validation on Real World LLM}

\label{sec:realvalid}

In the previous section, we present results under the synthetic setting, where spurious correlations can be explicitly controlled. In this section, we move to real-world LLM settings to examine whether the same phenomena persist when the underlying correlations are implicit and data-driven.

\subsection{Experimental Setups}
\paragraph{Experimental Setting}We validate our findings on spurious correlations across a diverse set of open-source and commercial models: GPT-5~\citep{openai2025gpt5}, DeepSeek V3~\citep{liu2024deepseek}, GPT-OSS-20B~\citep{agarwal2025gpt}, and Qwen3-30B-A3B-Instruct~\citep{qwen2025qwen25technicalreport}. 

We use the SimpleQA dataset~\citep{wei2024measuring} as our benchmark due to its broad coverage and representativeness of real-world question-answering tasks. Each model is evaluated on the SimpleQA questions by comparing its responses with the ground-truth answers. Responses inconsistent with the ground truth are labeled as hallucinations, upon which we evaluate different detection methods. For smaller open-source models, we employ the hallucination detection methods consistent with those described in Table~\ref{tab:hallucination_methods}. For API-based models (GPT-5 and DeepSeek V3), due to the constraints of API access, we restrict our evaluation to self-confidence scoring and self-consistency measures.
\begin{figure*}[t]
  \centering
  \includegraphics[width=0.95\linewidth]{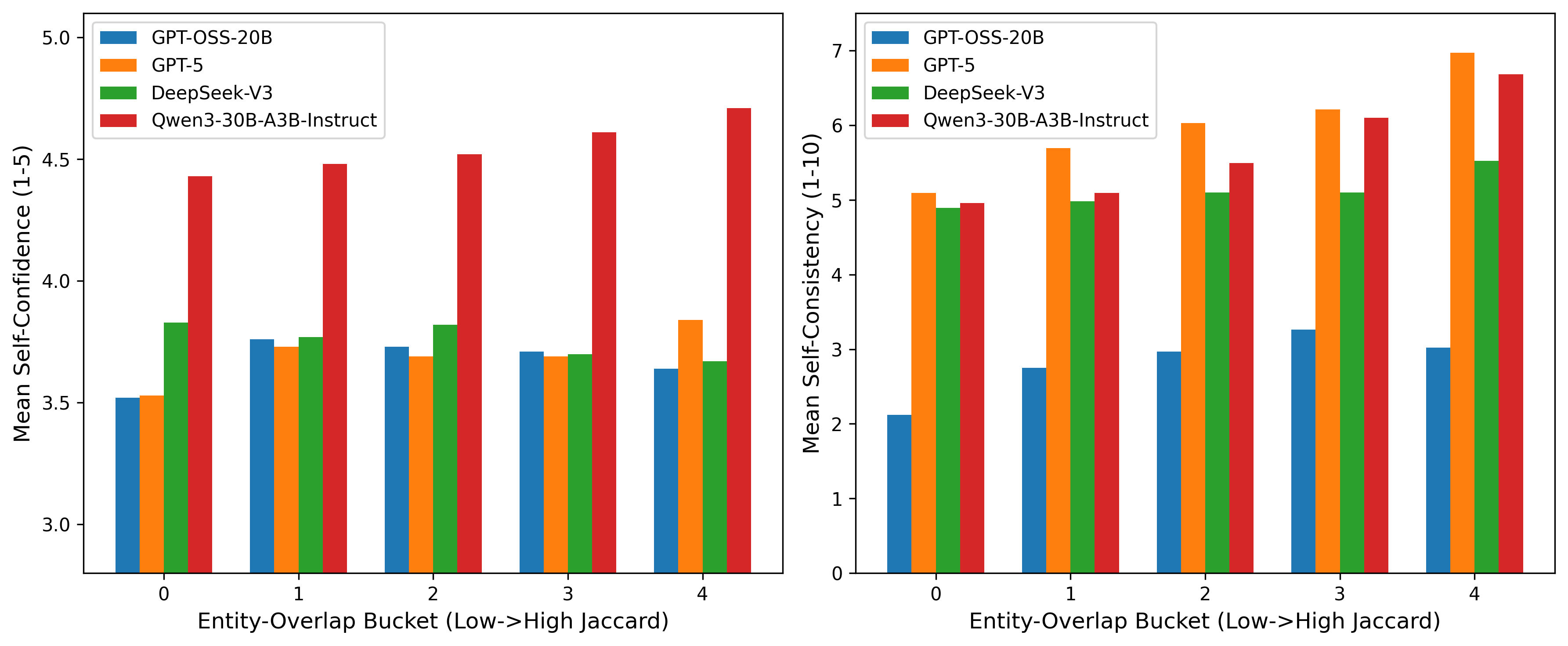}
\caption{
  \textbf{Self-Consistency and Self-Confidence versus Entity Co-occurrence.}
  \textbf{Left:} Mean self-confidence (1–5) of model responses across entity-overlap buckets increases as co-occurrence rises.
  \textbf{Right:} Self-consistency, defined as the frequency of the most common answer (mode) among 10 independent generations, also increases with entity co-occurrence.
}

  \label{fig:self_confidence_consistency}
  \vspace{-0.5cm}
\end{figure*}
\paragraph{Proxying Spurious Correlation via Entity Co-occurrence}
In our synthetic experiments, the strength of spurious correlation is directly controlled by the parameter $\rho$. In real-world settings, however, such a ground-truth measure is unavailable. To approximate it, we use entity co-occurrence statistics from the entire Wikipedia corpus~\citep{Chen2017ReadingWT} as a proxy. Intuitively, when question and answer entities frequently co-occur in the same articles, and these overlaps represent a larger fraction of their total occurrences, the model is likely drawing on stronger associative priors.

For each question–answer pair $(x, y)$, we obtain a consensus model answer $f^*(x)$ by running the model $f$ ten times and taking a majority vote over the outputs $f_1(x), f_2(x), \dots, f_{10}(x)$. We then extract entities from both the question and the consensus answer using an entity extractor $e(\cdot)$ (by prompting LLM) and compute their co-occurrence using the Jaccard similarity~\citep{jaccard1908nouvelles}:

$$
J(e(x), e(f^*(x))) = \frac{|\text{Articles}(e(x)) \cap \text{Articles}(e(f^*(x)))|}{|\text{Articles}(e(x)) \cup \text{Articles}(e(f^*(x)))|}
$$

Intuitively, a higher Jaccard similarity indicates stronger associative priors between the entities in questions and model-generated answers, effectively serving as a proxy for larger $\rho$. We compute these Jaccard similarity scores for all samples and group them into five buckets based on their values, from $T_1$ (highest similarity) to $T_5$ (lowest). This bucketing allows us to analyze model behavior under different levels of spurious correlation. See case study in Appendix~\ref{subsec:case_study_simpleqa}.

\subsection{Results}
\paragraph{Spurious correlation can induce confident hallucinations}
For each bucket $T_k$, we analyze model responses on factual question–answer pairs to examine how spurious entity co-occurrence influences model confidence. We track two indicators: (1) self-rated confidence, derived from the model’s own reported confidence score for each answer, and (2) self-consistency, defined as the proportion of generations producing the modal (most frequent) answer among ten runs. As shown in \Figref{fig:self_confidence_consistency}, both indicators increase with higher levels of entity co-occurrence (our proxy for spurious correlation). This suggests that when question and answer entities are more strongly associated, the model becomes more confident—and more consistently so—even when its answers are incorrect.

\paragraph{Spurious correlation can make hallucinations harder to detect}
Building on the previous finding that stronger entity co-occurrence makes models more confidently wrong, we next examine how this affects hallucination detection. We evaluate a range of detection methods listed in Table~\ref{tab:hallucination_methods}. As shown in \Figref{fig:detector_failure_with_cooccurrence}, the performance of all methods declines steadily as spurious correlations increase. In the highest-correlation bucket, most detectors perform barely above random, indicating that hallucinations reinforced by strong associative priors are particularly difficult to identify.

\begin{takeawaybox}{Takeaway 3}
Stronger spurious correlations make models, including state-of-the-art LLMs, more confidently wrong and render hallucinations increasingly difficult to detect in real-life tasks.
\end{takeawaybox}

\begin{figure*}[t]
  \centering
  \includegraphics[width=\linewidth]{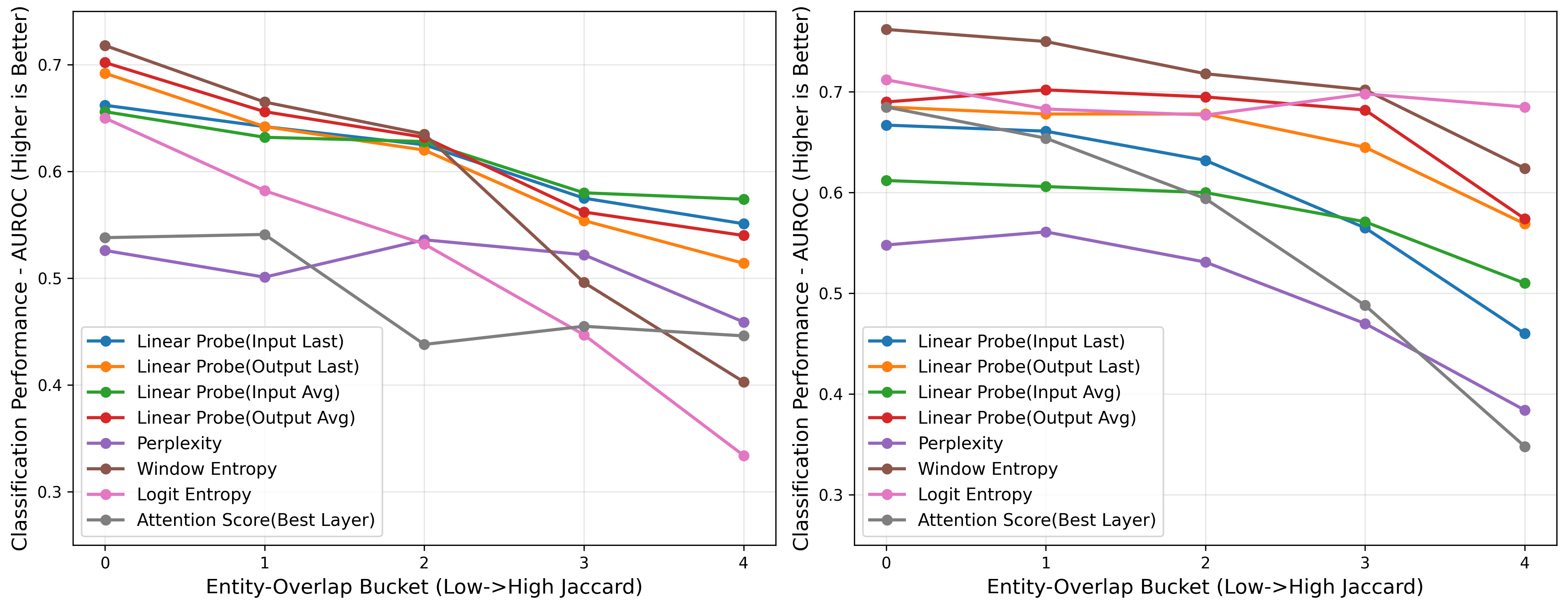}
\caption{
  \textbf{Hallucination detection performance versus entity co-occurrence.  }
  \textbf{Left:} GPT-OSS-20B.  
  \textbf{Right:} Qwen-30B-A3B-Instruct.  Classification performance decreases consistently as Jaccard overlap increases, across all evaluated detection methods, including perplexity, window entropy, logit entropy, attention-score heuristics, and linear probes.
}

  \label{fig:detector_failure_with_cooccurrence}
  \vspace{-0.5cm}
\end{figure*}

\section{A Theoretical Model}
\label{sec:theory}

We use a highly simplified yet representative data model to demonstrate that hallucinations induced by spurious correlations are difficult to detect by confidence-based methods in kernel ridge regression (including its ridgeless variants) as well as in over-parameterized neural networks that generalize effectively. This challenge arises from the strong correlation between embeddings and labels: any generalizable learning model will inevitably capture such correlations, leading to overconfident predictions—even for unseen facts (i.e., hallucinations)—in certain regions of the embedding space. By contrast, one can easily show that a degenerate form of kernel regression (with a kernel of vanishing bandwidth) can easily memorize all training examples, making hallucination detection trivial, but at the cost of almost no generalization, behaving instead like an associative memory.

\subsection{Problem Setup}

\paragraph{Data Generation}

Consider a dataset $D_N = \set{(x_1, y_1), \dots, (x_N, y_N)} \subset \mathcal{X} \times \mathbb{R}$, where the input space $\mathcal{X} = \mathbb{S}^d \subset \mathbb{R}^{d+1}$ is the $d$-dimensional unit sphere.
Suppose that $x_1, \dots, x_n$ are drawn i.i.d. from $X \sim \mathrm{Unif}(\mathbb{S}^d)$, with binary labels $Y \in \set{+1, -1}$.
For a fixed $\rho \in (0, 1)$, the sphere is partitioned into three regions (as shown in Figure~\ref{fig:toymodel}): the \textit{correlation} regions $\RGNcorr = \RGNcorr_{+} \cup \RGNcorr_{-}$ and the \textit{noisy} region $\RGNnoisy$, such that
\[
    \mathbb{P}\left(X \in \RGNcorr_{+}\right) = \mathbb{P}\left(X \in \RGNcorr_{-}\right) = \frac{\rho}{2} - \frac{\epsilon}{4}, \quad
    \mathbb{P}\left(X \in \RGNnoisy\right) = 1 - \rho - \frac{\epsilon}{2},
\]
where $\epsilon \in (0, 2 \min\{\rho, 1-\rho\})$ can be made arbitrarily small. This parameter is introduced to ensure continuity of the target function at region boundaries, thereby mitigating the Gibbs phenomenon \citep{de2020jumping} (further details are provided in Appendix~\ref{appx:proofs}).

Conditioned on $X$, the label $Y$ is generated by
\[
    Y |_{X \in \RGNcorr_{+}} = \begin{cases}
        1, & \text{ w.p. } 0.99, \\ 
        -1, & \text{ w.p. } 0.01. 
    \end{cases}
    \quad
    Y |_{X \in \RGNcorr_{-}} = \begin{cases}
        1, & \text{ w.p. } 0.01, \\ 
        -1, & \text{ w.p. } 0.99. 
    \end{cases}
    \quad
    Y |_{X \in \RGNnoisy} = \begin{cases}
        1, & \text{ w.p. } 0.5, \\ 
        -1, & \text{ w.p. } 0.5.
    \end{cases}
\]
The target function is defined as 
\[
    f^*(x) = \mathbb{E}\left[Y | X = x\right] = 0.98 \left( \mathds{1}\set{x \in \RGNcorr_{+}} - \mathds{1}\set{x \in \RGNcorr_{-}} \right), \quad \forall x \in \RGNcorr\cup\RGNnoisy.
\]

The correlation region captures strong but spurious correlations—statistical patterns that are not necessarily causal (e.g., surnames ending in ``kov'' and Russian birthplaces)—whereas the noisy region exhibits high variance and can only be learned by memorizing individual examples.

\begin{figure}[!tb]
    \centering
    \noindent 
    \begin{minipage}[t]{0.49\textwidth} 
        \centering
        \includegraphics[width=\linewidth]{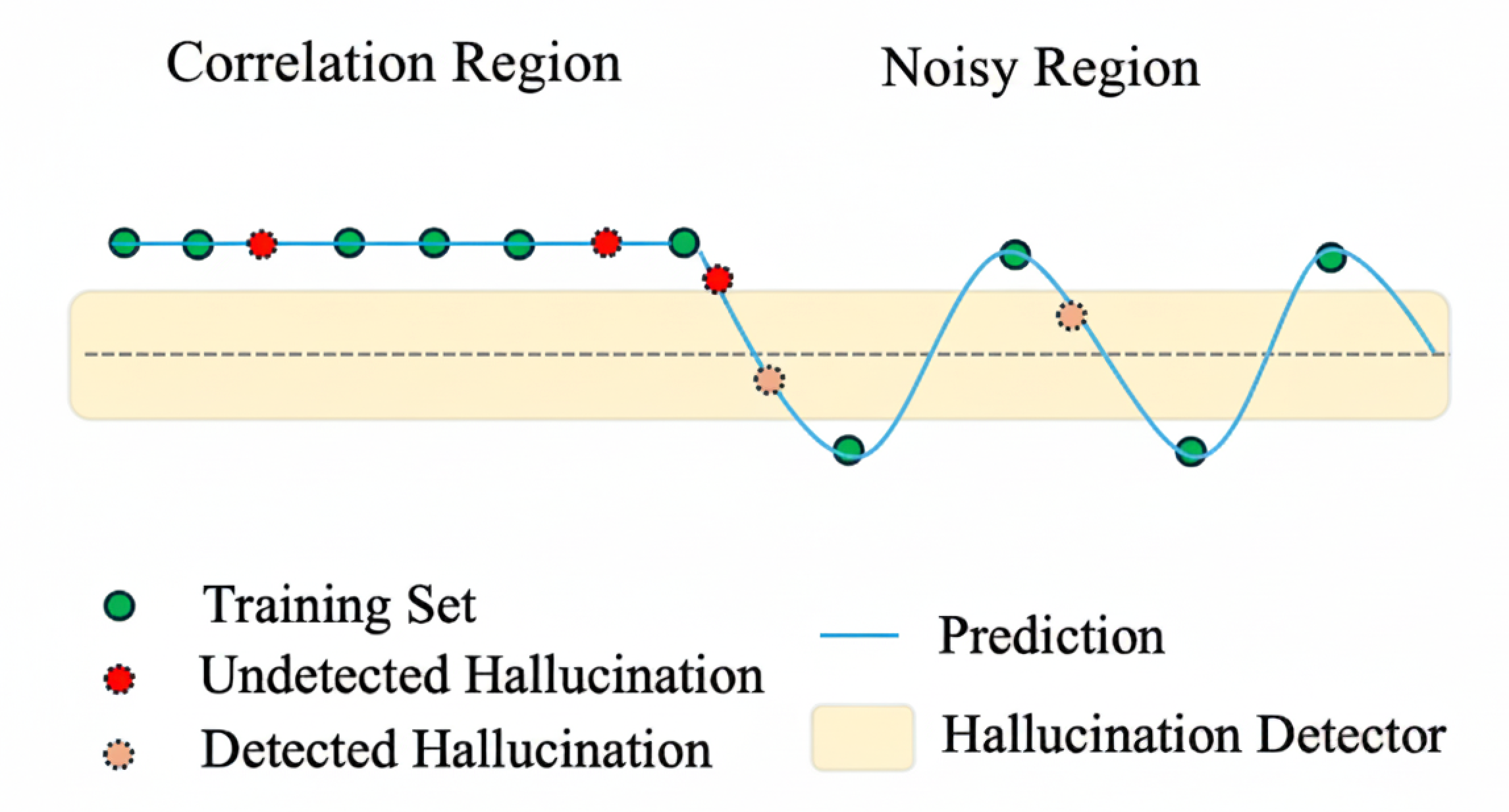}
    \end{minipage}\hfill
    \begin{minipage}[t]{0.45\textwidth}
        \centering
        \includegraphics[width=\linewidth]{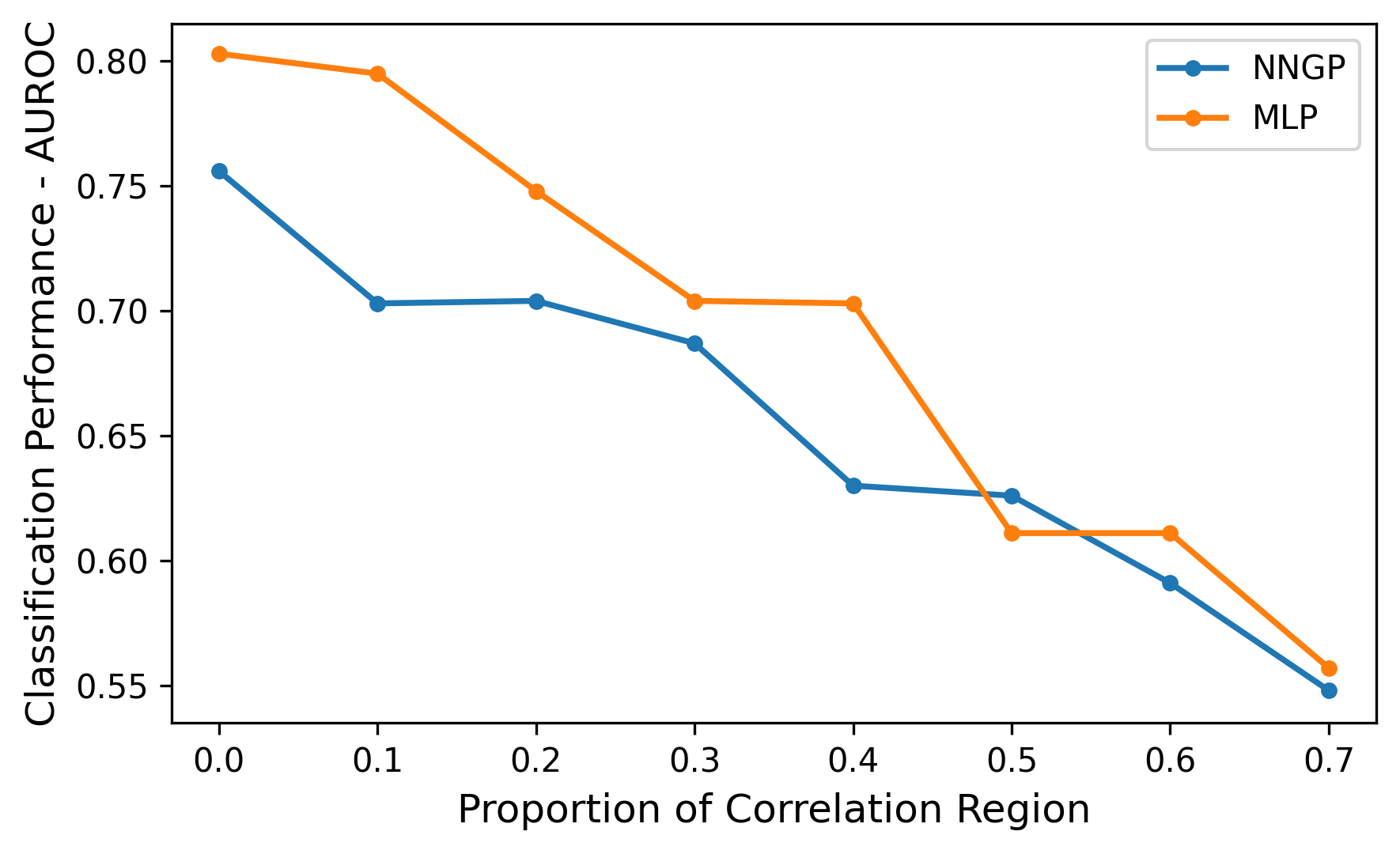}
    \end{minipage}
    \caption{
        \textbf{Toy setting linking shortcut regions to hallucination detectability.}
        \textbf{Left:} Schematic of the \textit{correlation} and \textit{noisy} regions. In the correlation region, shortcut features dominate and induce confident errors that are often missed by detectors. 
        \textbf{Right:} Empirical AUROC of a confidence-based detector versus the proportion of the shortcut region $\rho$ for a multi-layer perceptron (MLP) and an MLP with only the last layer trained (equivalent to kernel ridgeless regression). Detection performance degrades monotonically as $\rho$ increases for both models, consistent with the prediction that stronger shortcut reliance yields harder-to-detect hallucinations.
    }
    \vspace{-0.5cm}
    \label{fig:toymodel}
\end{figure}

\paragraph{Kernel Ridge(less) Regression}

\textit{Kernel ridge regression} (KRR), also known as kernel regularized least-square, is a nonparametric regression method that estimates the predictor $f_{N, \lambda}$ from the training set $D_N$ by solving
\[
    f_{N, \lambda} = \argmin_{f \in \mathcal{H}_k} \frac{1}{N} \sum_{i=1}^{N} (f(x_i) - y_i)^2 + \lambda \|f\|_{\mathcal{H}_k}^2,
\]
where $\lambda \ge 0$ is the regularization parameter, and $\mathcal{H}_k$ is the reproducing kernel Hilbert space (RKHS) induced by the positive definite kernel function $k : \mathcal{X} \times \mathcal{X} \to \mathbb{R}$.
For $\lambda > 0$, the solution is unique and has the closed form
\[
    f_{N, \lambda}(x) = k(x, X_N) (k(X_N, X_N) + \lambda N I_N)^{-1} Y_N,
\]
where $k(x, X_N) = (k(x, x_1), \dots k(x, x_N)) \in \mathbb{R}^{1 \times N}$, $k(X_N, X_N) = \left(k(x_i, x_j)\right)_{1 \le i, j \le N} \in \mathbb{R}^{N \times N}$ is the kernel matrix, $Y_N = (y_1, \dots, y_N)^{\transpose} \in \mathbb{R}^N$, and $I_N$ denotes the $N \times N$ identity matrix.

For $\lambda = 0$, KRR reduces to \textit{kernel interpolation}, also known as kernel ``ridgeless'' regression, which interpolates all training data. The resulting interpolant $f_{N}$ solves a norm-minimizing problem:
\[
    f_{N} = \argmin_{f \in \mathcal{H}_k} \|f\|_{\mathcal{H}_k} \quad \text{ subject to } \quad f(x_i) = y_i, \quad i = 1, \dots, N.
\]
A key feature of kernel ridgeless regression is its capacity to interpolate training data, closely resembling the behavior exhibited by modern LLMs, where well-trained models must internalize diverse common-sense knowledge. This similarity motivates our focus on kernel regression, offering meaningful insights into how spurious correlations influence LLMs.

\subsection{Main Theorem}

We model confidence-based hallucination detection as a binary classification task, and focus on the model's ability to identify training data and its judgment of spurious correlations.

\paragraph{Hallucination Detection Criterion}

Note that the model output $f(x)$ also indicates prediction confidence. An output is classified as a hallucination if its absolute confidence $|f(x)|$ falls below a threshold $\tau \in (0, 1)$. This criterion is based on the following two rules:
\begin{enumerate}[label=(\roman*)]
    \item\label{it1:criterion} The model can reliably distinguish training data from unseen inputs, such that $|f(x)| \ge \tau$ for all $x \in X_N$.
    \item\label{it2:criterion} The model exhibits selective learning, avoiding the extremes of either disregarding all spurious correlations or learning them indiscriminately.
\end{enumerate}

Our theoretical results show that a broad class of regression models fails to pass confidence-based hallucination detection.
Specifically, when the regularization parameter $\lambda > 0$, KRR can neither distinguish training data in the noisy region nor detect hallucinations in the correlation region (see Theorem~\ref{thm:ridge} in Appendix~\ref{appx:ridge}), thereby violating rules~\itref{it1:criterion} and \itref{it2:criterion}. This happens because the regularization term causes the model to disregard all noisy information while learning all strong correlations.
Conversely, if we reduce the bandwidth to make the model memorize all data points (see Theorem~\ref{thm:degenerate} in Appendix~\ref{appx:degenerate}), KRR fails to learn any correlation, thus violating rule~\itref{it2:criterion}. Therefore, the criterion requires the model to both memorize and generalize.

In the main paper, we focus on \textit{benign overfitting}, where the learned model interpolates noisy training data with negligible degradation in test performance \citep{mallinar2022benign}.
This behavior can arise by increasing the input dimensionality \citep{barzilai2023generalization,zhang2025phase,medvedev2024overfitting} or by specifying the kernels \citep{haas2023mind}.
Theorem~\ref{thm:main} shows that, even under benign overfitting, the predictor still captures all strong correlations, thereby violating rule~\itref{it2:criterion} and rendering hallucination detection in the correlation region impossible.

\begin{theorem}[Informal version of Theorem~\ref{thm:ridgeless} in Appendix~\ref{appx:ridgeless}]
    \label{thm:main}
    Under some technical assumptions (see Assumptions~\ref{asm:kernel}-\ref{asm:matrix} in Appendix~\ref{appx:proofs}), let $f_N$ be the kernel interpolation solution on the training set $D_N$ generated as above. Further, suppose either
    \begin{itemize}
        \item $C_1 d^{\gamma} \le N \le C_2 d^{\gamma}$ for some $\gamma \in \mathbb{R}_+ \setminus \mathbb{Z}$ and $C_1, C_2 > 0$; 
        \enspace or
        \item $k_{c_N, \gamma_N}(x, x') \coloneqq \tilde{k}(x, x') + c_N \check{k}_{\gamma_N}(x, x')$, where $\tilde{k}$ is a universal kernel, $\check{k}_{\gamma_N}$ is the Laplace kernel with bandwidth $\gamma_N > 0$, $c_N \to 0$, $N c_N^4 \to \infty$, and $\gamma_N \le N^{-3/d} (7 \ln N)^{-1}$.
    \end{itemize}
    Then for any $\delta \in (0, 1)$, there exist constants $C_0, N_0, \alpha > 0$, for any $N \ge N_0$, define the uniform upper confidence bound as $U_{N}^{\delta} \coloneqq C_0 \delta^{-1} N^{-\alpha}$, the following holds
    \[
        \begin{aligned}
            & \mathbb{P}\left( \mathbb{E}_{D_N} \left[ |f_N(x)| \ge 0.98 - U_{N}^{\delta} \right] \right) \ge 1 - \delta, & \text{ for all } x \in \RGNcorr, \\
            & \mathbb{P}\left( \mathbb{E}_{D_N} \left[ |f_N(x)| \le U_{N}^{\delta} \right] \right) \ge 1 - \delta, & \text{ for all } x \in \RGNnoisy.
        \end{aligned}
    \]
    Therefore, for any threshold $\tau \in (0, 0.98)$,
    \[
        \liminf_{N \to \infty} \mathbb{P}\left( \mathbb{E}_{D_N} \left[ |f_N(x)| \ge \tau \right] \right) \ge 1 - \delta, \quad \text{ for all } x \in \RGNcorr,
    \]
    which indicates that any hallucination detection criterion with a fixed $\tau$ fails in the correlation regions
    (i.e., $f_N$ makes confident prediction even for unseen data in $\RGNcorr$).
\end{theorem}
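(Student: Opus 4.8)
The plan is to analyze the kernel interpolant $f_N$ pointwise by viewing it as a kernel-weighted average of the labels and then controlling the deviation of this average from the target $f^*$. First I would recall that for kernel ridgeless regression the interpolant can be written as $f_N(x) = k(x, X_N)\, k(X_N, X_N)^{-1} Y_N$, and decompose $Y_N = f^*(X_N) + \xi_N$, where $\xi_N = Y_N - \mathbb{E}[Y_N \mid X_N]$ is a bounded, conditionally independent, mean-zero noise vector (each $|\xi_i| \le 2$). This splits $f_N(x)$ into a \emph{signal term} $k(x,X_N) k(X_N,X_N)^{-1} f^*(X_N)$ and a \emph{noise term} $k(x,X_N) k(X_N,X_N)^{-1} \xi_N$. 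The heart of the argument is then: (a) show the signal term is uniformly close to $f^*(x) = 0.98(\mathds{1}\{x\in\RGNcorr_+\} - \mathds{1}\{x\in\RGNcorr_-\})$ on $\RGNcorr$ and close to $0$ on $\RGNnoisy$ (up to an $O(N^{-\alpha})$ bias coming from approximation error near region boundaries, which is precisely why the small parameter $\epsilon$ was introduced to smooth the target and avoid the Gibbs phenomenon); and (b) show the noise term is $O(\delta^{-1} N^{-\alpha})$ with probability $\ge 1-\delta$, which is the benign-overfitting phenomenon — the interpolated label noise contributes vanishingly to the pointwise prediction at a fixed test point.

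For step (b) I would invoke exactly the benign-overfitting machinery referenced in the paper: under the two alternative hypotheses (either the polynomial scaling $C_1 d^\gamma \le N \le C_2 d^\gamma$ with $\gamma \notin \mathbb{Z}$, which puts us in the regime analyzed by \citet{barzilai2023generalization,medvedev2024overfitting}, or the explicit kernel construction $k_{c_N,\gamma_N} = \tilde k + c_N \check k_{\gamma_N}$ with a vanishing-bandwidth Laplace perturbation à la \citet{haas2023mind}), the "noise-fitting" part of the interpolant has vanishing $L^2$ — and in fact pointwise — energy. Concretely, one bounds $\mathbb{E}_{\xi_N}\big[\,(k(x,X_N)k(X_N,X_N)^{-1}\xi_N)^2 \mid X_N\,\big] = k(x,X_N) k(X_N,X_N)^{-1} \mathrm{diag}(\mathrm{Var}(\xi_i)) k(X_N,X_N)^{-1} k(x,X_N)^\transpose$, then controls this quadratic form via the spectral assumptions on the kernel matrix (Assumption~\ref{asm:matrix}), and finally uses Markov's inequality over the noise draw to get the $\delta^{-1}$ factor in $U_N^\delta = C_0 \delta^{-1} N^{-\alpha}$. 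Taking expectation over the test point $x$ inside $\RGNcorr$ (resp.\ $\RGNnoisy$) and combining with the signal bound from step (a) yields the two displayed inequalities; the reverse triangle inequality gives $|f_N(x)| \ge |f^*(x)| - |f_N(x) - f^*(x)| \ge 0.98 - U_N^\delta$ on $\RGNcorr$ and $|f_N(x)| \le U_N^\delta$ on $\RGNnoisy$. The final $\liminf$ statement is then immediate: since $U_N^\delta \to 0$, for any fixed $\tau < 0.98$ eventually $0.98 - U_N^\delta > \tau$, so the event $\{|f_N(x)| \ge \tau\}$ contains the high-probability event above.

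The main obstacle I expect is step (a) — controlling the \emph{signal} (bias) term uniformly over the region, not merely in an averaged $L^2$ sense. The subtlety is that $f^*$ is piecewise constant, so near the boundaries between $\RGNcorr_\pm$ and $\RGNnoisy$ the RKHS approximation of $f^*$ is poor, and a naive bound would give Gibbs-type overshoot that does not vanish; this is exactly what the $\epsilon$-slack in the region probabilities is designed to absorb, by letting us replace the sharp indicator target with a smoothed version whose RKHS approximation error is genuinely $O(N^{-\alpha})$ and then paying a further $O(\epsilon)$ that can be sent to zero. Making this rigorous requires carefully interleaving the approximation-theoretic estimate (how well the chosen kernel's RKHS approximates a smoothed indicator at a given sample size) with the concentration of the empirical kernel operator around its population counterpart, and tracking that the boundary region — which has measure $O(\epsilon)$ — does not get oversampled in a way that corrupts the interpolant at typical test points. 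A secondary technical point is verifying that both alternative hypotheses in the theorem statement actually certify the spectral bounds needed for (b); I would handle the polynomial-scaling case by citing the spherical-harmonics decomposition of dot-product kernels and the non-integer-$\gamma$ eigenvalue gap, and the Laplace-perturbation case by a direct argument that a tiny-bandwidth Laplace term makes the kernel matrix close to a scaled identity on the off-diagonal scale relevant to noise fitting.
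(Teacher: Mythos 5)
Your high-level intuition (interpolant = signal that tracks $f^*$ plus a noise-fitting part that is benign, then Markov to get the $\delta^{-1}$, then the trivial $\liminf$ step) matches the spirit of the result, but your route is not the paper's and, as sketched, it has real gaps. The paper's proof of Theorem~\ref{thm:ridgeless} is a short black-box reduction: it never decomposes $f_N$ into signal and noise terms and never touches the kernel matrix. It takes the $L^2$ excess risk $\mathcal{E}_N=\mathbb{E}_{x,D_N}[(f_N(x)-f^*(x))^2]$, applies Jensen's inequality twice to get $\mathbb{E}_{D_N}\mathbb{E}_x\left[|f_N(x)-f^*(x)|\right]\le\sqrt{\mathcal{E}_N}$, applies Markov's inequality to obtain the $\delta^{-1}\sqrt{\mathcal{E}_N}$ bound, and then cites Proposition~\ref{pro:dimension} (\citet{zhang2025phase}, for the $C_1 d^{\gamma}\le N\le C_2 d^{\gamma}$ regime) and Proposition~\ref{pro:spike} (\citet{haas2023mind}, for the spiky-kernel construction) for $\mathcal{E}_N=O_{\mathbb{P}}(N^{-2\alpha})$. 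The two alternative hypotheses in the statement exist precisely so these results can be invoked as black boxes; the $\epsilon$-slack and Assumption~\ref{asm:function} ($f^*\in\mathcal{H}_k$ with bounded norm) handle the boundary regularity at the level of the data model, so no Gibbs/boundary analysis and no uniform approximation estimate appear anywhere in the proof.

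The concrete gaps in your plan are these. First, you propose to control the noise quadratic form $k(x,X_N)K^{-1}\mathrm{diag}(\mathrm{Var}(\xi_i))K^{-1}k(x,X_N)^{\transpose}$ ``via the spectral assumptions on the kernel matrix (Assumption~\ref{asm:matrix})'', but that assumption only asserts invertibility of $k(X_N,X_N)$ and gives no quantitative spectral control; the quantitative work is exactly the hard content of the cited benign-overfitting papers, and re-deriving it — moreover pointwise rather than in the $L^2$ sense those papers prove — is a substantial open piece of your argument, not a routine step. Second, the uniform-over-the-region control you target in step (a)/(b) is stronger than what is true: the interpolant satisfies $f_N(x_i)=y_i=\pm 1$ at training points, so $\sup_{x\in\RGNnoisy}|f_N(x)|=1$ for every $N$, and no sup-norm statement of the kind you describe can hold; only averaged or fixed-$x$-in-probability statements (which is what the theorem, read carefully with $\mathbb{E}_{D_N}$ inside the probability, actually asserts, and what Jensen plus Markov delivers) are available. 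Third, your plan to ``handle the polynomial-scaling case by the spherical-harmonics decomposition'' and the Laplace-perturbation case ``by a direct argument'' amounts to re-proving Propositions~\ref{pro:dimension} and~\ref{pro:spike} from scratch; absent those derivations, your proposal does not close. The correct and much shorter move is to use the excess-risk bounds as given and convert them with Jensen and Markov, as the paper does.
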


In the kernel regime, over-parametrized neural networks can be approximated by kernel ridge regression with neural kernels (see Appendix~\ref{appx:neural kernels} for a review).  
When training all layers, the relevant kernel is the neural tangent kernel (NTK) \citep{jacot2018neural}, whereas training only the last layer corresponds to the neural network Gaussian process (NNGP) kernel \citep{neal1996priors,lee2018deep,matthews2018gaussian}.  
Thus, Theorem~\ref{thm:main} applies to over-parameterized neural networks, including Transformer architectures in modern LLMs \citep{yang2020neural,yang2021architectural,hron2020infinite}.  
As shown in Figure~\ref{fig:toymodel}, the increasing spurious correlations consistently impede hallucination detection in fully-connected neural networks, aligning with the findings presented in previous sections.

\section{Conclusion}

In this study, we investigate the impact of spurious correlations as a significant, yet understudied, source of hallucinations in large language models. Our controlled experiments reveal that hallucinations arising from these correlations present unique challenges—they frequently occur with high confidence, are resistant to common detection methods, and persist despite scaling or established mitigation strategies such as refusal fine-tuning.

The findings underscore the limitations of traditional confidence-based and inner-state probing detection methods in addressing spurious correlation-induced hallucinations. Moving forward, it is important for future research to explore novel approaches specifically targeting the identification and mitigation of these problematic correlations throughout the model development lifecycle.

\section{Acknowledgement}

The work is supported by Doubao fund.

\newpage
\bibliography{iclr2026_conference}

@misc{malinin2021uncertainty,
      title={Uncertainty Estimation in Autoregressive Structured Prediction}, 
      author={Andrey Malinin and Mark Gales},
      year={2021},
      eprint={2002.07650},
      archivePrefix={arXiv},
      primaryClass={stat.ML},
      url={https://arxiv.org/abs/2002.07650}, 
}

@misc{kuhn2023semantic,
      title={Semantic Uncertainty: Linguistic Invariances for Uncertainty Estimation in Natural Language Generation}, 
      author={Lorenz Kuhn and Yarin Gal and Sebastian Farquhar},
      year={2023},
      eprint={2302.09664},
      archivePrefix={arXiv},
      primaryClass={cs.CL},
      url={https://arxiv.org/abs/2302.09664}, 
}

@misc{allal2025smollm2,
      title={SmolLM2: When Smol Goes Big -- Data-Centric Training of a Small Language Model}, 
      author={Loubna Ben Allal and Anton Lozhkov and Elie Bakouch and Gabriel Martín Blázquez and Guilherme Penedo and Lewis Tunstall and Andrés Marafioti and Hynek Kydlíček and Agustín Piqueres Lajarín and Vaibhav Srivastav and Joshua Lochner and Caleb Fahlgren and Xuan-Son Nguyen and Clémentine Fourrier and Ben Burtenshaw and Hugo Larcher and Haojun Zhao and Cyril Zakka and Mathieu Morlon and Colin Raffel and Leandro von Werra and Thomas Wolf},
      year={2025},
      eprint={2502.02737},
      archivePrefix={arXiv},
      primaryClass={cs.CL},
      url={https://arxiv.org/abs/2502.02737}, 
}

@article{sriramanan2024llm,
  title={Llm-check: Investigating detection of hallucinations in large language models},
  author={Sriramanan, Gaurang and Bharti, Siddhant and Sadasivan, Vinu Sankar and Saha, Shoumik and Kattakinda, Priyatham and Feizi, Soheil},
  journal={Advances in Neural Information Processing Systems},
  volume={37},
  pages={34188--34216},
  year={2024}
}

@misc{AllenZhu-icml2024-tutorial,
    author = {{Allen-Zhu}, Zeyuan},
    title = {{ICML 2024 Tutorial: Physics of Language Models}},
    year = {2024},
    month = {July},
    note = {Project page: \url{https://physics.allen-zhu.com/}}
}

@article{allen2024physics33,
  title={Physics of language models: Part 3.3, knowledge capacity scaling laws},
  author={Allen-Zhu, Zeyuan and Li, Yuanzhi},
  journal={arXiv preprint arXiv:2404.05405},
  year={2024}
}

@article{de2020jumping,
	title = {Jumping with variably scaled discontinuous kernels ({VSDKs})},
	author = {De Marchi, Stefano and Marchetti, Francesco and Perracchione, Emma},
	year = {2020},
	journal = {BIT Numerical Mathematics},
	publisher = {Springer},
	volume = {60},
	number = {2},
	pages = {441--463}
}

@inproceedings{jacot2018neural,
	title = {Neural tangent kernel: Convergence and generalization in neural networks},
	author = {Jacot, Arthur and Gabriel, Franck and Hongler, Cl{\'e}ment},
	year = {2018},
	booktitle = {Advances in Neural Information Processing Systems},
	publisher = {Curran Associates, Inc.},
	volume = {31}
}

@inproceedings{arora2019exact,
	title = {On exact computation with an infinitely wide neural net},
	author = {Arora, Sanjeev and Du, Simon S and Hu, Wei and Li, Zhiyuan and Salakhutdinov, Russ R and Wang, Ruosong},
	year = {2019},
	booktitle = {Advances in Neural Information Processing Systems},
	publisher = {Curran Associates, Inc.},
	volume = {32}
}

@inproceedings{lee2018deep,
	title = {Deep neural networks as {G}aussian processes},
	author = {Jaehoon Lee and Jascha Sohl-dickstein and Jeffrey Pennington and Roman Novak and Sam Schoenholz and Yasaman Bahri},
	year = {2018},
	booktitle = {International Conference on Learning Representations}
}

@inproceedings{lee2019wide,
	title = {Wide neural networks of any depth evolve as linear models under gradient descent},
	author = {Lee, Jaehoon and Xiao, Lechao and Schoenholz, Samuel and Bahri, Yasaman and Novak, Roman and Sohl-Dickstein, Jascha and Pennington, Jeffrey},
	year = {2019},
	booktitle = {Advances in Neural Information Processing Systems},
	publisher = {Curran Associates, Inc.},
	volume = {32}
}

@inproceedings{simon2022reverse,
	title = {Reverse engineering the neural tangent kernel},
	author = {Simon, James Benjamin and Anand, Sajant and Deweese, Mike},
	year = {2022},
	booktitle = {Proceedings of the 39th International Conference on Machine Learning},
	publisher = {PMLR},
	series = {Proceedings of Machine Learning Research},
	volume = {162},
	pages = {20215--20231},
	organization = {PMLR}
}

@misc{holzmuller2025beyond,
	title = {Beyond {ReLU}: How activations affect neural kernels and random wide networks},
	author = {Holzm{\"u}ller, David and Sch{\"o}lpple, Max},
	year = {2025},
	url = {https://arxiv.org/abs/2506.22429},
	eprint = {2506.22429},
	archiveprefix = {arXiv},
	primaryclass = {stat.ML}
}

@misc{wang2023regret,
	title = {Regret optimality of {GP-UCB}},
	author = {Wenjia Wang and Xiaowei Zhang and Lu Zou},
	year = {2023},
	url = {https://arxiv.org/abs/2312.01386},
	eprint = {2312.01386},
	archiveprefix = {arXiv},
	primaryclass = {cs.LG}
}

@article{wu1993local,
	title = {Local error estimates for radial basis function interpolation of scattered data},
	author = {Wu, Zong-min and Schaback, Robert},
	year = {1993},
	journal = {IMA Journal of Numerical Analysis},
	publisher = {Oxford University Press},
	volume = {13},
	number = {1},
	pages = {13--27}
}

@article{reznikov2016covering,
	title = {The covering radius of randomly distributed points on a manifold},
	author = {Reznikov, Aleksandr and Saff, Edward B},
	year = {2016},
	journal = {International Mathematics Research Notices},
	publisher = {Oxford University Press},
	volume = {2016},
	number = {19},
	pages = {6065--6094}
}

@book{wendland2004scattered,
	title = {Scattered Data Approximation},
	author = {Wendland, Holger},
	year = {2004},
	publisher = {Cambridge University Press}
}

@book{mclean2000strongly,
	title = {Strongly Elliptic Systems and Boundary Integral Equations},
	author = {McLean, William},
	year = {2000},
	publisher = {Cambridge University Press}
}

@book{david1993analysis,
	title = {Analysis of and on Uniformly Rectifiable Sets},
	author = {David, Guy and Semmes, Stephen},
	year = {1993},
	publisher = {American Mathematical Soc.},
	volume = {38}
}

@inproceedings{haas2023mind,
	title = {Mind the spikes: Benign overfitting of kernels and neural networks in fixed dimension},
	author = {Haas, Moritz and Holzm{\"u}ller, David and Luxburg, Ulrike and Steinwart, Ingo},
	year = {2023},
	booktitle = {Advances in Neural Information Processing Systems},
	publisher = {Curran Associates, Inc.},
	volume = {36},
	pages = {20763--20826}
}

@inproceedings{barzilai2023generalization,
	title = {Generalization in kernel regression under realistic assumptions},
	author = {Barzilai, Daniel and Shamir, Ohad},
	year = {2024},
	booktitle = {Proceedings of the 41st International Conference on Machine Learning},
	publisher = {PMLR},
	series = {Proceedings of Machine Learning Research},
	volume = {235},
	pages = {3096--3132}
}

@article{zhang2025phase,
	title = {The phase diagram of kernel interpolation in large dimensions},
	author = {Zhang, Haobo and Lu, Weihao and Lin, Qian},
	year = {2025},
	month = {11},
	journal = {Biometrika},
	publisher = {Oxford University Press},
	volume = {112},
	number = {1},
	pages = {asae057}
}

@inproceedings{medvedev2024overfitting,
	title = {Overfitting behaviour of gaussian kernel ridgeless regression: Varying bandwidth or dimensionality},
	author = {Medvedev, Marko and Vardi, Gal and Srebro, Nathan},
	year = {2024},
	booktitle = {Advances in Neural Information Processing Systems},
	publisher = {Curran Associates, Inc.},
	volume = {37},
	pages = {52624--52669}
}

@inproceedings{mallinar2022benign,
	title = {Benign, tempered, or catastrophic: Toward a refined taxonomy of overfitting},
	author = {Mallinar, Neil and Simon, James and Abedsoltan, Amirhesam and Pandit, Parthe and Belkin, Misha and Nakkiran, Preetum},
	year = {2022},
	booktitle = {Advances in Neural Information Processing Systems},
	publisher = {Curran Associates, Inc.},
	volume = {35},
	pages = {1182--1195}
}

@inproceedings{cheng2024characterizing,
	title = {Characterizing overfitting in kernel ridgeless regression through the eigenspectrum},
	author = {Cheng, Tin Sum and Lucchi, Aurelien and Kratsios, Anastasis and Belius, David},
	year = {2024},
	month = {21--27 Jul},
	booktitle = {Proceedings of the 41st International Conference on Machine Learning},
	publisher = {PMLR},
	series = {Proceedings of Machine Learning Research},
	volume = {235},
	pages = {8141--8162}
}

@incollection{neal1996priors,
	title = {Priors for infinite networks},
	author = {Neal, Radford M},
	year = {1996},
	booktitle = {Bayesian Learning for Neural Networks},
	publisher = {Springer},
	pages = {29--53}
}

@inproceedings{matthews2018gaussian,
	title = {{G}aussian process behaviour in wide deep neural networks},
	author = {Matthews, Alexander G de G and Rowland, Mark and Hron, Jiri and Turner, Richard E and Ghahramani, Zoubin},
	year = {2018},
	booktitle = {International Conference on Learning Representations}
}

@misc{yang2020neural,
	title = {Tensor programs {II}: Neural tangent kernel for any architecture},
	author = {Greg Yang},
	year = {2020},
	url = {https://arxiv.org/abs/2006.14548},
	eprint = {2006.14548},
	archiveprefix = {arXiv},
	primaryclass = {stat.ML}
}

@inproceedings{yang2021architectural,
	title = {Tensor programs {IIb}: Architectural universality of neural tangent kernel training dynamics},
	author = {Yang, Greg and Littwin, Etai},
	year = {2021},
	booktitle = {Proceedings of the 38th International Conference on Machine Learning},
	publisher = {PMLR},
	series = {Proceedings of Machine Learning Research},
	volume = {139},
	pages = {11762--11772}
}

@misc{kanagawa2018gaussian,
	title = {{G}aussian processes and kernel methods: A review on connections and equivalences},
	author = {Motonobu Kanagawa and Philipp Hennig and Dino Sejdinovic and Bharath K Sriperumbudur},
	year = {2018},
	url = {https://arxiv.org/abs/1807.02582},
	eprint = {1807.02582},
	archiveprefix = {arXiv},
	primaryclass = {stat.ML}
}

@inproceedings{hron2020infinite,
	title = {Infinite attention: {NNGP} and {NTK} for deep attention networks},
	author = {Hron, Jiri and Bahri, Yasaman and Sohl-Dickstein, Jascha and Novak, Roman},
	year = {2020},
	booktitle = {Proceedings of the 37th International Conference on Machine Learning},
	publisher = {PMLR},
	series = {Proceedings of Machine Learning Research},
	volume = {119},
	pages = {4376--4386}
}

@article{tonmoy2024comprehensive,
  title={A comprehensive survey of hallucination mitigation techniques in large language models},
  author={Tonmoy, SMTI and Zaman, SM and Jain, Vinija and Rani, Anku and Rawte, Vipula and Chadha, Aman and Das, Amitava},
  journal={arXiv preprint arXiv:2401.01313},
  volume={6},
  year={2024}
}

@article{zhang2025siren,
  title={Siren’s Song in the AI Ocean: A Survey on Hallucination in Large Language Models},
  author={Zhang, Yue and Li, Yafu and Cui, Leyang and Cai, Deng and Liu, Lemao and Fu, Tingchen and Huang, Xinting and Zhao, Enbo and Zhang, Yu and Chen, Yulong and others},
  journal={Computational Linguistics},
  pages={1--46},
  year={2025},
  publisher={MIT Press 255 Main Street, 9th Floor, Cambridge, Massachusetts 02142, USA~…}
}

@article{ji2023survey,
  title={Survey of hallucination in natural language generation},
  author={Ji, Ziwei and Lee, Nayeon and Frieske, Rita and Yu, Tiezheng and Su, Dan and Xu, Yan and Ishii, Etsuko and Bang, Ye Jin and Madotto, Andrea and Fung, Pascale},
  journal={ACM computing surveys},
  volume={55},
  number={12},
  pages={1--38},
  year={2023},
  publisher={ACM New York, NY}
}

@article{ravichander2025halogen,
  title={Halogen: Fantastic llm hallucinations and where to find them},
  author={Ravichander, Abhilasha and Ghela, Shrusti and Wadden, David and Choi, Yejin},
  journal={arXiv preprint arXiv:2501.08292},
  year={2025}
}

@article{wei2024measuring,
  title={Measuring short-form factuality in large language models},
  author={Wei, Jason and Karina, Nguyen and Chung, Hyung Won and Jiao, Yunxin Joy and Papay, Spencer and Glaese, Amelia and Schulman, John and Fedus, William},
  journal={arXiv preprint arXiv:2411.04368},
  year={2024}
}

@article{huang2025confqa,
  title={ConfQA: Answer Only If You Are Confident},
  author={Huang, Yin and Xu, Yifan Ethan and Sun, Kai and Yan, Vera and Sun, Alicia and Khan, Haidar and Nguyen, Jimmy and Kachuee, Mohammad and Lin, Zhaojiang and Liu, Yue and others},
  journal={arXiv preprint arXiv:2506.07309},
  year={2025}
}

@article{manakul2023selfcheckgpt,
  title={Selfcheckgpt: Zero-resource black-box hallucination detection for generative large language models},
  author={Manakul, Potsawee and Liusie, Adian and Gales, Mark JF},
  journal={arXiv preprint arXiv:2303.08896},
  year={2023}
}

@article{burger2024truth,
  title={Truth is universal: Robust detection of lies in llms},
  author={B{\"u}rger, Lennart and Hamprecht, Fred A and Nadler, Boaz},
  journal={Advances in Neural Information Processing Systems},
  volume={37},
  pages={138393--138431},
  year={2024}
}

@article{o2025single,
  title={A Single Direction of Truth: An Observer Model's Linear Residual Probe Exposes and Steers Contextual Hallucinations},
  author={O'Neill, Charles and Chalnev, Slava and Zhao, Chi Chi and Kirkby, Max and Jayasekara, Mudith},
  journal={arXiv preprint arXiv:2507.23221},
  year={2025}
}

@article{zou2023representation,
  title={Representation engineering: A top-down approach to ai transparency},
  author={Zou, Andy and Phan, Long and Chen, Sarah and Campbell, James and Guo, Phillip and Ren, Richard and Pan, Alexander and Yin, Xuwang and Mazeika, Mantas and Dombrowski, Ann-Kathrin and others},
  journal={arXiv preprint arXiv:2310.01405},
  year={2023}
}

@inproceedings{zhang2024r,
  title={R-tuning: Instructing large language models to say ‘i don’t know’},
  author={Zhang, Hanning and Diao, Shizhe and Lin, Yong and Fung, Yi and Lian, Qing and Wang, Xingyao and Chen, Yangyi and Ji, Heng and Zhang, Tong},
  booktitle={Proceedings of the 2024 Conference of the North American Chapter of the Association for Computational Linguistics: Human Language Technologies (Volume 1: Long Papers)},
  pages={7106--7132},
  year={2024}
}

@article{taubenfeld2025confidence,
  title={Confidence improves self-consistency in llms},
  author={Taubenfeld, Amir and Sheffer, Tom and Ofek, Eran and Feder, Amir and Goldstein, Ariel and Gekhman, Zorik and Yona, Gal},
  journal={arXiv preprint arXiv:2502.06233},
  year={2025}
}

@article{fu2025deep,
  title={Deep think with confidence},
  author={Fu, Yichao and Wang, Xuewei and Tian, Yuandong and Zhao, Jiawei},
  journal={arXiv preprint arXiv:2508.15260},
  year={2025}
}

@article{li2025hd,
  title={HD-NDEs: Neural Differential Equations for Hallucination Detection in LLMs},
  author={Li, Qing and Geng, Jiahui and Chen, Zongxiong and Zhu, Derui and Wang, Yuxia and Ma, Congbo and Lyu, Chenyang and Karray, Fakhri},
  journal={arXiv preprint arXiv:2506.00088},
  year={2025}
}

@article{deng2024don,
  title={Don't Just Say" I don't know"! Self-aligning Large Language Models for Responding to Unknown Questions with Explanations},
  author={Deng, Yang and Zhao, Yong and Li, Moxin and Ng, See-Kiong and Chua, Tat-Seng},
  journal={arXiv preprint arXiv:2402.15062},
  year={2024}
}

@article{damani2025beyond,
  title={Beyond binary rewards: Training lms to reason about their uncertainty},
  author={Damani, Mehul and Puri, Isha and Slocum, Stewart and Shenfeld, Idan and Choshen, Leshem and Kim, Yoon and Andreas, Jacob},
  journal={arXiv preprint arXiv:2507.16806},
  year={2025}
}

@article{ren2025knowrl,
  title={KnowRL: Exploring Knowledgeable Reinforcement Learning for Factuality},
  author={Ren, Baochang and Qiao, Shuofei and Yu, Wenhao and Chen, Huajun and Zhang, Ningyu},
  journal={arXiv preprint arXiv:2506.19807},
  year={2025}
}

@article{sun2025and,
  title={Why and how llms hallucinate: Connecting the dots with subsequence associations},
  author={Sun, Yiyou and Gai, Yu and Chen, Lijie and Ravichander, Abhilasha and Choi, Yejin and Song, Dawn},
  journal={arXiv preprint arXiv:2504.12691},
  year={2025}
}

@article{kalai2025language,
  title={Why language models hallucinate},
  author={Kalai, Adam Tauman and Nachum, Ofir and Vempala, Santosh S and Zhang, Edwin},
  journal={arXiv preprint arXiv:2509.04664},
  year={2025}
}

@article{yin2023large,
  title={Do large language models know what they don't know?},
  author={Yin, Zhangyue and Sun, Qiushi and Guo, Qipeng and Wu, Jiawen and Qiu, Xipeng and Huang, Xuanjing},
  journal={arXiv preprint arXiv:2305.18153},
  year={2023}
}

@article{zhang2023language,
  title={How language model hallucinations can snowball},
  author={Zhang, Muru and Press, Ofir and Merrill, William and Liu, Alisa and Smith, Noah A},
  journal={arXiv preprint arXiv:2305.13534},
  year={2023}
}

@article{sun2023head,
  title={Head-to-tail: How knowledgeable are large language models (LLMs)? AKA will LLMs replace knowledge graphs?},
  author={Sun, Kai and Xu, Yifan Ethan and Zha, Hanwen and Liu, Yue and Dong, Xin Luna},
  journal={arXiv preprint arXiv:2308.10168},
  year={2023}
}

@article{wei2023simple,
  title={Simple synthetic data reduces sycophancy in large language models},
  author={Wei, Jerry and Huang, Da and Lu, Yifeng and Zhou, Denny and Le, Quoc V},
  journal={arXiv preprint arXiv:2308.03958},
  year={2023}
}

@article{li2022pre,
  title={How pre-trained language models capture factual knowledge? a causal-inspired analysis},
  author={Li, Shaobo and Li, Xiaoguang and Shang, Lifeng and Dong, Zhenhua and Sun, Chengjie and Liu, Bingquan and Ji, Zhenzhou and Jiang, Xin and Liu, Qun},
  journal={arXiv preprint arXiv:2203.16747},
  year={2022}
}

@article{bengio2015scheduled,
  title={Scheduled sampling for sequence prediction with recurrent neural networks},
  author={Bengio, Samy and Vinyals, Oriol and Jaitly, Navdeep and Shazeer, Noam},
  journal={Advances in neural information processing systems},
  volume={28},
  year={2015}
}

@article{berglund2023reversal,
  title={The Reversal Curse: LLMs trained on" A is B" fail to learn" B is A"},
  author={Berglund, Lukas and Tong, Meg and Kaufmann, Max and Balesni, Mikita and Stickland, Asa Cooper and Korbak, Tomasz and Evans, Owain},
  journal={arXiv preprint arXiv:2309.12288},
  year={2023}
}

@inproceedings{kalai2024calibrated,
  title={Calibrated language models must hallucinate},
  author={Kalai, Adam Tauman and Vempala, Santosh S},
  booktitle={Proceedings of the 56th Annual ACM Symposium on Theory of Computing},
  pages={160--171},
  year={2024}
}

@inproceedings{kalavasis2025limits,
  title={On the Limits of Language Generation: Trade-Offs between Hallucination and Mode-Collapse},
  author={Kalavasis, Alkis and Mehrotra, Anay and Velegkas, Grigoris},
  booktitle={Proceedings of the 57th Annual ACM Symposium on Theory of Computing},
  pages={1732--1743},
  year={2025}
}

@article{ye2024spurious,
  title={Spurious correlations in machine learning: A survey},
  author={Ye, Wenqian and Zheng, Guangtao and Cao, Xu and Ma, Yunsheng and Zhang, Aidong},
  journal={arXiv preprint arXiv:2402.12715},
  year={2024}
}

@article{zhou2022domain,
  title={Domain generalization: A survey},
  author={Zhou, Kaiyang and Liu, Ziwei and Qiao, Yu and Xiang, Tao and Loy, Chen Change},
  journal={IEEE transactions on pattern analysis and machine intelligence},
  volume={45},
  number={4},
  pages={4396--4415},
  year={2022},
  publisher={IEEE}
}

@article{wang2022generalizing,
  title={Generalizing to unseen domains: A survey on domain generalization},
  author={Wang, Jindong and Lan, Cuiling and Liu, Chang and Ouyang, Yidong and Qin, Tao and Lu, Wang and Chen, Yiqiang and Zeng, Wenjun and Yu, Philip S},
  journal={IEEE transactions on knowledge and data engineering},
  volume={35},
  number={8},
  pages={8052--8072},
  year={2022},
  publisher={IEEE}
}

@article{geirhos2020shortcut,
  title={Shortcut learning in deep neural networks},
  author={Geirhos, Robert and Jacobsen, J{\"o}rn-Henrik and Michaelis, Claudio and Zemel, Richard and Brendel, Wieland and Bethge, Matthias and Wichmann, Felix A},
  journal={Nature Machine Intelligence},
  volume={2},
  number={11},
  pages={665--673},
  year={2020},
  publisher={Nature Publishing Group UK London}
}

@article{li2025out,
  title={Out-of-Distribution Detection Methods Answer the Wrong Questions},
  author={Li, Yucen Lily and Lu, Daohan and Kirichenko, Polina and Qiu, Shikai and Rudner, Tim GJ and Bruss, C Bayan and Wilson, Andrew Gordon},
  journal={arXiv preprint arXiv:2507.01831},
  year={2025}
}

@misc{modded_nanogpt_2024 ,
  author       = {Keller Jordan and Jeremy Bernstein and Brendan Rappazzo and
                  @fernbear.bsky.social and Boza Vlado and You Jiacheng and
                  Franz Cesista and Braden Koszarsky and @Grad62304977},
  title        = {modded-nanogpt: Speedrunning the NanoGPT baseline},
  year         = {2024},
  url          = {https://github.com/KellerJordan/modded-nanogpt}
}

@article{singh2025fspo,
  title={Fspo: Few-shot preference optimization of synthetic preference data in llms elicits effective personalization to real users},
  author={Singh, Anikait and Hsu, Sheryl and Hsu, Kyle and Mitchell, Eric and Ermon, Stefano and Hashimoto, Tatsunori and Sharma, Archit and Finn, Chelsea},
  journal={arXiv preprint arXiv:2502.19312},
  year={2025}
}

@article{cheng2024can,
  title={Can AI assistants know what they don't know?},
  author={Cheng, Qinyuan and Sun, Tianxiang and Liu, Xiangyang and Zhang, Wenwei and Yin, Zhangyue and Li, Shimin and Li, Linyang and He, Zhengfu and Chen, Kai and Qiu, Xipeng},
  journal={arXiv preprint arXiv:2401.13275},
  year={2024}
}

@article{agarwal2025gpt,
  title={gpt-oss-120b \& gpt-oss-20b model card},
  author={Agarwal, Sandhini and Ahmad, Lama and Ai, Jason and Altman, Sam and Applebaum, Andy and Arbus, Edwin and Arora, Rahul K and Bai, Yu and Baker, Bowen and Bao, Haiming and others},
  journal={arXiv preprint arXiv:2508.10925},
  year={2025}
}

@article{yang2025qwen3,
  title={Qwen3 technical report},
  author={Yang, An and Li, Anfeng and Yang, Baosong and Zhang, Beichen and Hui, Binyuan and Zheng, Bo and Yu, Bowen and Gao, Chang and Huang, Chengen and Lv, Chenxu and others},
  journal={arXiv preprint arXiv:2505.09388},
  year={2025}
}

@article{liu2024deepseek,
  title={Deepseek-v3 technical report},
  author={Liu, Aixin and Feng, Bei and Xue, Bing and Wang, Bingxuan and Wu, Bochao and Lu, Chengda and Zhao, Chenggang and Deng, Chengqi and Zhang, Chenyu and Ruan, Chong and others},
  journal={arXiv preprint arXiv:2412.19437},
  year={2024}
}

@techreport{openai2025gpt5,
  title        = {GPT-5 System Card},
  author       = {{OpenAI}},
  year         = {2025},
  institution  = {OpenAI},
  url          = {https://cdn.openai.com/gpt-5-system-card.pdf}
}

@article{kingma2014adam,
  title={Adam: A method for stochastic optimization},
  author={Kingma, Diederik P},
  journal={arXiv preprint arXiv:1412.6980},
  year={2014}
}

@article{hosseini2025seeing,
  title={Seeing What's Not There: Spurious Correlation in Multimodal LLMs},
  author={Hosseini, Parsa and Nawathe, Sumit and Moayeri, Mazda and Balasubramanian, Sriram and Feizi, Soheil},
  journal={arXiv preprint arXiv:2503.08884},
  year={2025}
}

@article{mckenna2023sources,
  title={Sources of hallucination by large language models on inference tasks},
  author={McKenna, Nick and Li, Tianyi and Cheng, Liang and Hosseini, Mohammad Javad and Johnson, Mark and Steedman, Mark},
  journal={arXiv preprint arXiv:2305.14552},
  year={2023}
}

@article{zhou2023explore,
  title={Explore spurious correlations at the concept level in language models for text classification},
  author={Zhou, Yuhang and Xu, Paiheng and Liu, Xiaoyu and An, Bang and Ai, Wei and Huang, Furong},
  journal={arXiv preprint arXiv:2311.08648},
  year={2023}
}

@inproceedings{penedo2024fineweb,
 author = {Penedo, Guilherme and Kydl\'{\i}\v{c}ek, Hynek and allal, Loubna Ben and Lozhkov, Anton and Mitchell, Margaret and Raffel, Colin and Von Werra, Leandro and Wolf, Thomas},
 booktitle = {Advances in Neural Information Processing Systems},
 editor = {A. Globerson and L. Mackey and D. Belgrave and A. Fan and U. Paquet and J. Tomczak and C. Zhang},
 pages = {30811--30849},
 publisher = {Curran Associates, Inc.},
 title = {The FineWeb Datasets: Decanting the Web for the Finest Text Data at Scale},
 url = {https://proceedings.neurips.cc/paper_files/paper/2024/file/370df50ccfdf8bde18f8f9c2d9151bda-Paper-Datasets_and_Benchmarks_Track.pdf},
 volume = {37},
 year = {2024}
}

@article{yuan2024llms,
  title={Do llms overcome shortcut learning? an evaluation of shortcut challenges in large language models},
  author={Yuan, Yu and Zhao, Lili and Zhang, Kai and Zheng, Guangting and Liu, Qi},
  journal={arXiv preprint arXiv:2410.13343},
  year={2024}
}

@article{wang2025kshseek,
  title={KSHSeek: Data-Driven Approaches to Mitigating and Detecting Knowledge-Shortcut Hallucinations in Generative Models},
  author={Wang, Zhiwei and Liu, Zhongxin and Li, Ying and Sun, Hongyu and Xu, Meng and Zhang, Yuqing},
  journal={arXiv preprint arXiv:2503.19482},
  year={2025}
}

@inproceedings{licausally,
  title={Causally Motivated Sycophancy Mitigation for Large Language Models},
  author={Li, Haoxi and Tang, Xueyang and Zhang, Jie and Guo, Song and Bai, Sikai and Dong, Peiran and Yu, Yue},
  booktitle={The Thirteenth International Conference on Learning Representations}
}

@article{hu2025causal,
  title={Causal-LLaVA: Causal Disentanglement for Mitigating Hallucination in Multimodal Large Language Models},
  author={Hu, Xinmiao and Wang, Chun and An, Ruihe and Shao, ChenYu and Ye, Xiaojun and Zhou, Sheng and Li, Liangcheng},
  journal={arXiv preprint arXiv:2505.19474},
  year={2025}
}

@misc{qwen2025qwen25technicalreport,
      title={Qwen2.5 Technical Report}, 
      author={Qwen and : and An Yang and Baosong Yang and Beichen Zhang and Binyuan Hui and Bo Zheng and Bowen Yu and Chengyuan Li and Dayiheng Liu and Fei Huang and Haoran Wei and Huan Lin and Jian Yang and Jianhong Tu and Jianwei Zhang and Jianxin Yang and Jiaxi Yang and Jingren Zhou and Junyang Lin and Kai Dang and Keming Lu and Keqin Bao and Kexin Yang and Le Yu and Mei Li and Mingfeng Xue and Pei Zhang and Qin Zhu and Rui Men and Runji Lin and Tianhao Li and Tianyi Tang and Tingyu Xia and Xingzhang Ren and Xuancheng Ren and Yang Fan and Yang Su and Yichang Zhang and Yu Wan and Yuqiong Liu and Zeyu Cui and Zhenru Zhang and Zihan Qiu},
      year={2025},
      eprint={2412.15115},
      archivePrefix={arXiv},
      primaryClass={cs.CL},
      url={https://arxiv.org/abs/2412.15115}, 
}

@article{Torralba2011UnbiasedLA,
  title={Unbiased look at dataset bias},
  author={Antonio Torralba and Alexei A. Efros},
  journal={CVPR 2011},
  year={2011},
  pages={1521-1528},
  url={https://api.semanticscholar.org/CorpusID:2777306}
}

@article{Peters2015CausalIB,
  title={Causal inference by using invariant prediction: identification and confidence intervals},
  author={J. Peters and Peter Buhlmann and Nicolai Meinshausen},
  journal={Journal of the Royal Statistical Society: Series B (Statistical Methodology)},
  year={2015},
  volume={78},
  url={https://api.semanticscholar.org/CorpusID:36882285}
}

@article{Geirhos2020ShortcutLI,
  title={Shortcut learning in deep neural networks},
  author={Robert Geirhos and J{\"o}rn-Henrik Jacobsen and Claudio Michaelis and Richard S. Zemel and Wieland Brendel and Matthias Bethge and Felix Wichmann},
  journal={Nature Machine Intelligence},
  year={2020},
  volume={2},
  pages={665 - 673},
  url={https://api.semanticscholar.org/CorpusID:215786368}
}

@article{Pan2025UnderstandingLB,
  title={Understanding LLM Behaviors via Compression: Data Generation, Knowledge Acquisition and Scaling Laws},
  author={Zhixuan Pan and Shaowen Wang and Jian Li},
  journal={ArXiv},
  year={2025},
  volume={abs/2504.09597},
  url={https://api.semanticscholar.org/CorpusID:277780691}
}

@article{Caliskan2016SemanticsDA,
  title={Semantics derived automatically from language corpora contain human-like biases},
  author={Aylin Caliskan and Joanna J. Bryson and Arvind Narayanan},
  journal={Science},
  year={2016},
  volume={356},
  pages={183 - 186},
  url={https://api.semanticscholar.org/CorpusID:23163324}
}

@misc{xu2024sayselfteachingllmsexpress,
      title={SaySelf: Teaching LLMs to Express Confidence with Self-Reflective Rationales}, 
      author={Tianyang Xu and Shujin Wu and Shizhe Diao and Xiaoze Liu and Xingyao Wang and Yangyi Chen and Jing Gao},
      year={2024},
      eprint={2405.20974},
      archivePrefix={arXiv},
      primaryClass={cs.CL},
      url={https://arxiv.org/abs/2405.20974}, 
}

@article{Chen2017ReadingWT,
  title={Reading Wikipedia to Answer Open-Domain Questions},
  author={Danqi Chen and Adam Fisch and Jason Weston and Antoine Bordes},
  journal={ArXiv},
  year={2017},
  volume={abs/1704.00051},
  url={https://api.semanticscholar.org/CorpusID:3618568}
}

@article{jaccard1908nouvelles,
  title={Nouvelles recherches sur la distribution florale},
  author={Jaccard, Paul},
  journal={Bull. Soc. Vaud. Sci. Nat.},
  volume={44},
  pages={223--270},
  year={1908}
}
\bibliographystyle{iclr2026_conference}

\appendix
\newpage
\section{Additional Related Works}

\label{appendix:rworks}

\begin{longtable}{@{} M{0.15\textwidth} M{0.22\textwidth} M{0.53\textwidth} @{}}
\caption{Hallucination Detection Methods}
\label{tab:hallucination_methods} \\
\toprule
\textbf{Method Category} & \textbf{Method} & \textbf{Description and Details} \\
\midrule
\endfirsthead

\multicolumn{3}{l}{\textit{(Continued from previous page)}}\\
\toprule
\textbf{Method Category} & \textbf{Method} & \textbf{Description and Details} \\
\midrule
\endhead

\midrule
\multicolumn{3}{r}{\textit{(Continued on next page)}}\\
\bottomrule
\endfoot

\bottomrule
\endlastfoot

\multirow{3}{0.15\textwidth}{\textbf{Logits-based}} &
  \textbf{Perplexity}\citep{malinin2021uncertainty,kuhn2023semantic} &
  Measures how well the model predicts the next token. A higher perplexity usually indicates lower model confidence.
  \[
  \mathrm{PPL} = \exp\!\left(-\frac{1}{N}\sum_{i=1}^N \log p(x_i \mid x_{<i}) \right)
  \]
  \\ \addlinespace \cline{2-3} \addlinespace
&
  \textbf{Logit Entropy}\citep{malinin2021uncertainty} &
  Quantifies uncertainty by measuring the entropy over the predicted token distribution. Larger values indicate higher uncertainty.
  \[
  H(\mathbf{z}) = - \sum_{j=1}^{|V|} \sigma(\mathbf{z})_j \log \sigma(\mathbf{z})_j
  \]
  where $\sigma(\cdot)$ is the softmax over logits $\mathbf{z}$.
  \\ \addlinespace \cline{2-3} \addlinespace
&
  \textbf{Window Entropy} \citep{sriramanan2024llm}&
  Computes the logit entropy within a sliding window to capture local uncertainty patterns. For each position $i$:
  \[
  H_i = - \sum_{v \in V} p(v \mid x_{<i}) \log p(v \mid x_{<i})
  \]
  \\
\midrule

\multirow{2}{0.15\textwidth}{\textbf{Hidden-state-based}} &
  \textbf{Attention Score}\citep{sriramanan2024llm} &
  Uses the log-determinant of kernel similarity maps from self-attention heads as a feature. A higher score suggests a higher probability of hallucination.
  \[
  \log \det(Ker_i) = \sum_{j=1}^m \log Ker_i^{jj}
  \]
  \\ \addlinespace \cline{2-3} \addlinespace
&
  \textbf{Linear Probing of Hidden States}\citep{o2025single} &
  Trains a lightweight classifier (e.g., logistic regression) on hidden representations to identify hallucinations. Common feature types include:
  \begin{itemize}[topsep=0pt, partopsep=0pt, itemsep=0pt, parsep=0pt]
    \item Average input hidden state
    \item Last-token input hidden state
    \item Average output hidden state
    \item Last-token output hidden state
  \end{itemize}
  \\
\midrule

\multirow{2}{0.15\textwidth}{\textbf{Confidence-based}} &
  \textbf{Self-Consistency}\citep{kuhn2023semantic} &
    Measures self-consistency by generating multiple responses for the same prompt and assessing their agreement. Identify the most frequent answer among all generations and compute the proportion of outputs matching it. A higher proportion indicates stronger self-consistency and a lower likelihood of hallucination.
  \\ \addlinespace \cline{2-3} \addlinespace
&
  \textbf{Self-Confidence}\citep{xu2024sayselfteachingllmsexpress} &
  Obtains the model’s explicit confidence score by modifying the prompt to request a self-assessment of its answer.
  The model is asked to provide both the response and a confidence value indicating how certain it is about its answer.
  \\
\end{longtable}

\subsection{Detailed Taxonomy and Benchmarks for Hallucination}
\label{sec:appendix_taxonomy}

Here, we expand on the classification and evaluation of hallucinations, supplementing the discussion in Section 2.1.

\paragraph{A Detailed Taxonomy of Hallucinations}
A common taxonomy arranges hallucinations along several largely independent axes to provide a shared vocabulary for analysis:
\begin{itemize}
\item \textbf{Factuality vs. Faithfulness:} This axis distinguishes errors measured against external, established world knowledge (Factuality) from those that contradict information supplied in the prompt or source context (Faithfulness) ~\citep{ji2023survey,zhang2025siren}.
\item \textbf{Intrinsic vs. Extrinsic:} This separates errors attributable to a model's flawed parametric knowledge (Intrinsic) from those arising due to failures in retrieving or grounding on external information (Extrinsic) ~\citep{ji2023survey,tonmoy2024comprehensive}.
\item \textbf{Granularity:} This axis defines the unit of analysis, which can range from a specific claim or span, up to the level of a full passage or task output. This helps clarify the intended target of a method (e.g., detection, abstention, or correction) ~\citep{tonmoy2024comprehensive}.
\end{itemize}
Alternative categorizations, such as input-conflicting, context-conflicting, or fact-conflicting, are also used in recent surveys and are broadly consistent with these primary axes ~\citep{zhang2025siren}.

\paragraph{Causes of Hallucinations: Evidence and Analyses}
Hallucinations arise from a complex interplay of factors, but are frequently traced to statistical artifacts in the training corpus. Spurious correlations and surface co-occurrences can create powerful, shortcut-like associations that overshadow genuine dependencies~\citep{li2022pre, sun2023head}. These issues are often exacerbated by learning objectives that discourage uncertainty and decoding dynamics that amplify early errors~\citep{yin2023large, zhang2023language}.

Two recent lines of work provide theoretical explanations for why hallucinations are so persistent. A mechanistic view hypothesizes that hallucination occurs when the cumulative association for a fallacious output subsequence, often driven by a dominant trigger, outweighs that of a faithful one~\citep{sun2025and}. Complementing this, recent theoretical studies reveal fundamental trade-offs between maintaining expressive generation and avoiding hallucinations, suggesting that unavoidable error exist even for perfectly calibrated models and clean data~\citep{kalai2025language, kalai2024calibrated, kalavasis2025limits}.

Taken together, these analyses suggest that shortcut-like statistical regularities may systematically overpower faithful associations, producing high-consistency, high-confidence errors that persist with scale and resist defenses predicated on uncertainty. Motivated by these observations, we examine spurious correlations as a primary driver and evaluate whether confidence-, consistency-, and probe-based detectors remain reliable as shortcut strength is varied, following measurement principles that emphasize causal tracing across contexts and atomic-fact evaluation~\citep{sun2025and,kalai2025language}.

\paragraph{Benchmarks for Atomic Factuality}
To improve comparability and verifiability across tasks, recent benchmarks have been developed to decompose model outputs into atomic factual units and apply programmatic checks.
\begin{itemize}
\item \textbf{SimpleQA} evaluates whether models ``know what they know'' by rewarding both correct answers and appropriate abstention on unanswerable questions. This design allows for the separate measurement of a model's precision, coverage, and calibration ~\citep{wei2024measuring}.
\item \textbf{HALoGEN} verifies atomic facts asserted in a model's output against a set of trusted sources. It also introduces a fine-grained, three-category error schema (misrecall, incorrect parametric knowledge, and fabrication) to support more consistent and insightful cross-domain analysis of hallucinatory behavior ~\citep{ravichander2025halogen}.
\end{itemize}

\subsection{A Catalog of Hallucination Detection and Mitigation Methods}
\label{sec:appendix_methods}

This section provides brief descriptions of the specific methods for hallucination detection and mitigation that we cite in Section 2.2.

\paragraph{Selective Answering and Abstention}
These methods encourage models to respond only when confident.
\begin{itemize}
\item \textbf{ConfQA} operationalizes this idea at the atomic-fact level via instruction framing and fine-tuning, improving the mapping between verbalized confidence and factual accuracy on short-form questions ~\citep{huang2025confqa}.
\item \textbf{R-Tuning} explicitly instructs models to say ``I don't know'' when uncertain to strengthen abstention capabilities ~\citep{zhang2024r}.
\item \textbf{Self-alignment} trains models to explain why a question is unanswerable, providing a more reasoned form of refusal ~\citep{deng2024don}.
\end{itemize}

\paragraph{Confidence-Weighted Reasoning and Self-Consistency}
These methods aggregate multiple outputs, prioritizing those with higher confidence.
\begin{itemize}
\item \textbf{Confidence Improves Self-Consistency (CISC)} performs a confidence-weighted vote over sampled solutions to reduce the sample complexity of self-consistency ~\citep{taubenfeld2025confidence}.
\item \textbf{Deep Think with Confidence (DeepConf)} maintains a lightweight, local confidence signal during generation to prune low-quality trajectories and enable early stopping, improving the accuracy-efficiency trade-off ~\citep{fu2025deep}.
\end{itemize}

\paragraph{Post hoc and Internal Detectors}
These methods aim to identify hallucinations in generated text or internal model states.
\begin{itemize}
\item \textbf{SelfCheckGPT} (External) samples alternative continuations from the language model and flags inconsistency as a proxy for unreliability ~\citep{manakul2023selfcheckgpt}.
\item \textbf{TTPD} (External) frames falsehood detection as a text-classification problem, identifying a low-dimensional ``truth subspace'' that can generalize across prompts and tasks ~\citep{burger2024truth}.
\item \textbf{HD-NDEs} (Internal) model the latent trajectory dynamics during generation with neural differential equations, mapping them to a classifier to flag non-factual statements ~\citep{li2025hd}.
\item \textbf{Linear Probing / Observer Models} (Internal) use simple linear probes on residual-stream activations to separate faithful from hallucinated spans in a single forward pass, identifying transferable directions that can influence hallucination rates ~\citep{o2025single}.
\end{itemize}

\paragraph{Training-Time Objectives}
These methods modify the learning process to improve factuality.
\begin{itemize}
\item \textbf{Beyond Binary Rewards (RLCR)} augments correctness with a proper scoring term (e.g., Brier score) to achieve calibrated confidence with theoretical guarantees ~\citep{damani2025beyond}.
\item \textbf{Knowledge-enhanced RL (KnowRL)} integrates a factuality reward based on knowledge verification into slow-thinking training loops to encourage fact-based reasoning ~\citep{ren2025knowrl}.
\end{itemize}

\subsection{Additional Factors in Shortcut Learning and Robustness}
\label{sec:appendix_shortcuts}

This section provides further context on the literature concerning the causes of hallucinations and robustness, supplementing Sections 2.3 and 2.4.

\paragraph{Further Contributing Factors to Hallucination}
Beyond spurious correlations, the literature points to several other contributing factors. On the corpus side, these include long-tailed coverage, which leaves rare facts weakly supported~\citep{sun2023head}, and data asymmetries like the reversal curse~\citep{berglund2023reversal}. On the objective side, models often fail to recognize what they do not know~\citep{yin2023large} and can be encouraged by alignment to agree with users rather than convey uncertainty~\citep{wei2023simple}. Finally, exposure bias in sequence learning can compound local errors as generation unfolds, a phenomenon sometimes called error snowballing~\citep{bengio2015scheduled, zhang2023language}.

\paragraph{Method Families in Domain Generalization}
The field of domain generalization (DG) aims to learn models that are robust to distribution shifts, such as those caused by shortcut features. Surveys in this area typically organize methods into three high-level families: (i) data manipulation (e.g., augmentation), (ii) representation learning and regularization for achieving invariance, and (iii) optimization techniques like meta-learning~\citep{zhou2022domain,wang2022generalizing}. Countermeasures against spurious correlations, such as group-robust training and invariant-learning principles, emerge from this literature and aim to suppress reliance on non-causal features during training~\citep{ye2024spurious,zhou2022domain}.

\paragraph{Shortcut Learning and Robustness}
Shortcut learning refers to models exploiting superficial but predictive correlations instead of the intended causal signals, leading to good performance on i.i.d. benchmarks but failures under distribution shift~\citep{geirhos2020shortcut}. This challenge is a central focus of domain generalization, which studies how to build models that are robust to such spurious correlations~\citep{zhou2022domain, ye2024spurious}. Critically, this framing reveals why simply detecting distribution shifts is insufficient: many OOD detectors fail when a model encounters a strong shortcut feature, because the model remains highly confident in its (wrong) prediction~\citep{li2025out}.

We argue that high-confidence hallucinations in LLMs are a manifestation of this exact problem. In our setting, shortcut-like statistical associations in training corpora act as spurious features, inducing high-consistency, high-confidence errors that evade standard detectors and persist with scale~\citep{geirhos2020shortcut}. Our experiments, therefore, instantiate this robustness lens for LLMs. We systematically control the strength of spurious correlations and test whether common hallucination detectors—based on confidence, consistency, and internal probes—remain reliable under these challenging conditions, using atomic-fact measurements tailored to language generation.

\section{Proofs}
\label{appx:proofs}

To obtain our main results, we impose the definition of RKHS and the following assumptions.

\begin{definition}
    The kernel function $k(x, x')$ is positive definite for any $x, x' \in \mathcal{X}$.
    The objective function $f \in \mathcal{H}_k(\mathcal{X})$ lives in the reproducing kernel Hilbert space (RKHS) induced by $k$. The RKHS endowed with the inner product $\langle \cdot, \cdot \rangle_{\mathcal{H}_k(\mathcal{X})}$ is defined as
    \[
        \begin{aligned}
            \mathcal{H}_k(\mathcal{X}) \coloneqq \Biggl\{ 
            f = \sum_{i=1}^{\infty} & c_i k(\cdot, x_i) \;:\;
            (c_1, c_2, \dots) \subset \mathbb{R}, \quad (x_1, x_2, \dots) \subset \mathcal{X}, \quad \text{ such that} \\
            & \|f\|^2_{\mathcal{H}_k(\mathcal{X})} \coloneqq \lim_{n \to \infty} \left\| \sum_{i=1}^{n} c_i k(\cdot, x_i) \right\|^2_{\mathcal{H}_k(\mathcal{X})} = \sum_{i,j=1}^{\infty} c_i c_j k(x_i, x_j) < \infty 
            \Biggr\},
        \end{aligned}
    \]
    and for any $f, g \in \mathcal{H}_k(\mathcal{X})$ with $f = \sum_{i=1}^{\infty} c_i k(\cdot, x_i)$ and $g = \sum_{j=1}^{\infty} c_j' k(\cdot, x_j')$,
    \[
        \langle f, g \rangle_{\mathcal{H}_k(\mathcal{X})} \coloneqq \sum_{i,j=1}^{\infty} c_i c_j' k(x_i, x_j').
    \]
\end{definition}

\begin{assumption}
    \label{asm:kernel}
    The kernel $k$ is translation-invariant, i.e., $k(x, x') = \Psi(x - x')$ for some $\nu$-Holder continuous function $\Psi : \mathbb{R}^{d+1} \to \mathbb{R}$ such that $|\Psi(x) - \Psi(x')| \le A \|x - x'\|_2^{\nu}$ for some constants $A, \nu > 0$.
\end{assumption}

\begin{assumption}
    \label{asm:space}
    The RKHS $\mathcal{H}_k(\mathbb{S}^d)$ generated by the kernel $k$ is norm equivalent to the Sobolev space $W_2^{s}(\mathbb{S}^d)$ of finite smoothness $s > (d+1)/2$.
\end{assumption}

\begin{assumption}
    \label{asm:function}
    The target function $f^*$ lies in the RKHS $\mathcal{H}_k(\mathbb{S}^d)$, with $\| f^* \|_{\mathcal{H}_k(\mathbb{S}^d)} \le B$ for some constant $B > 0$.
\end{assumption}

\begin{assumption}
    \label{asm:matrix}
    The kernel matrix $k(X_N, X_N)$ is invertible.
\end{assumption}

Assumptions~\ref{asm:kernel} and \ref{asm:space} are standard in the analysis of kernel ridge regression.
Since dot-product kernels on $\mathbb{S}^d$ are radial basis functions and thus translation-invariant, these assumptions also hold for neural kernels such as NNGP and NTK (see Appendix~\ref{appx:neural kernels} for further details).
Assumption~\ref{asm:function} avoids the Gibbs phenomenon at the boundary between the correlation and noisy regions, ensuring that the target function can be well approximated by functions in the RKHS.
Assumption~\ref{asm:matrix} guarantees the distinctness of all data points in $X_N$ and ensures the uniqueness of the kernel interpolation solution.

\subsection{Kernel Ridge Regression with Fixed Bandwidth}
\label{appx:ridge}

\begin{theorem}
    \label{thm:ridge}
    Under Assumptions~\ref{asm:kernel}-\ref{asm:function}, there exist constants $C_0, C_1, C_2, C_3 > 0$ such that for any $\delta \in (0, 1)$ and $N \ge N_0$ with $N_0 = O(\ln(1/\delta))$, define the uniform upper confidence bound as
    \[
        U_{N}^{\delta} \coloneqq C_0 C_2^{1/2} \left( \frac{\ln(2N/\delta)}{C_3 N} \right)^{\frac{2 s - d - 1}{2 d}} \sqrt{\ln(1 + \lambda N) \ln(e + 2 C_1 / \delta)}.
    \]
    Then, the following holds with probability at least $1 - \delta$,
    \[
        \inf_{x \in \RGNcorr} |f_N(x)| \ge 0.98 - U_{N}^{\delta}, \quad
        \sup_{x \in \RGNnoisy} |f_N(x)| \le U_{N}^{\delta}.
    \]
    Therefore, for any threshold $\tau \in (0, 0.98)$, if $N$ is sufficiently large, then
    \[
        \mathbb{P}\left( \set{ |f_N(x)| \ge \tau, \enspace \forall x \in \RGNcorr} \cap \set{ |f_N(x')| < \tau, \enspace \forall x' \in \RGNnoisy} \right) \ge 1 - \delta,
    \]
    which indicates that any hallucination detection criterion with a fixed $\tau$ fails in both the correlation and noisy regions.
\end{theorem}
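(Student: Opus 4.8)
The plan is to reduce the entire statement to a single high‑probability sup‑norm approximation bound for the kernel ridge estimator, namely
\[
    \|f_N - f^*\|_{L^\infty(\mathbb{S}^d)} \le U_N^\delta \qquad \text{with probability at least } 1-\delta .
\]
Granting this, both displayed inequalities follow at once from the explicit form of $f^*$: on $\RGNcorr$ one has $|f^*| = 0.98$, hence $|f_N(x)| \ge 0.98 - U_N^\delta$; on $\RGNnoisy$ one has $f^* = 0$, hence $|f_N(x)| \le U_N^\delta$. Since $U_N^\delta \to 0$ as $N \to \infty$, for any fixed $\tau \in (0,0.98)$ there is an $N$ beyond which $U_N^\delta < \min\{\tau,\; 0.98-\tau\}$, and on the event $\{\|f_N - f^*\|_\infty \le U_N^\delta\}$ we then get $|f_N| \ge \tau$ on all of $\RGNcorr$ (confident predictions on unseen correlation‑region inputs) and $|f_N| < \tau$ on all of $\RGNnoisy$ (including the noisy training points, so training data cannot be recognized) — exactly the claimed joint failure of the fixed‑threshold detector. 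Thus essentially all the work is in the $L^\infty$ bound.

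For that bound I would run the classical bias--variance analysis of KRR, but tracked in interpolation norms rather than only in $L^2$. Write $T$ for the integral operator of $k$ on $L^2(\mathbb{S}^d,\mathrm{Unif})$, $T_N$ for its empirical version, and decompose
\[
    f_N - f^* \;=\; -\,\lambda\,(T_N + \lambda)^{-1} f^* \;+\; (T_N + \lambda)^{-1}\tfrac1N\textstyle\sum_i \xi_i\, k(\cdot, x_i),
\]
where $\xi_i = y_i - f^*(x_i)$ is centered, bounded label noise. The bias term is controlled using $f^* \in \mathcal{H}_k$ (Assumption~\ref{asm:function}) together with the boundary smoothing of $f^*$, which supplies the extra source regularity needed to drive the bias to $0$ as $\lambda$ shrinks; the variance term is controlled by a Bernstein‑type concentration inequality for the $\mathcal{H}_k$‑valued average $\tfrac1N\sum_i\xi_i k(\cdot,x_i)$ together with an operator‑concentration bound of the form $\|(T+\lambda)^{-1/2}(T - T_N)(T+\lambda)^{-1/2}\| \lesssim \sqrt{\mathcal N(\lambda)\log(1/\delta)/(\lambda N)}$, where $\mathcal N(\lambda) = \Tr[(T+\lambda)^{-1}T]$ is the effective dimension. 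Assumption~\ref{asm:space} ($\mathcal H_k \cong W_2^s(\mathbb{S}^d)$), via the spherical‑harmonic eigenvalue counts, gives the polynomial eigenvalue decay of $T$ and hence $\mathcal N(\lambda) \lesssim \lambda^{-d/(2s)}$; this is what produces the explicit power in $U_N^\delta$, while the $\log(1+\lambda N)$ and $\log(1/\delta)$ factors come out of the concentration steps. Finally I convert the resulting estimate, which naturally lives in an intermediate norm $[\mathcal H_k]^\gamma \cong W_2^{\gamma s}(\mathbb{S}^d)$, into an $L^\infty$ bound through the Sobolev embedding $W_2^{\gamma s}(\mathbb{S}^d) \hookrightarrow L^\infty(\mathbb{S}^d)$ — this is precisely where the smoothness budget $s > (d+1)/2$ of Assumption~\ref{asm:space} is spent — and balance $\lambda$ against $N$, which yields $U_N^\delta$ of the stated order $(\log(N/\delta)/N)^{(2s-d-1)/(2d)}$ up to the logarithmic factors.

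The main obstacle is getting the estimate in $L^\infty$ rather than merely in $L^2$. The noise term does not vanish at individual training points — on $\RGNnoisy$ the labels are $\pm1$ while $f^* = 0$ — so one cannot simply invoke a deterministic scattered‑data sampling inequality with pointwise residual control; the interpolation‑norm bounds must instead be carried all the way through the embedding, which forces careful bookkeeping of the powers of $(T+\lambda)$ against the $W_2^s$ smoothness. The conceptual crux hidden in this calculation is that the regularization must do two opposing jobs simultaneously: $\lambda$ has to be small enough that the genuine signal $f^* = \pm0.98$ on $\RGNcorr$ survives (bias $\to 0$), yet the effective‑dimension gain must still be strong enough to push $f_N$ down to $O(U_N^\delta)$ on $\RGNnoisy$ despite the $\Theta(1)$ empirical variance of the labels there; it is the boundary smoothing of $f^*$ built into Assumption~\ref{asm:function} that keeps this window of admissible $\lambda$ nonempty. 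Once the $L^\infty$ bound and the matching choice of $\lambda$ are established, the rest of the proof — turning the two uniform estimates into the failure of any fixed‑threshold detector — is the elementary argument of the first paragraph.
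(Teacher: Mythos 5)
Your opening reduction is exactly the paper's final assembly: a high-probability uniform bound $\sup_{x\in\mathbb{S}^d}|f_N(x)-f^*(x)|\le U_N^\delta$ plus the explicit form of $f^*$ ($\pm 0.98$ on $\RGNcorr$, $0$ on $\RGNnoisy$) gives both displayed inequalities and the threshold statement. The gap is in how you propose to obtain that uniform bound. The theorem concerns the KRR estimator with whatever regularization parameter $\lambda$ it is given: $\lambda$ enters $U_N^\delta$ only through the mild factor $\sqrt{\ln(1+\lambda N)}$, so the bound must hold for an arbitrary fixed $\lambda$, not for a $\lambda$ you are free to tune. In your decomposition the bias term $-\lambda(T_N+\lambda)^{-1}f^*$ is of order $\lambda$ and does not vanish as $N\to\infty$; the only mechanism you offer for killing it is ``balance $\lambda$ against $N$,'' which silently converts the claim into one about a specifically chosen $\lambda\to 0$ and cannot reproduce the stated $\sqrt{\ln(1+\lambda N)}$ dependence. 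Moreover, even granting the tuning, the effective-dimension bias--variance analysis delivers sup-norm rates of minimax nonparametric-regression type, roughly $N^{-(s-d/2)/(2s+d)}$ up to logarithms, which does not match --- and for larger $s$ is strictly slower than --- the stated exponent $(2s-d-1)/(2d)$; the assertion that balancing ``yields $U_N^\delta$ of the stated order'' is therefore unjustified.

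The paper avoids the bias--variance tradeoff altogether. It uses a kriging/GP-style uniform confidence bound (Lemma~\ref{lem:uniform error bound}, adapted from \citet{wang2023regret} and extended from convex domains to $\mathbb{S}^d$ via a Sobolev extension argument) of the form $|f_N(x)-f^*(x)|\le C_0\,\sigma_N(x)\sqrt{\ln(1+\lambda N)\ln(e+C_1/\delta)}$ simultaneously for all $x$, where $\sigma_N(x)$ depends only on the design points; it then bounds $\sigma_N(x)\le C_2^{1/2}h_N^{(2s-d-1)/2}$ by the scattered-data power-function estimate (Lemma~\ref{lem:power function}) and controls the fill distance $h_N\lesssim(\ln(N/\delta)/N)^{1/d}$ for i.i.d.\ uniform samples on the sphere (Lemma~\ref{lem:fill distance}, via Ahlfors--David regularity and a packing argument). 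In bounds of that type the $\Theta(1)$ label noise on $\RGNnoisy$ enters only through logarithmic factors, which is precisely why the $\lambda$-dependence is logarithmic and the rate is the noise-free scattered-data rate $h_N^{s-(d+1)/2}$. To repair your route you would need to replace the operator-theoretic decomposition with a uniform confidence bound of this kriging type (or import one as a lemma); as written, the ``balance $\lambda$ against $N$'' step fails to prove the theorem as stated.
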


Theorem~\ref{thm:ridge} shows that as the training set size increases, in the correlation region, the model output tends to be closer in absolute value to the sample labels, while in the noisy region, the output deviates from the sample labels. Therefore, KRR cannot detect hallucinations in any region. This is because the regularization term enforces smoothness on the predictor, causing it to converge to the target function as $N$ goes to infinity, while ignoring all ``noisy'' information, even though such noise may be considered memorized facts in practice.

\begin{lemma}
    \label{lem:uniform error bound}
    Under Assumptions~\ref{asm:kernel}-\ref{asm:function}, for any $N \ge 1$, $\delta \in (0, 1)$, $\set{x_1, \dots, x_N} \subset \mathbb{S}^d$, and independent sub-Gaussian random variables $\set{\varepsilon_1, \dots, \epsilon_N}$ with mean zero and variance proxy $\varsigma^2$, there exist constants $C_0, C_1 > 0$ only depending on $k$, $d$, $B$, $\nu$ and $\varsigma^2$ such that
    \[
        \mathbb{P} \left( |f_N(x) - f^*(x)| \le C_0 \sigma_N(x) \sqrt{\ln(1 + \lambda N) \ln(e + C_1 / \delta)}, \quad \text{ for all } x \in \mathbb{S}^d \right) 
        \ge 1 - \delta.
    \]
\end{lemma}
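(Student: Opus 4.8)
The plan is to realize $f_N - f^*$ as a Gaussian-process-type object and control it uniformly via a concentration inequality together with a chaining/covering argument over $\mathbb{S}^d$. First I would write the kernel ridge solution explicitly: with $f^* \in \mathcal{H}_k(\mathbb{S}^d)$ (Assumption~\ref{asm:function}) and observations $y_i = f^*(x_i) + \varepsilon_i$, the residual decomposes as $f_N(x) - f^*(x) = \big(k(x,X_N)(k(X_N,X_N)+\lambda N I_N)^{-1} f^*(X_N) - f^*(x)\big) + k(x,X_N)(k(X_N,X_N)+\lambda N I_N)^{-1}\varepsilon_N$. The first (bias) term is the deterministic smoothing error; because $\|f^*\|_{\mathcal{H}_k}\le B$, the reproducing property gives a pointwise bound of the form $|\text{bias}(x)| \le B\,\sigma_N(x)\sqrt{\text{(log factor)}}$ after absorbing constants — this is where the $\ln(1+\lambda N)$ factor enters, via the effective-dimension/degrees-of-freedom estimate $\Tr\big((k(X_N,X_N)+\lambda N I)^{-1}k(X_N,X_N)\big)$ controlling the regularization-induced shrinkage. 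Here $\sigma_N(x)$ is the posterior standard deviation $\sigma_N^2(x) = k(x,x) - k(x,X_N)(k(X_N,X_N)+\lambda N I_N)^{-1}k(X_N,x)$, which is exactly the natural width function appearing in GP regression and kernel interpolation bounds.

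Next I would handle the stochastic term $g(x) \coloneqq k(x,X_N)(k(X_N,X_N)+\lambda N I_N)^{-1}\varepsilon_N$. For fixed $x$ this is a weighted sum of independent sub-Gaussian variables, hence sub-Gaussian with variance proxy $\varsigma^2 \cdot \|(k(X_N,X_N)+\lambda N I_N)^{-1}k(X_N,x)\|_2^2 \le (\varsigma^2/\lambda N)\,\sigma_N^2(x)$ — or more tightly, directly bounded by $\varsigma^2 \sigma_N^2(x)$ up to constants using the standard algebraic identity relating this weight norm to $\sigma_N^2(x)$. So $|g(x)| \lesssim \varsigma\,\sigma_N(x)\sqrt{\ln(1/\delta)}$ pointwise with high probability. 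To upgrade to a uniform-in-$x$ bound I would use the Hölder continuity of $\Psi$ (Assumption~\ref{asm:kernel}): the map $x \mapsto k(x,X_N)$ is $\nu$-Hölder, so $g$ and $\sigma_N$ are Hölder-continuous with a modulus that is at worst polynomial in $N$ (since the entries of $(k(X_N,X_N)+\lambda N I_N)^{-1}$ are bounded using $\lambda N$ and Assumption~\ref{asm:matrix}). A covering-number argument on $\mathbb{S}^d$ at scale $N^{-c}$ then needs only $\log(1/\delta) + O(d\log N)$ additional budget — this $d\log N$ term gets folded into the $\ln(e + C_1/\delta)$ and the constant, which is why the final bound has the clean $\sqrt{\ln(1+\lambda N)\ln(e+C_1/\delta)}$ form rather than an explicit $\log N$. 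A union bound over the net plus the Hölder oscillation control on each cell finishes the uniform statement.

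The main obstacle I expect is making the bias-term bound genuinely uniform with the $\sigma_N(x)$ width and the $\ln(1+\lambda N)$ factor simultaneously, rather than just pointwise. The clean way is to invoke the RKHS interpolation/extrapolation inequality: for any $h \in \mathcal{H}_k$, $|h(x) - \hat h_N(x)| \le \|h\|_{\mathcal{H}_k}\,\sigma_N(x)$ where $\hat h_N$ is the posterior mean, giving the bias bound with constant $B$ directly — but the KRR posterior mean of $f^*$ is *not* $f^*$ unless $\lambda = 0$, so one must additionally bound $\|\hat h_N - (\text{interpolant of }f^*)\|$, which is where $\ln(1+\lambda N)$ reappears through the effective dimension. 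I would reconcile these by writing the ridge estimator as a perturbation of the ridgeless interpolant and bounding the perturbation in $\mathcal{H}_k$-norm using $\lambda$ and the smallest eigenvalue of $k(X_N,X_N)$. A secondary technical point is that the $\varepsilon_i$ need not be Gaussian, only sub-Gaussian, so I must use a sub-Gaussian maximal inequality (e.g. a Bernstein-type bound for weighted sums) rather than Gaussian comparison, but this is routine. Everything else — the posterior-variance identity, the Hölder-based covering, the constants depending only on $k, d, B, \nu, \varsigma^2$ — is bookkeeping once the perturbation bound is in place.
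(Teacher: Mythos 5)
Your proposal takes a genuinely different route from the paper, and as written it has a real gap. The paper does not reprove this bound at all: its proof consists of invoking Theorem~1 of \citet{wang2023regret} and then arguing only that the convexity assumption on the domain in that theorem can be dropped for $\mathbb{S}^d$, via an extension argument (a compact domain with Lipschitz boundary satisfying an interior cone condition, plus the Sobolev extension theorem taking $W_2^s(\mathbb{S}^d)$ into $W_2^{s+1/2}$ of the closed unit ball, so that the norm equivalence of Assumption~\ref{asm:space} survives up to a constant). You instead attempt a from-scratch derivation (bias/noise decomposition, pointwise sub-Gaussian concentration, then a net over $\mathbb{S}^d$), and you never touch the domain-geometry issue, which is the only thing the paper's proof actually has to argue.

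The substantive gap is your uniformity step. A covering argument at scale $N^{-c}$ costs an extra factor of order $\sqrt{d\ln N+\ln(1/\delta)}$, and you claim this ``gets folded into the $\ln(e+C_1/\delta)$ and the constant.'' It cannot: $C_0,C_1$ depend only on $k,d,B,\nu,\varsigma^2$, and the only $N$-dependent slack in the stated bound is $\sqrt{\ln(1+\lambda N)}$, which stays bounded (indeed tends to $0$) when $\lambda\lesssim 1/N$; for such $\lambda$ your net-based bound exceeds the claimed one by an unbounded $\sqrt{\ln N}$ factor. Bounds that are uniform in $x$ with width exactly $\sigma_N(x)$ are obtained by deterministic Cauchy--Schwarz/self-normalized arguments that hold simultaneously for all $x$, not by a union bound over an $x$-net; this is the missing idea. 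Two secondary points: the $\ln(1+\lambda N)$ factor is asserted (and attributed to the bias through an effective-dimension heuristic) rather than derived --- in fact the bias needs no logarithmic factor, since the representer/power-function inequality $|f^*(x)-k(x,X_N)(k(X_N,X_N)+\lambda N I_N)^{-1}f^*(X_N)|\le B\,\sigma_N(x)$ holds exactly for the regularized estimator, so your proposed detour through the ridgeless interpolant is unnecessary and would anyway require quantitative control of $\lambda_{\min}(k(X_N,X_N))$, which Assumption~\ref{asm:matrix} does not provide. Your weight-norm bound $\|(k(X_N,X_N)+\lambda N I_N)^{-1}k(X_N,x)\|_2^2\le\sigma_N^2(x)/(\lambda N)$ is correct, but the pointwise step was never the issue; the supremum over $\mathbb{S}^d$ is.
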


\begin{proof}
    The original statement in \citet{wang2023regret} holds when the domain is assumed to be compact and convex. The convexity assumption can be removed as follows: First, a classical approach is to extend the domain to a compact set with Lipschitz boundary and satisfying the interior cone condition \citep{wendland2004scattered}. Second, by the Sobolev extension theorem \citep{mclean2000strongly}, any function in $W_2^s(\mathbb{S}^d)$ can be extended to a function in $W_2^{s+1/2}(\bar{B}_{d+1})$, where $\bar{B}_{d+1}$ is the unit closed ball such that $\mathbb{S}^d = \partial B_{d+1} \subset \bar{B}_{d+1}$. The extension operator is linear and bounded, so the norm equivalence in Assumption~\ref{asm:space} still holds up to a constant. Therefore, the proof of Theorem~1 in \citet{wang2023regret} remains valid.
\end{proof}

\begin{definition}
    The \textit{fill distance}, also known as covering radius or mesh norm, is commonly used to measure how well a sample sequence covers the entire space. The fill distance is then calculated as:
    \[
        h_{\mathcal{X}, X_N} \coloneqq \sup_{x \in \mathcal{X}} \inf_{x_i \in X_N} \| x - x_i \|.
    \]
\end{definition}
For simplicity, we denote $h_N = h_{\mathcal{X}, X_N}$ in the following.

\begin{lemma}[Theorem~5 in \citet{wu1993local}; Theorem~5.4 in \citet{kanagawa2018gaussian}]
    \label{lem:power function}
    Under Assumption~\ref{asm:space}, there exist constants $C_2, h_0 > 0$ such that, for an arbitrary dataset $X_N = \set{x_1, \dots, x_N} \subset \mathbb{S}^d$ satisfying $h_N \le h_0$,
    \[
        \sigma_N^2(x) \le C_2 h_N^{2 s - d - 1}, \quad \text{ for all } x \in \mathbb{S}^d.
    \]
\end{lemma}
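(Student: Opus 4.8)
The plan is to establish this bound by the classical power‑function argument from scattered‑data approximation theory (\citet{wu1993local}; \citet{kanagawa2018gaussian}), adapted to $\mathbb{S}^d$. Recall that $\sigma_N(x)$ is the power function of kernel interpolation at $x$ --- the distance in $\mathcal{H}_k$ from $k(\cdot, x)$ to $\mathrm{span}\{k(\cdot, x_1), \dots, k(\cdot, x_N)\}$ --- so it admits the variational characterization
\[
    \sigma_N(x) = \min_{u \in \mathbb{R}^N} \Bigl\| k(\cdot, x) - \sum_{i=1}^N u_i\, k(\cdot, x_i) \Bigr\|_{\mathcal{H}_k(\mathbb{S}^d)} = \min_{u \in \mathbb{R}^N}\ \sup_{\|f\|_{\mathcal{H}_k(\mathbb{S}^d)} \le 1} \Bigl| f(x) - \sum_{i=1}^N u_i f(x_i) \Bigr|,
\]
the second equality being the Riesz representation in $\mathcal{H}_k$. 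Hence for \emph{any} weights $u$ one has $\sigma_N(x) \le \sup_{\|f\|_{\mathcal{H}_k} \le 1} |f(x) - \sum_i u_i f(x_i)|$, and by the norm equivalence of Assumption~\ref{asm:space} it is enough to bound $\sup_{\|f\|_{W_2^s(\mathbb{S}^d)} \le 1} |f(x) - \sum_i u_i f(x_i)|$ for a well‑chosen $u$. So the problem reduces to constructing good quadrature‑type weights concentrated near $x$ and controlling the resulting residual for Sobolev functions.

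For this I would invoke two standard ingredients. (i) \emph{Local polynomial reproduction}: because $h_N \le h_0$, for every $x \in \mathbb{S}^d$ the geodesic cap $B(x, C h_N)$ contains enough sample points to admit weights $u$ supported on $X_N \cap B(x, C h_N)$ with $\sum_i |u_i| \le C'$ that reproduce, at $x$, every spherical polynomial of degree below $s$; this follows from a norming‑set / Markov‑inequality argument (\citet{wendland2004scattered}) needing only smallness of the fill distance --- no separation or quasi‑uniformity hypothesis --- with $C, C', h_0$ depending on $\mathbb{S}^d$ and $s$ but not on $x$ or $X_N$. (ii) \emph{Localized Bramble--Hilbert}: taking $p$ a best polynomial approximation of $f$ of degree below $s$ on $B(x, C h_N)$, a scaling argument on a cap of radius $r \sim h_N$ together with the Sobolev embedding $W_2^s \hookrightarrow C^0$ (valid since $s > (d+1)/2$) gives $\|f - p\|_{L^\infty(B)} \lesssim r^{(2s-d-1)/2} \|f\|_{W_2^s(B)}$. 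Combining these through the reproduction property, $|f(x) - \sum_i u_i f(x_i)| = |(f - p)(x) - \sum_i u_i (f - p)(x_i)| \le (1 + \sum_i |u_i|)\, \|f - p\|_{L^\infty(B)} \lesssim h_N^{(2s-d-1)/2} \|f\|_{W_2^s}$; squaring and absorbing the norm‑equivalence constants yields $\sigma_N^2(x) \le C_2 h_N^{2s-d-1}$, uniformly in $x$ since the whole argument is local and $\mathbb{S}^d$ is homogeneous.

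I expect the main obstacle to be ingredient (i): obtaining reproduction weights that stay \emph{uniformly bounded} under only a fill‑distance hypothesis. The route I would take is to verify that $X_N \cap B(x, C h_N)$ is a norming set for the finite‑dimensional space of low‑degree spherical polynomials, with a norming constant independent of $x$ and $N$ once $h_N \le h_0$, and then extract the weights by a Hahn--Banach / duality argument; to carry this out on $\mathbb{S}^d$ I would cover the sphere by finitely many coordinate charts in which the Euclidean version of \citet{wendland2004scattered} applies, folding the bounded, $N$‑independent chart distortion into the constants. Everything else is routine: the variational identity for $\sigma_N$, the Sobolev embedding, and the scaling of the Bramble--Hilbert constant. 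Assumption~\ref{asm:matrix} enters only to ensure $\sigma_N$ is well defined and Assumption~\ref{asm:kernel} only through the norm equivalence with $W_2^s$; alternatively, one may simply invoke \citet{wu1993local} or \citet{kanagawa2018gaussian} after checking their hypotheses hold in this setting.
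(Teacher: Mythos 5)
Your sketch is correct, but note that the paper does not actually prove this lemma: it imports it wholesale by citing Theorem~5 of \citet{wu1993local} and Theorem~5.4 of \citet{kanagawa2018gaussian}. What you have written is essentially the standard proof underlying those citations — the variational/power-function characterization of $\sigma_N$, norming sets giving local polynomial reproduction with uniformly bounded weights under a fill-distance condition alone, and a scaled Bramble--Hilbert/Sobolev-embedding estimate — so in substance you are reconstructing the cited argument rather than diverging from it. Where you go beyond the paper is the domain issue: the cited theorems are stated for Euclidean domains satisfying an interior cone condition, and the paper silently applies them on $\mathbb{S}^d$ (it spells out an extension argument only for the companion uniform-error-bound lemma, via Sobolev extension to $\bar{B}_{d+1}$); your chart-based norming-set adaptation, or alternatively the same extension trick, fills that small gap explicitly, which is a genuine plus. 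Two minor points: your localized estimate $\|f-p\|_{L^\infty(B)}\lesssim r^{(2s-d-1)/2}\|f\|_{W_2^s(B)}$ is stated with the ambient-dimension exponent, whereas the natural scaling on a $d$-dimensional cap gives the stronger exponent $(2s-d)/2$; since $h_N\le h_0$ and $s>(d+1)/2$, the stronger bound implies the stated one, but it is worth saying so rather than asserting the weaker exponent directly. Also, Assumption~\ref{asm:matrix} is not needed for $\sigma_N$ to be well defined (the power function is defined for any point set); it is only needed elsewhere for uniqueness of the interpolant.
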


Let $\mathscr{H}_d$ be the $d$-dimensional Hausdorff measure, $\mu(\cdot) = \mathds{1}_{\mathbb{S}^d}(\cdot) \mathscr{H}_d(\cdot) / \mathscr{H}_d(\mathbb{S}^d)$ be the uniform probability measure on $\mathbb{S}^d$. Adapted from Theorem~2.1 and Corollary~3.4 in \citet{reznikov2016covering}, we have a non-asymptotic tail bound on the fill distance for i.i.d. sampled data points on the sphere.

\begin{lemma}
    \label{lem:fill distance}
    Suppose $X_N = \set{x_1, \dots, x_N}$ are independently uniformly sampled from $\mathbb{S}^d$. There exist a constant $C_3$ only depending on $d$ such that, for any $\delta \in (0, 1)$ and $N \ge 3$, with probability at least $1 - \delta$,
    \[
        h_N \le \left( \frac{\ln(N/\delta)}{C_3 N} \right)^{1/d}.
    \]
\end{lemma}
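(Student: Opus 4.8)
The plan is a standard $\epsilon$-net argument that converts the continuum supremum defining $h_N$ into a finite union bound, together with the elementary fact that a small Euclidean ball around a point of $\mathbb{S}^d$ carries uniform mass $\asymp r^d$. First I would fix a target radius $r \in (0, r_0]$ (chosen at the end) and take a minimal $r/2$-net $\mathcal{Z} \subset \mathbb{S}^d$ in the Euclidean metric; a volumetric/packing argument on the sphere gives $|\mathcal{Z}| \le (c_d/r)^d$ for a constant $c_d$ depending only on $d$. The key geometric input is a uniform lower bound on cap mass: writing the Euclidean ball $B(z,\rho)\cap\mathbb{S}^d$ as a geodesic cap of angular radius $2\arcsin(\rho/2)$ and integrating the surface density $\propto \sin^{d-1}\theta$, one obtains $\mu(B(z,\rho)\cap\mathbb{S}^d)\ge \kappa_d\,\rho^d$ for all $\rho \le r_0$, with $\kappa_d$ depending only on $d$, where $\mu$ is the normalized surface measure. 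This is the estimate from which the ``$(\cdot)^{1/d}$'' scaling in the statement ultimately emerges.

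Next I would control the failure probability. For each $z \in \mathcal{Z}$, the event that no sample lands in $B(z,r/2)$ has probability $(1 - \mu(B(z,r/2)))^N \le \exp\!\left(-\kappa_d (r/2)^d N\right)$, so a union bound over $\mathcal{Z}$ yields
\[
    \mathbb{P}\bigl(\exists\, z \in \mathcal{Z}:\ B(z,r/2)\cap X_N = \emptyset\bigr) \le (c_d/r)^d \exp\!\left(-\kappa_d (r/2)^d N\right).
\]
On the complement of this event, every $x \in \mathbb{S}^d$ is within $r/2$ of some net point $z$, which is within $r/2$ of some $x_i$, hence $\|x - x_i\| \le r$; so $h_N \le r$ holds deterministically there.

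It then remains to pick $r$ making the right-hand side at most $\delta$. I would set $r^d = A\,\ln(N/\delta)/N$ for a constant $A = A(d)$. Then $\kappa_d (r/2)^d N = (\kappa_d A/2^d)\ln(N/\delta)$, while $d\ln(c_d/r) = \ln\!\bigl(c_d^d N/(A\ln(N/\delta))\bigr) \le \ln(N/\delta) + O_d(1)$, using $\ln(N/\delta) \ge \ln N \ge \ln 3 > 1$ for $N \ge 3$ and $\delta \in (0,1)$. Choosing $A$ large enough (depending only on $d$) that, say, $\kappa_d A/2^d \ge 2$ makes the exponential term dominate the net cardinality, driving the union bound below $\delta$; one reads off $C_3 := 1/A$ so that $h_N \le \bigl(\ln(N/\delta)/(C_3 N)\bigr)^{1/d}$ with probability at least $1-\delta$. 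One should also check $r \le r_0$ for $N$ large, and note the bound is vacuous for small $N$ once $C_3$ is taken small enough (the right-hand side then exceeds the diameter $2$ of $\mathbb{S}^d$), so the claimed range $N \ge 3$ causes no trouble.

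The main obstacle is bookkeeping rather than conceptual: ensuring every constant depends only on $d$, securing the cap-mass lower bound $\mu(B(z,\rho)) \ge \kappa_d \rho^d$ uniformly over the whole range of radii the argument touches, and balancing $\ln|\mathcal{Z}| \asymp d\ln(1/r)$ against the exponent $\kappa_d(r/2)^d N$ so the net cardinality does not swamp the concentration. This is precisely the non-asymptotic covering-radius analysis of \citet{reznikov2016covering}, specialized to the sphere with the uniform measure, which is why I would either carry out the sketch above or invoke their Theorem~2.1 and Corollary~3.4 directly.
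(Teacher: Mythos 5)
Your proposal is correct and follows essentially the same route as the paper's proof: a volumetric union bound over a discretization of $\mathbb{S}^d$ (the paper uses a maximal $t/2$-separated set, you use an $r/2$-net, which is the same device), combined with the Ahlfors--David cap-mass lower bound $\mu(B(z,\rho)\cap\mathbb{S}^d)\gtrsim_d \rho^d$, and then a choice $r^d \asymp \ln(N/\delta)/N$; the paper merely finishes by solving the resulting equation exactly via the Lambert~W function and invoking $\mathrm{W}(x)<\ln x$, whereas you choose the constant $A$ (i.e.\ $C_3$) directly, and your worry about the restriction $\rho\le r_0$ is unnecessary since the cap-mass bound holds for all $\rho$ up to the diameter, exactly as used in the paper.
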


\begin{proof}
    For any fixed $x \in \mathbb{S}^d$, the Ahlfors-David regularity \citep{david1993analysis} of the sphere implies that there exists a constant $\omega_d > 0$ such that
    \[
        \mathscr{H}_d(B(x, r) \cap \mathbb{S}^d) \ge \omega_d r^d, \quad \forall r \in \left( 0, \mathrm{diam}(\mathbb{S}^d) \right],
    \]
    
    Suppose $t < \mathrm{diam}(\mathbb{S}^d) = 2$, if $h_N > t$, then there exists $z \in \mathbb{S}^d$ such that $B(z, t) \cap X_N = \varnothing$. Let $\mathscr{E}_{t/2}$ be any maximal $t/2$-separated subset of $\mathbb{S}^d$, i.e., for any $x, x' \in \mathscr{E}_{t/2}$, $\|x - x'\| \ge t/2$. So there exists $x \in B(z, t/2) \cap \mathscr{E}_{t/2}$, then $B(x, t/4) \cap X_N = \varnothing$.     
    Therefore,
    \[
        \begin{aligned}
            \mathbb{P}\left(h_N > t\right)
            & \le \mathbb{P}\left(\exists x \in \mathscr{E}_{t/2}, B(x, t/4) \cap X_N = \varnothing\right) \\
            & = \mathbb{P}\left(\bigcup_{x \in \mathscr{E}_{t/2}} \bigcap_{x_i \in X_N} \set{x_i \notin B(x, t/4)}\right) \\
            & \le \#(\mathscr{E}_{t/2}) \left(1 - \frac{\omega_d (t/4)^d}{\mathscr{H}_d(\mathbb{S}^d)}\right)^N,
        \end{aligned}
    \]
    where $\#(\mathscr{E}_{t/2})$ is the $t/2$-packing number of $\mathbb{S}^d$ satisfying
    \[
        \mathscr{H}_d(\mathbb{S}^d) 
        \ge \sum_{x \in \mathscr{E}_{t/2}} \mathscr{H}_d(B(x, t/4) \cap \mathbb{S}^d)
        \ge \#(\mathscr{E}_{t/2}) \omega_d (t/4)^d.
    \]
    So that
    \[
        \begin{aligned}
            \mathbb{P}\left(h_N > t\right)
            & \le \frac{\mathscr{H}_d(\mathbb{S}^d)}{\omega_d (t/4)^d} \left(1 - \frac{\omega_d (t/4)^d}{\mathscr{H}_d(\mathbb{S}^d)}\right)^N \\
            & \le \frac{\mathscr{H}_d(\mathbb{S}^d)}{\omega_d (t/4)^d} \exp\left( - \frac{\omega_d (t/4)^d}{\mathscr{H}_d(\mathbb{S}^d)} N \right) \\
            & = (C_3 t^d)^{-1} \exp(-C_3 t^d N).
        \end{aligned}
    \]
    Where $C_3 \coloneqq \omega_d / (4^d \mathscr{H}_d(\mathbb{S}^d))$ is a positive constant only depending on $d$.

    Let $\delta = (C_3 t^d)^{-1} \exp(-C_3 t^d N)$, then $t = \left(\mathrm{W}(N / \delta) / (C_3 N) \right)^{1/d}$, where $\mathrm{W}(\cdot)$ is the Lambert W function. Note that $\mathrm{W}(x) < \ln(x)$ when $x > e$, so if $N > \delta e$, then with probability at least $1 - \delta$,
    \[
        h_N \le \left( \frac{\ln(N/\delta)}{C_3 N} \right)^{1/d},
    \]
    which completes the proof.
\end{proof}

\begin{proof}[Proof of Theorem~\ref{thm:ridge}]
    By Lemma~\ref{lem:fill distance}, for any $\delta \in (0, 1)$ and $N \ge 3$, the following holds with probability at least $1 - \delta/2$,
    \[
        h_N^d \le \frac{\ln(2N/\delta)}{C_3 N}.
    \]
    To satisfy the condition $h_N \le h_0$ in Lemma~\ref{lem:power function}, by the monotonicity of $\ln(x)/x$ at $[e, \infty)$, it suffices to set $N \ge N_0 \coloneqq \max\set{2 \ln(2 / (C_3 h_0^d \delta))/(C_3 h_0^d), 3}$. 
    Conditioned on the above $X_N$, by Lemma~\ref{lem:uniform error bound} and Lemma~\ref{lem:power function}, with probability at least $1 - \delta/2$, the following holds for all $x \in \mathbb{S}^d$,
    \[
        \begin{aligned}
            |f_N(x) - f^*(x)| 
            & \le C_0 \sigma_N(x) \sqrt{\ln(1 + \lambda N) \ln(e + 2 C_1 / \delta)} \\
            & \le C_0 C_2^{1/2} \left( \frac{\ln(2N/\delta)}{C_3 N} \right)^{\frac{2 s - d - 1}{2 d}} \sqrt{\ln(1 + \lambda N) \ln(e + 2 C_1 / \delta)}
            \coloneqq U_{N}^{\delta}.
        \end{aligned}
    \]
    By the union bound, with probability at least $1 - \delta$, the above holds for all $x \in \mathbb{S}^d$. Combining with the definition of $f^*$, we have
    \[
        \inf_{x \in \RGNcorr} |f_N(x)| \ge 0.98 - U_{N}^{\delta}, \quad
        \sup_{x \in \RGNnoisy} |f_N(x)| \le U_{N}^{\delta}.
    \]
\end{proof}

\subsection{Kernel Ridge Regression with Decaying Bandwidth}
\label{appx:degenerate}

\begin{definition}
    The \textit{separation distance}, is a measure links to packing in the space. The separation distance is then calculated as:
    \[
        q_{\mathcal{X}, X_N} \coloneqq \inf_{x_i \neq x_j \in X_N} \| x_i - x_j \|.
    \]
\end{definition}

For simplicity, we denote $q_N = q_{\mathcal{X}, X_N}$ in the following.
Note that when $X_N$ are sampled i.i.d. uniformly, the separation distance is of the same order as the fill distance and thus, up to a constant factor, has the same tail bound.

\begin{lemma}[Theorem~2.2 in \citet{reznikov2016covering}]
    \label{lem:seperation distance}
    Under the same conditions as Lemma~\ref{lem:fill distance}, there exist constants $C_1, C_2$ only depending on $d$ such that,
    \[
        \lim_{N \to \infty} \mathbb{P}\left( h_N \ge C_1 \left( \frac{\ln N - C_2 \ln \ln N}{N} \right)^{1/d} \right) = 1.
    \]
\end{lemma}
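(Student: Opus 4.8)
\subsection*{Proof proposal}

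The plan is to establish the matching lower bound on the covering radius $h_N$ by a second-moment argument: with probability tending to $1$ we exhibit a spherical cap of radius $r_N \coloneqq C_1\bigl((\ln N - C_2 \ln\ln N)/N\bigr)^{1/d}$ that is empty of sample points, which immediately forces $h_N \ge r_N$. First I would fix a maximal $2 r_N$-separated subset $\set{c_1, \dots, c_{M_N}} \subset \mathbb{S}^d$ (pairwise distances at least $2 r_N$). By maximality the dilated caps $B(c_i, 2 r_N)$ cover $\mathbb{S}^d$, while the caps $B_i \coloneqq B(c_i, r_N) \cap \mathbb{S}^d$ are pairwise disjoint; combining these two facts with the Ahlfors-David regularity of the sphere (already invoked in the proof of Lemma~\ref{lem:fill distance}) yields two-sided volume bounds $c_d r_N^d \le \mu(B_i) \le C_d r_N^d$ and the packing lower bound $M_N \ge c_d' r_N^{-d}$. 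Observe that if some $B_i$ contains no point of $X_N$ then $\inf_{x_j \in X_N} \|c_i - x_j\| \ge r_N$, hence $h_N \ge r_N$; so it suffices to show $\mathbb{P}(\exists\, i : B_i \cap X_N = \varnothing) \to 1$.

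Set $Z \coloneqq \sum_{i=1}^{M_N} \mathbf{1}\set{B_i \cap X_N = \varnothing}$. Since the $x_j$ are i.i.d.\ uniform, $\mathbb{P}(B_i \cap X_N = \varnothing) = (1 - p_i)^N$ with $p_i \coloneqq \mu(B_i)$, so $\mathbb{E}[Z] = \sum_i (1-p_i)^N \ge M_N (1 - C_d r_N^d)^N$. Using $\ln(1-p) \ge -p - p^2$ for small $p$, together with $C_d r_N^d N = C_d C_1^d (\ln N - C_2 \ln\ln N) \le C_d C_1^d \ln N$ and $r_N^{2d} N = o(1)$, this gives $\mathbb{E}[Z] \gtrsim r_N^{-d} N^{-C_d C_1^d} \gtrsim N^{1 - C_d C_1^d}/\ln N$. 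Choosing the $d$-dependent constant $C_1$ small enough that $C_d C_1^d < 1$ (taking $C_2$ to be any fixed positive constant, which only helps) makes $\mathbb{E}[Z] \to \infty$. For the second moment, disjointness of the caps gives, for $i \neq j$, $\mathbb{P}(B_i, B_j \text{ both empty}) = (1 - p_i - p_j)^N \le (1-p_i)^N (1-p_j)^N$, so $\Cov\bigl(\mathbf{1}\set{B_i \text{ empty}}, \mathbf{1}\set{B_j \text{ empty}}\bigr) \le 0$ and hence $\Var(Z) \le \sum_i (1-p_i)^N = \mathbb{E}[Z]$.

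Finally, Chebyshev's inequality gives $\mathbb{P}(h_N < r_N) \le \mathbb{P}(Z = 0) \le \Var(Z)/\mathbb{E}[Z]^2 \le 1/\mathbb{E}[Z] \to 0$, which is exactly the claimed statement. The probabilistic heart of the argument is thus the elementary second-moment estimate above; the main obstacle I anticipate is purely bookkeeping — verifying the two-sided cap-volume and packing estimates with constants depending only on $d$ (so that the final $C_1, C_2$ do as well), checking that $r_N$ is small enough for the ``$p$ small'' expansions and for the maximal $2 r_N$-separated set to act as a net once $N$ exceeds a $d$-dependent threshold, and confirming that the lower-order $\ln\ln N$ term in the statement is harmless (it is not even needed for this crude $C_1$ and becomes relevant only when pushing $C_1$ toward the sharp constant of \citet{reznikov2016covering}, whose Theorem~2.2 one may alternatively invoke directly after the reduction in the first paragraph).
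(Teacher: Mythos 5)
Your proposal is correct, but it takes a genuinely different route from the paper: the paper gives no argument for this lemma at all, importing it wholesale as Theorem~2.2 of \citet{reznikov2016covering} (just as Lemma~\ref{lem:fill distance} adapts the upper bound from the same source). Your second-moment argument --- packing pairwise disjoint caps $B(c_i,r_N)$ from a maximal $2r_N$-separated set, using Ahlfors--David regularity for the two-sided cap-volume and packing bounds, showing $\mathbb{E}[Z]\gtrsim N^{1-C_dC_1^d}/\ln N\to\infty$ once $C_1$ is chosen small enough that $C_dC_1^d<1$, exploiting the negative correlation $(1-p_i-p_j)^N\le(1-p_i)^N(1-p_j)^N$ of the emptiness indicators to get $\Var(Z)\le\mathbb{E}[Z]$, and concluding $\mathbb{P}(h_N<r_N)\le\mathbb{P}(Z=0)\le 1/\mathbb{E}[Z]\to 0$ by Chebyshev --- is sound, and all constants indeed depend only on $d$. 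The trade-off is the expected one: citing \citet{reznikov2016covering} buys the sharp constants, where the $\ln\ln N$ correction is actually meaningful, whereas your elementary derivation buys self-containedness at the cost of a small, non-sharp $C_1$ and an essentially arbitrary $C_2$; since the paper only uses this lemma through the order of magnitude $h_N\gtrsim((\ln N)/N)^{1/d}$ (to dominate the bandwidth $\ell_N=o(N^{-1/d})$ in Theorem~\ref{thm:degenerate}), your cruder constants are entirely sufficient for the role the lemma plays here. Note also that, despite the lemma's name, the stated quantity is the fill distance $h_N$, and your argument correctly targets exactly that quantity.
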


If we set the bandwidth of KRR sufficiently small, the model learns nothing but memorizes all data points, which results in the excess risk being bounded away from zero. The following Theorem~\ref{thm:degenerate} provides an intuitive explanation for this phenomenon.

\begin{theorem}
    \label{thm:degenerate}
    Under Assumption~\ref{asm:matrix}, and suppose the kernel function has compact support, define as $k_{\ell_N}(x, x') \coloneqq \Psi((x - x') / \ell_N)$, where $\ell_N > 0$ is the bandwidth, $\Psi$ is supported on $B(0, 1)$ and $\Psi(0) > 0$.
    Let $\ell_N = o(N^{-1/d})$, then 
    \[
        \lim_{N \to \infty} \mathbb{P}\left( |f_N(x_i)| \ge \frac{|\Psi(0)|}{|\Psi(0)| + \lambda N}, \quad \text{ for all } i = 1, \dots, N \right) = 1,
    \]
    which implies that the model is able to memorize the data with a weak regularizer $\lambda = O(N^{-1})$.
    However,
    \[
        \lim_{N \to \infty} \mathbb{P}\left( f_N(x) \neq 0 \right) = 0, \quad \text{ for all } x \in \mathcal{X} \setminus X_N,
    \]
    which indicates that even within the correlation region, the predictor fails to learn any correlation.
\end{theorem}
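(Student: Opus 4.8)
The plan is to prove the two displayed claims separately: the memorization claim by showing that with high probability the kernel matrix collapses to $\Psi(0)I_N$, and the failure-to-generalize claim by showing that at any fixed off-sample point the cross-kernel vector $k(x,X_N)$ is identically zero with high probability.

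For the memorization claim I would first establish a probabilistic fact and then a purely linear-algebraic one. \emph{Probabilistic step:} since $X_N$ consists of $N$ i.i.d. uniform points on $\mathbb{S}^d$, the separation distance $q_N$ is, up to constant factors, of the same order as the fill distance (the observation stated just after the definition of $q_N$), so Lemma~\ref{lem:seperation distance} yields $q_N = \Omega\!\big((\ln N/N)^{1/d}\big)$ on an event $\mathcal{E}_N$ with $\mathbb{P}(\mathcal{E}_N)\to 1$. Since $\ell_N = o(N^{-1/d}) = o\!\big((\ln N/N)^{1/d}\big)$, for all large $N$ we have $q_N>\ell_N$ on $\mathcal{E}_N$. \emph{Algebraic step:} on $\mathcal{E}_N$, for each $i\neq j$ we have $\|(x_i-x_j)/\ell_N\|>1$, hence $\Psi((x_i-x_j)/\ell_N)=0$ because $\mathrm{supp}\,\Psi\subseteq B(0,1)$; together with $k_{\ell_N}(x_i,x_i)=\Psi(0)$ this forces $k_{\ell_N}(X_N,X_N)=\Psi(0)I_N$. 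Substituting into the closed form $f_N(x)=k(x,X_N)\big(k(X_N,X_N)+\lambda N I_N\big)^{-1}Y_N$ together with $k(x_i,X_N)=\Psi(0)e_i^{\transpose}$ gives $f_N(x_i)=\tfrac{\Psi(0)}{\Psi(0)+\lambda N}\,y_i$ for every $i$; since the labels satisfy $|y_i|=1$ and $\Psi(0)>0$, we get $|f_N(x_i)|=\tfrac{|\Psi(0)|}{|\Psi(0)|+\lambda N}$ simultaneously for all $i$ on $\mathcal{E}_N$, which proves the first claim.

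For the second claim, fix any deterministic $x\in\mathcal{X}$ (with probability one $x\notin X_N$). The normalized surface measure of the cap $\{z\in\mathbb{S}^d:\|z-x\|\le\ell_N\}$ is $O(\ell_N^d)$, so a union bound gives $\mathbb{P}\big(\exists\,i:\|x-x_i\|\le\ell_N\big)=O(N\ell_N^d)=o(1)$ using $\ell_N=o(N^{-1/d})$. On the complementary event every coordinate of $k(x,X_N)$ vanishes (again by $\mathrm{supp}\,\Psi\subseteq B(0,1)$), so $f_N(x)=0$, which gives $\mathbb{P}(f_N(x)\neq 0)\to 0$. Finally, the degenerate matrix $\Psi(0)I_N$ is consistent with the invertibility hypothesis (Assumption~\ref{asm:matrix}) since $\Psi(0)>0$.

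I expect the only genuinely delicate point to be the probabilistic step above — certifying that the minimum pairwise distance $q_N$ exceeds the bandwidth $\ell_N$ with probability tending to $1$; everything else (the collapse of the kernel matrix, the one-line evaluation of $f_N(x_i)$, and the entire off-sample argument) is then routine. Should one prefer to avoid the fill-distance comparison and instead use the sharp i.i.d. scaling $q_N=\Theta(N^{-2/d})$, the same proof goes through verbatim under the stronger bandwidth condition $\ell_N=o(N^{-2/d})$.
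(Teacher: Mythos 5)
Your proposal follows the paper's proof essentially step for step: the paper likewise argues that $\ell_N < q_N$ holds with high probability for large $N$, so that the kernel matrix collapses to the diagonal matrix with entries $\Psi(0)$, reads off $f_N(x_i)=\frac{\Psi(0)}{\Psi(0)+\lambda N}\,y_i$ from the closed form, and proves the off-sample claim by exactly your union bound, $\mathbb{P}(f_N(x)\neq 0)\le N\mu(B(x,\ell_N))\asymp N\ell_N^d\to 0$.

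The delicate point you flag, however, is real, and the paper's own proof does not resolve it: Lemma~\ref{lem:seperation distance} controls the fill distance $h_N$, and the passage to the separation distance rests on the remark that $q_N$ and $h_N$ are of the same order for i.i.d. uniform samples, which is not correct — by a birthday-paradox effect the minimum pairwise spacing of $N$ i.i.d. points scales like $N^{-2/d}$, not like $(\ln N/N)^{1/d}$. Consequently, for bandwidths in the range $N^{-2/d}\ll\ell_N=o(N^{-1/d})$ the event $\{q_N>\ell_N\}$ has probability tending to zero, close pairs with opposite labels do occur, and at such points $|f_N(x_i)|$ can drop strictly below $\Psi(0)/(\Psi(0)+\lambda N)$ when $\lambda>0$. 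So both your primary argument and the paper's proof establish the memorization claim only under the stronger condition $\ell_N=o(N^{-2/d})$ (your suggested fallback), or after an additional argument handling points that have a neighbor within distance $\ell_N$; the off-sample part is unaffected, since it only uses $N\ell_N^d\to 0$.
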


\begin{proof}
    By Lemma~\ref{lem:seperation distance}, for sufficiently large $N$, we have $\ell_N < q_N$ holds with probability $1$. Therefore, the kernel matrix $k_{\ell_N}(X_N, X_N)$ is diagonally dominant with diagonal entries being $\Psi(0)$ and off-diagonal entries being zero. Hence, the following holds almost surely for all $i = 1, \dots, N$:
    \[
        f_N(x_i) = k_{\ell_N}(x_i, X_N) (k_{\ell_N}(X_N, X_N) + \lambda N I_N)^{-1} Y_N = \frac{\Psi(0)}{\Psi(0) + \lambda N} y_i.
    \]

    For the second part, the result follows directly by applying the compact support of the kernel and the Ahlfors-David regularity of $\mathbb{S}^d$,
    \[
        \mathbb{P}\left( f_N(x) \neq 0 \right) 
        \le \mathbb{P}\left( \bigcup_{i=1}^{N} \set{x \in B(x_i, \ell_N)} \right)
        \le N \mu(B(x, \ell_N)) 
        \asymp N \ell_N^d \to 0, \quad \text{ as } N \to \infty.
    \]
\end{proof}

Theorem~\ref{thm:degenerate} shows that, in order to memorize all data points, the predictor forgoes learning correlations, leading to poor performance even within the correlation region. 
Setting $\lambda = 0$ reduces KRR to kernel interpolation, whose test error behavior in fixed dimensions is known as \textit{tempered overfitting} for kernels with polynomially decaying spectra (e.g., Laplacian kernels), and \textit{catastrophic overfitting} for kernels with exponentially decaying spectra (e.g., Gaussian kernels) \citep{mallinar2022benign,cheng2024characterizing}.

\subsection{Kernel Ridgeless Regression with Benign Overfitting}
\label{appx:ridgeless}

\begin{theorem}
    \label{thm:ridgeless}
    Under Assumptions~\ref{asm:kernel}-\ref{asm:matrix}, and suppose either
    \begin{itemize}
        \item $C_1 d^{\gamma} \le N \le C_2 d^{\gamma}$ for some $\gamma \in \mathbb{R}_+ \setminus \mathbb{Z}$ and $C_1, C_2 > 0$; 
        \enspace or
        \item $k_{c_N, \gamma_N}(x, x') \coloneqq \tilde{k}(x, x') + c_N \check{k}_{\gamma_N}(x, x')$, where $\tilde{k}$ is a universal kernel, $\check{k}_{\gamma_N}$ is the Laplace kernel with bandwidth $\gamma_N > 0$, $c_N \to 0$, $N c_N^4 \to \infty$, and $\gamma_N \le N^{-3/d} (7 \ln N)^{-1}$.
    \end{itemize}
    Then for any $\delta \in (0, 1)$, there exist constants $C_0, N_0, \alpha > 0$, for any $N \ge N_0$, define the uniform upper confidence bound as $U_{N}^{\delta} \coloneqq C_0 \delta^{-1} N^{-\alpha}$, the following holds
    \[
        \begin{aligned}
            & \mathbb{P}\left( \mathbb{E}_{D_N} \left[ |f_N(x)| \ge 0.98 - U_{N}^{\delta} \right] \right) \ge 1 - \delta, & \text{ for all } x \in \RGNcorr, \\
            & \mathbb{P}\left( \mathbb{E}_{D_N} \left[ |f_N(x)| \le U_{N}^{\delta} \right] \right) \ge 1 - \delta, & \text{ for all } x \in \RGNnoisy.
        \end{aligned}
    \]
    Therefore, for any threshold $\tau \in (0, 0.98)$,
    \[
        \liminf_{N \to \infty} \mathbb{P}\left( \mathbb{E}_{D_N} \left[ |f_N(x)| \ge \tau \right] \right) \ge 1 - \delta, \quad \text{ for all } x \in \RGNcorr,
    \]
    which indicates that any hallucination detection criterion with a fixed $\tau$ fails in the correlation regions.
\end{theorem}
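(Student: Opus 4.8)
The plan is to derive both displayed bounds from a single pointwise generalization estimate — for each fixed $x$ in the interior of $\RGNcorr\cup\RGNnoisy$, with probability at least $1-\delta$ over $D_N$ one has $|f_N(x)-f^*(x)|\le U_N^\delta$ — and then read off the conclusions from the fact that the $\epsilon$-buffer keeps $f^*\equiv\pm 0.98$ on $\RGNcorr$ and $f^*\equiv 0$ on $\RGNnoisy$. Indeed, on $\RGNcorr$ this gives $|f_N(x)|\ge 0.98-|f_N(x)-f^*(x)|\ge 0.98-U_N^\delta$, and on $\RGNnoisy$ it gives $|f_N(x)|=|f_N(x)-f^*(x)|\le U_N^\delta$. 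The final $\liminf$ statement is then immediate: since $U_N^\delta=C_0\delta^{-1}N^{-\alpha}\to 0$, for all large $N$ we have $0.98-U_N^\delta>\tau$, so $\{|f_N(x)|\ge 0.98-U_N^\delta\}\subseteq\{|f_N(x)|\ge\tau\}$ and the probability bound passes to the limit.

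For the pointwise estimate I would exploit that kernel interpolation is linear in the labels and split $f_N=\hat f_N^{*}+\hat\xi_N$, where $\hat f_N^{*}$ is the minimum-norm interpolant of the clean values $(f^*(x_i))_i$ and $\hat\xi_N$ that of the centered label noise $\xi_i:=y_i-f^*(x_i)$; note the $\xi_i$ are bounded, $|\xi_i|\le 2$, and conditional on $x_i$ are mean zero with variance at most $1$ (heteroscedastic across regions, but uniformly bounded). The \emph{signal} term is handled by scattered-data interpolation theory alone: Assumption~\ref{asm:function} gives $\|f^*\|_{\mathcal H_k}\le B$, so the reproducing property yields $|\hat f_N^{*}(x)-f^*(x)|\le\sigma_N(x)B$, and combining Lemma~\ref{lem:power function} with the fill-distance tail bound of Lemma~\ref{lem:fill distance} makes this $O\!\big((\ln(N/\delta)/N)^{(2s-d-1)/(2d)}\big)$ uniformly in $x$, with probability at least $1-\delta/2$ (in the high-dimensional scaling of case~(a) the analogous approximation term is instead read off from the low-degree spherical-harmonic component of the benign-overfitting bound used there). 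The \emph{noise} term is where benign overfitting is essential: one must show $\mathbb{E}_{D_N}[\hat\xi_N(x)^2]=O(N^{-\beta})$ for some $\beta>0$, and here rotational symmetry of $\mathbb{S}^d$ together with the dot-product structure of $k$ makes $\mathbb{E}_{D_N}[\hat\xi_N(x)^2]$ independent of $x$, so it suffices to bound the $L^2(\mu)$-averaged variance — precisely the object controlled by existing benign-overfitting analyses.

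The two cases of the hypothesis enter exactly in controlling this noise term. In case~(a), $N\asymp d^\gamma$ with $\gamma\notin\mathbb Z$, I would invoke the high-dimensional kernel-interpolation results of \citet{barzilai2023generalization,zhang2025phase,medvedev2024overfitting}, in which the variance (noise-fitting) term decays at a polynomial rate in $N$; accounting for the uniformly bounded noise gives the required $O(N^{-\beta})$. In case~(b), $k_{c_N,\gamma_N}=\tilde k+c_N\check k_{\gamma_N}$ with bandwidth $\gamma_N\le N^{-3/d}(7\ln N)^{-1}$ far below the i.i.d.\ separation distance $q_N\asymp(\ln N/N)^{1/d}$ (Lemma~\ref{lem:seperation distance}), so — exactly as in the proof of Theorem~\ref{thm:degenerate} — with high probability $\check k_{\gamma_N}(X_N,X_N)$ is the identity up to a constant; the spiky Laplace part then acts as an effective ridge of size $\Theta(c_N)$ on the kernel matrix while affecting $\hat f_N^{*}$ negligibly, because $\tilde k$ is universal and $c_N\to 0$. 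Treating $f_N$ as ridge regression for $\tilde k$ with this vanishing ridge and applying the ``spiky-smooth'' benign-overfitting mechanism of \citet{haas2023mind} — balancing variance against bias under the constraint $Nc_N^4\to\infty$ — again yields $\mathbb{E}_{D_N}[\hat\xi_N(x)^2]=O(N^{-\beta})$. Finally, Markov's inequality turns this second-moment bound into $|\hat\xi_N(x)|\le O(\delta^{-1/2}N^{-\beta/2})$ with probability at least $1-\delta/2$ — the source of the $\delta^{-1}$ factor in $U_N^\delta$ — and a union bound with the signal event produces $|f_N(x)-f^*(x)|\le U_N^\delta:=C_0\delta^{-1}N^{-\alpha}$ with $\alpha:=\min\{(2s-d-1)/(2d),\,\beta/2\}$ and probability at least $1-\delta$.

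The main obstacle is the noise term in each regime: producing clean, explicit polynomial rates $N^{-\beta}$ that are valid uniformly over the correlation and noisy regions and under the composite, $N$-dependent kernel of case~(b) requires checking carefully that Assumptions~\ref{asm:kernel}--\ref{asm:matrix} genuinely carry over to the kernels entering each regime — translation-invariance and H\"older regularity, Sobolev norm-equivalence with $s>(d+1)/2$, $f^*\in\mathcal H_k$, and invertibility of $k(X_N,X_N)$ — and in particular verifying that the $\epsilon$-smoothed target $f^*$ lies in $\mathcal H_{\tilde k}$ for the universal base kernel and that the effective-ridge description of $k_{c_N,\gamma_N}$ can be made rigorous through the separation-distance tail bound rather than only in expectation. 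Once those regime-specific benign-overfitting estimates are in hand, the remaining assembly — triangle inequality, Markov, union bound, and passage to the limit — is routine.
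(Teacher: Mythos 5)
Your overall skeleton (bound $|f_N-f^*|$, then read off the two regions from $f^*\equiv\pm0.98$ on $\RGNcorr$ and $f^*\equiv 0$ on $\RGNnoisy$, then let $U_N^\delta\to 0$) matches the paper, but the core of your argument has a genuine gap: you promise a \emph{pointwise-in-$x$} second-moment bound $\mathbb{E}_{D_N}[\hat\xi_N(x)^2]=O(N^{-\beta})$, and the bridge you offer — ``rotational symmetry of $\mathbb{S}^d$ together with the dot-product structure of $k$ makes $\mathbb{E}_{D_N}[\hat\xi_N(x)^2]$ independent of $x$, so it suffices to bound the $L^2(\mu)$-averaged variance'' — does not hold in this setting. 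First, Assumption~\ref{asm:kernel} only requires translation invariance of $\Psi$, not radial/zonal structure, so the law of $\hat\xi_N(x)$ need not be rotation-invariant. Second, and more importantly, the label noise is heteroscedastic across regions (variance $1-0.98^2$ on $\RGNcorr$ versus $1$ on $\RGNnoisy$), so even with a zonal kernel and uniform design the map $x\mapsto\mathbb{E}_{D_N}[\hat\xi_N(x)^2]$ is not constant. One can partially repair this by dominating the heteroscedastic noise by homoscedastic variance-$1$ noise and using that the region-wise variances are bounded below, but that is an argument you would still have to supply, and it still needs zonality. Beyond this, the cited benign-overfitting results (Propositions~\ref{pro:dimension} and \ref{pro:spike}) control only the $L^2(\mu)$-averaged excess risk, so the pointwise statement is precisely what they do not give you; and in regime (a) your signal bound via Lemmas~\ref{lem:power function} and \ref{lem:fill distance} has exponent $(2s-d-1)/(2d)$ with $d\to\infty$ and $d$-dependent constants, which does not yield a polynomial-in-$N$ rate — your ``read off the low-degree spherical-harmonic component'' is a placeholder, not a proof.

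The paper avoids all of this by proving a weaker statement than the one you aim at. It takes the excess-risk bounds $\mathcal{E}_N=O_{\mathbb{P}}(N^{-2\alpha})$ from Propositions~\ref{pro:dimension} and \ref{pro:spike} entirely as black boxes, applies Jensen's inequality twice to get $\mathbb{E}_{D_N}\mathbb{E}_x\left[|f_N(x)-f^*(x)|\right]\le\sqrt{\mathcal{E}_N}$, and then applies Markov's inequality \emph{over the random test point $x$}: the probability in the theorem is over $x$, so the guarantee holds for all but a $\delta$-fraction of test points (this is exactly where the $\delta^{-1}$ in $U_N^\delta$ comes from), not for every fixed $x$ with high probability over $D_N$ as in your reading. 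No signal/noise decomposition, no symmetry argument, and no effective-ridge analysis of the spiky kernel is needed — those mechanisms live inside the cited propositions. If you want the stronger pointwise-over-$D_N$ conclusion you set out to prove, you would need to redo substantial parts of the benign-overfitting analyses rather than cite them; as written, your proposal does not close that gap.
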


To prove Theorem~\ref{thm:ridgeless}, we first define the excess risk of the kernel interpolation estimator $f_N$ by
\[
    \mathcal{E}_{N} \coloneqq \mathbb{E}_{x, D_N}\left[ (f_N(x) - f^*(x))^2 \right].
\]
A classical approach to achieving benign overfitting with kernel interpolation is to increase the input dimensionality \citep{barzilai2023generalization,zhang2025phase,medvedev2024overfitting}, as detailed in Proposition~\ref{pro:dimension}.

\begin{proposition}[Corollary~3.0.3 in \citet{zhang2025phase}]
    \label{pro:dimension}
    Let $C_1 d^{\gamma} \le N \le C_2 d^{\gamma}$ for some $\gamma \in \mathbb{R}_+ \setminus \mathbb{Z}$ and $C_1, C_2 > 0$. Under some technical assumptions on the spectrum of kernel $k$ and the smoothness of $f^*$, there exists a constant $\alpha > 0$ only depending on $\gamma, k$ and $d$, such that the excess risk $\mathcal{E}_{N}$ of kernel interpolation estimator $f_N$ satisfies
    \[
        \mathcal{E}_{N} = O_{\mathbb{P}} \left( N^{-2\alpha} \right) \quad \text{ as } N, d \to \infty.
    \]
\end{proposition}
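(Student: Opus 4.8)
The plan is to follow the spectral analysis of kernel interpolation in the polynomial scaling regime $N \asymp d^{\gamma}$. First I would apply the Mercer decomposition of the dot-product kernel on $\mathbb{S}^d$ into spherical harmonics, $k(x,x') = \sum_{\ell \ge 0} \mu_{\ell} \sum_{j=1}^{N_{d,\ell}} Y_{\ell j}(x)\,Y_{\ell j}(x')$, where $N_{d,\ell} \asymp d^{\ell}$ is the multiplicity of degree $\ell$ and $\mu_{\ell}$ the corresponding eigenvalue; the ``technical assumptions on the spectrum'' pin down the decay of $\mu_{\ell}$ in $d$, and those on $f^{*}$ control the decay of its harmonic energy $\|(f^{*})_{\ell}\|_{L^2}^2$. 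Setting $\kappa = \lfloor \gamma \rfloor$, the key structural observation is that, because $\gamma \notin \mathbb{Z}$, one has the strict separation $\sum_{\ell \le \kappa} N_{d,\ell} \asymp d^{\kappa} \ll N \ll d^{\kappa+1} \asymp N_{d,\kappa+1}$. Thus the degree-$\le\kappa$ subspace is low-dimensional relative to the sample and can be fit exactly, while every higher-degree subspace is far larger than $N$ and acts like isotropic noise; the non-integer condition is precisely what keeps us away from the ``descent'' boundaries where some $N_{d,\ell} \asymp N$ and the concentration below fails.

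Second, I would make the \emph{effective-ridge reduction} precise. Write the Gram matrix as $K = \sum_{\ell\le\kappa}\mu_{\ell}\Phi_{\ell}\Phi_{\ell}^{\transpose} + \sum_{\ell>\kappa}\mu_{\ell}\Phi_{\ell}\Phi_{\ell}^{\transpose}$, with $\Phi_{\ell}\in\mathbb{R}^{N\times N_{d,\ell}}$ the matrix of degree-$\ell$ harmonics at the sample points. Using the Gegenbauer addition formula together with the concentration of $\langle x_i,x_j\rangle$ around $0$ at scale $d^{-1/2}$, one shows $\sum_{\ell>\kappa}\mu_{\ell}\Phi_{\ell}\Phi_{\ell}^{\transpose} = \lambda_{\mathrm{eff}}I_N + E$, where $\lambda_{\mathrm{eff}} \coloneqq \sum_{\ell>\kappa}\mu_{\ell}N_{d,\ell}$ and $\|E\|_{\mathrm{op}}$ is negligible relative to $\lambda_{\mathrm{eff}}$ with high probability; simultaneously $(1/N)\Phi_{\ell}^{\transpose}\Phi_{\ell}$ concentrates for $\ell\le\kappa$, so the low-degree block is well conditioned on its range with nonzero eigenvalues $\asymp N\mu_{\ell}$, which dominate $\lambda_{\mathrm{eff}}$ under the spectral-decay assumption. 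Hence $K$ is, up to a controllable relative perturbation, the resolvent of degree-$\le\kappa$ kernel ridge regression with ridge parameter $\lambda_{\mathrm{eff}}$, and $f_N(x) = k(x,X_N)K^{-1}Y_N$ inherits this reduction. \emph{This matrix-concentration step — operator-norm control of the fluctuation $E$, plus the lower bound on the conditioning of the low-degree block — is the main obstacle;} it requires hypercontractivity / moment bounds on the sphere (or a matrix-Bernstein argument tailored to the heavy-tailed degree-$\ell$ features), and is exactly where the non-integer $\gamma$ is essential.

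Finally, with the effective ridge in hand I would close by the standard bias–variance decomposition of $\mathcal{E}_{N} = \mathbb{E}_{x,D_N}[(f_N(x)-f^{*}(x))^2]$ in the now finite-dimensional degree-$\le\kappa$ feature space. The bias splits into (i) the approximation error $\sum_{\ell>\kappa}\|(f^{*})_{\ell}\|_{L^2}^2$ from discarding high-degree content of $f^{*}$, which the smoothness assumption forces to be polynomially small in $d$ (hence in $N$), and (ii) the ridge shrinkage bias on the degree-$\le\kappa$ part, of order $(\lambda_{\mathrm{eff}}/(N\mu_{\kappa}))^2$, which vanishes polynomially since $\lambda_{\mathrm{eff}}\to 0$. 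The variance is the noise level times an effective degrees-of-freedom quantity: the signal-carrying subspace has dimension $\asymp d^{\kappa} = o(N)$, while the residual noise interpolated in the tail is damped by the $\lambda_{\mathrm{eff}}$-floor acting over a subspace of dimension $\asymp N$, so it too evaluates to a polynomial power of $N$ under the spectral-decay assumption. Optimizing the resulting exponents yields $\mathcal{E}_{N} = O_{\mathbb{P}}(N^{-2\alpha})$ with $\alpha>0$ depending only on $\gamma-\kappa$ (the distance to the nearest integer boundary), the spectral decay rate of $k$, and the smoothness index of $f^{*}$, the probability being over the random design $X_N$ and the labels $Y_N$; matching these hypotheses to the listed assumptions recovers Corollary~3.0.3 of \citet{zhang2025phase}.
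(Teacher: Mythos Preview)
The paper does not actually prove this proposition: it is quoted verbatim as Corollary~3.0.3 of \citet{zhang2025phase} and used as a black box in the proof of Theorem~\ref{thm:ridgeless}, so there is no in-paper argument to compare your proposal against. That said, your sketch is the correct and standard route to results of this type in the polynomial regime $N\asymp d^{\gamma}$: the spherical-harmonic Mercer decomposition, the effective-ridge reduction in which the degree-$>\lfloor\gamma\rfloor$ block of the Gram matrix concentrates to $\lambda_{\mathrm{eff}} I_N$ while the degree-$\le\lfloor\gamma\rfloor$ block is well conditioned, and the ensuing bias--variance computation are exactly the ingredients used in the high-dimensional kernel literature (Ghorbani--Mei--Misiakiewicz--Montanari, Mei--Montanari, Liang--Rakhlin, and, as the label indicates, \citet{zhang2025phase}). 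Your identification of the non-integer $\gamma$ as the mechanism that keeps $N$ bounded away from any multiplicity $N_{d,\ell}\asymp d^{\ell}$, and of the matrix-concentration step for the tail fluctuation $E$ as the main technical hurdle, is accurate. One small caveat: you write that $\alpha$ depends on $d$ in your summary, but in the asymptotic regime $N,d\to\infty$ jointly with $N\asymp d^{\gamma}$, the exponent should depend only on $\gamma$, the spectral decay exponent of $k$, and the smoothness index of $f^{*}$---the $d$-dependence in the statement of the proposition is a slight imprecision in the paper's phrasing, not something you should replicate.
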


While in finite dimensions, the benign overfitting of kernel interpolation can be achieved by just adding a sharp kernel spike to a common kernel \citep{haas2023mind}.

\begin{proposition}[Theorem~G.5 in \citet{haas2023mind}]
    \label{pro:spike}
    Under Assumptions~\ref{asm:kernel}-\ref{asm:function}. Further, assume the kernel function is define as $k_{c_N, \gamma_N}(x, x') \coloneqq \tilde{k}(x, x') + c_N \check{k}_{\gamma_N}(x, x')$, where $\tilde{k}$ is a universal kernel, $\check{k}_{\gamma_N}$ is the Laplace kernel with bandwidth $\gamma_N > 0$.
    If $c_N \to 0$, $N c_N^4 \to \infty$, and $\gamma_N \le N^{-3/d} (7 \ln N)^{-1}$, then there exists a constant $\alpha > 0$ only depending on $k$ and $d$, such that the excess risk $\mathcal{E}_{N}$ of kernel interpolation estimator $f_N$ satisfies
    \[
        \mathcal{E}_{N} = O_{\mathbb{P}} \left( N^{-2\alpha} \right) \quad \text{ as } N \to \infty.
    \]
\end{proposition}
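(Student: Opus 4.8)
The plan is to treat Propositions~\ref{pro:dimension} and~\ref{pro:spike} as the sole source of ``new'' probabilistic content and to reduce the theorem to elementary moment manipulations. First I would dispatch the two bullets separately: under $C_1 d^{\gamma} \le N \le C_2 d^{\gamma}$ with $\gamma \notin \mathbb{Z}$, I invoke Proposition~\ref{pro:dimension} (the high-dimensional regime); under the spiked kernel $k_{c_N,\gamma_N} = \tilde k + c_N \check k_{\gamma_N}$ with $c_N \to 0$, $N c_N^4 \to \infty$, $\gamma_N \le N^{-3/d}(7\ln N)^{-1}$, I invoke Proposition~\ref{pro:spike} (the fixed-dimensional regime). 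In each case the work is to verify that the technical spectral and smoothness assumptions quoted by those propositions are supplied by our Assumptions~\ref{asm:kernel}--\ref{asm:matrix}: translation-invariance and H\"older regularity of $\Psi$ from Assumption~\ref{asm:kernel}; polynomial eigenvalue decay via the Sobolev norm-equivalence of Assumption~\ref{asm:space} (dot-product kernels on $\mathbb{S}^d$ have the required decay); invertibility of $k(X_N,X_N)$ from Assumption~\ref{asm:matrix}; and --- the delicate point --- $f^* \in \mathcal{H}_k(\mathbb{S}^d) = W_2^{s}(\mathbb{S}^d)$ with $s > (d+1)/2$, which is exactly what Assumption~\ref{asm:function} together with the $\epsilon$-collar inserted into the target guarantees (the raw step-indicator would violate this, hence the collar). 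The output of this step is a single line: there is $\alpha > 0$ with $\mathcal{E}_N \coloneqq \mathbb{E}_{x,D_N}[(f_N(x)-f^*(x))^2] = O_{\mathbb{P}}(N^{-2\alpha})$.

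Second, I would turn this $L^2$ bound into the pointwise confidence statements. Let $e_N(x) \coloneqq \mathbb{E}_{D_N}[\,|f_N(x)-f^*(x)|\,]$. By Cauchy--Schwarz and Fubini, $\int_{\mathbb{S}^d} e_N \, d\mu \le (\mathbb{E}_{x,D_N}[(f_N(x)-f^*(x))^2])^{1/2} = O_{\mathbb{P}}(N^{-\alpha})$; absorbing the $O_{\mathbb{P}}$ into a $D_N$-event of probability at least $1-\delta$ --- using, if the cited results give only a convergence-in-probability statement, the crude almost-sure bound $\|f_N\|_\infty = \mathrm{poly}(N)$ to kill the residual tail --- produces a constant $C_0 = C_0(\delta)$ with $\int_{\mathbb{S}^d} e_N \, d\mu \le C_0 N^{-\alpha}$ for all $N$ large. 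Since $\mu(\RGNcorr)$ and $\mu(\RGNnoisy)$ are bounded below, restricting the integral and applying Markov at level $U_N^{\delta} = C_0 \delta^{-1} N^{-\alpha}$ shows that $e_N(x) \le U_N^{\delta}$ for all $x$ in the region in question outside an exceptional set of $\mu$-measure at most $\delta$, which is the content of the outer probability bound in the statement. Combining with $|f^*| = 0.98$ on $\RGNcorr$ (away from the $O(\epsilon)$-collar, absorbed into $U_N^{\delta}$) and $f^* \equiv 0$ on $\RGNnoisy$ yields $\mathbb{E}_{D_N}|f_N(x)| \ge 0.98 - U_N^{\delta}$ on $\RGNcorr$ and $\mathbb{E}_{D_N}|f_N(x)| \le U_N^{\delta}$ on $\RGNnoisy$, i.e.\ the two displayed bounds.

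Finally, the $\liminf$ claim is immediate given the first display: for fixed $\tau \in (0,0.98)$ choose $N$ so large that $U_N^{\delta} = C_0 \delta^{-1} N^{-\alpha} < 0.98 - \tau$; then $\{|f_N(x)| \ge 0.98 - U_N^{\delta}\} \subseteq \{|f_N(x)| \ge \tau\}$, so the latter event has probability at least $1-\delta$, and taking $\liminf_{N\to\infty}$ finishes it --- the fixed-threshold confidence detector thus keeps flagging unseen points of $\RGNcorr$ as non-hallucinations. I expect the main obstacle to be the first step: pinning down that the mollified-indicator target on the sphere, the eigenvalue decay of NNGP/NTK-type dot-product kernels, and (in the second bullet) the precise interplay of $c_N, \gamma_N$ with that decay actually meet the hypotheses Propositions~\ref{pro:dimension} and~\ref{pro:spike} import from \citet{zhang2025phase} and \citet{haas2023mind}; once the excess-risk rate $N^{-2\alpha}$ is in hand, the Fubini/Markov/threshold bookkeeping --- including making the constant in $U_N^{\delta}$ explicitly $\delta^{-1}$ --- is routine.
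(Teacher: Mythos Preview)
Your proposal does not address the stated proposition. Proposition~\ref{pro:spike} is a cited result --- Theorem~G.5 of \citet{haas2023mind} --- and the paper does not prove it; it is imported as a black box. What you have written is a proof sketch for Theorem~\ref{thm:ridgeless}, the theorem that \emph{uses} Proposition~\ref{pro:spike} (together with Proposition~\ref{pro:dimension}) as input. Indeed, your first paragraph explicitly invokes Proposition~\ref{pro:spike} as a lemma, so you cannot be proving it.

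If the intended target was actually Theorem~\ref{thm:ridgeless}, then your approach coincides with the paper's: both pass from the $L^2$ excess risk $\mathcal{E}_N$ to $\mathbb{E}_x\,\mathbb{E}_{D_N}|f_N(x)-f^*(x)| \le \sqrt{\mathcal{E}_N}$ via Jensen/Cauchy--Schwarz, then apply Markov over $x$ to obtain a pointwise-in-$x$ bound on $\mathbb{E}_{D_N}|f_N(x)-f^*(x)|$, and finally substitute the rate $N^{-2\alpha}$ supplied by the two propositions. The paper's argument is three lines; your additional concerns about converting the $O_{\mathbb{P}}$ into a deterministic bound, verifying that Assumptions~\ref{asm:kernel}--\ref{asm:matrix} feed the cited propositions, and restricting Markov to regions of positive measure are reasonable hygiene but do not depart from the paper's route.
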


\begin{proof}[Proof of Theorem~\ref{thm:ridgeless}]
    Recall the definition of excess risk,
    \[
        \mathcal{E}_{N} = \mathbb{E}_{x, D_N}\left[ (f_N(x) - f^*(x))^2 \right].
    \]
    By using the Jensen's inequality twice, we have
    \[
        \begin{aligned}
            \mathbb{E}_{D_N} \mathbb{E}_{x} \left[ |f_N(x) - f^*(x)| \right]
            & \le \mathbb{E}_{D_N} \sqrt{\mathbb{E}_{x} \left[ (f_N(x) - f^*(x))^2 \right]} \\
            & \le \sqrt{\mathbb{E}_{D_N} \mathbb{E}_{x} \left[ (f_N(x) - f^*(x))^2 \right]} \\
            &= \sqrt{\mathcal{E}_N}.
        \end{aligned}
    \]
    Then by Markov's inequality, for any $\delta \in (0, 1)$ and $x \in \mathbb{S}^d$,
    \[
        \mathbb{P}\left( \mathbb{E}_{D_N} \left[ |f_N(x) - f^*(x)| \right] \ge \delta^{-1} \sqrt{\mathcal{E}_N} \right) \le \delta.
    \]
    The proof is completed by combining the above result with Proposition~\ref{pro:dimension} and Proposition~\ref{pro:spike}.
\end{proof}

\section{Neural Kernels}
\label{appx:neural kernels}

\paragraph{Neural Tangent Kernels.}

For an over-parametrized neural network of most architectures (e.g., multi-layer perceptron (MLP), residual network (ResNet), convolutional neural network (CNN), Transformer) under standard initialization (also known as the LeCun initialization) or neural tangent parametrization \citep{jacot2018neural}, the training dynamics of its output $f : \mathbb{R}^d \to \mathbb{R}$ can be tracked by the kernel gradient descent
\[
    \partial_t f_t(x)
    = -\eta \frac{1}{N} \Theta_{\theta_t}(x, X_N) \ell'(f_t(X_N), Y_N),
\]
where $\eta > 0$ is the learning rate, $\Theta_{\theta_t}(x, x') \coloneqq \nabla_{\theta_t} f_t(x)^{\transpose} \nabla_{\theta_t} f_t(x')$ is the \textit{neural tangent kernel} (NTK) and $\ell(\cdot, \cdot)$ is the loss function.

In the large width limit, the NTK converges to a deterministic kernel $\Theta$ and remains constant during training \citep{lee2019wide,arora2019exact,yang2020neural,yang2021architectural}. For the MSE loss $\ell(f(x), y) = \frac{1}{2}(f(x) - y)^2$, the solution of the kernel gradient descent has a closed form
\[
    f_t(X_N) = e^{-t \eta N^{-1} \Theta(X_N, X_N)} f_0(X_N) + \left( I - e^{-t \eta N^{-1} \Theta(X_N, X_N)} \right) Y_N.
\]
So that
\[
    f_t(x) = f_0(x) + \Theta(x, X_N) \Theta(X_N, X_N)^{-1} \left( I - e^{-t \eta N^{-1} \Theta(X_N, X_N)} \right) (Y_N - f_0(X_N)).
\]

If we take $t \to \infty$ first and scale the initial output $f_0$ to be sufficiently small, then the network output converges to kernel ridgeless regression in the large width limit \citep{arora2019exact}, defined as
\[
    f_{\infty}(x) = \Theta(x, X_N) \Theta(X_N, X_N)^{-1} Y_N.
\]

\paragraph{Neural Network Gaussian Processes.}

Moreover, if we train only the last layer and freeze all other layers after initialization, the evolution of the network output is governed by kernel gradient descent with a different kernel, $\Sigma$, namely the \textit{neural network Gaussian process} (NNGP) kernel \citep{neal1996priors,lee2018deep,matthews2018gaussian}. The corresponding kernel gradient descent yields \citep{lee2019wide}
\[
    f_t(x) = f_0(x) + \Sigma(x, X_N) \Sigma(X_N, X_N)^{-1} \left( I - e^{-t \eta N^{-1} \Sigma(X_N, X_N)} \right) (Y_N - f_0(X_N)).
\]

\paragraph{Equivalence to General Kernels.}

Both the NNGP and NTK kernels of neural networks are dot-product kernels, and indeed any dot-product kernel can be achieved as the NNGP kernel or NTK of a suitably constructed neural network \citep{simon2022reverse}. 
Moreover, neural kernels derived from appropriately selected activation functions exhibit the same properties as a broad class of kernels through the norm equivalence of reproducing kernel Hilbert spaces (RKHS) \citep{holzmuller2025beyond}.

\section{Additional results}

\label{appendix:more_results}

\subsection{Case Study of Hallucination in SimpleQA}
\label{subsec:case_study_simpleqa}

In our analysis of the SimpleQA dataset, we observe numerous instances where hallucinations appear to be driven by spurious co-occurrence. Specifically, the model tends to output answers that have a strong statistical association (high Jaccard similarity) with entities in the question, even when those answers are factually incorrect.

To illustrate this phenomenon, we present a representative example involving an academic entity:

\begin{quote}
    \textbf{Question:} To which academic society was computer scientist Sarita Vikram Adve elected in 2020? \\
    \textbf{Model Output:} Association for Computing Machinery \\
    \textbf{Ground Truth:} American Academy of Arts and Sciences
\end{quote}

Although the model highly likely encountered the correct fact during pre-training\footnote{\url{https://en.wikipedia.org/wiki/Sarita_Adve}}, it fails to retrieve it. Instead, it outputs ``Association for Computing Machinery'' (ACM). 

As analyzed in Table~\ref{tab:jaccard_case_study}, this error aligns with the spurious correlation strength. The generic term ``computer scientist'' has a significantly higher Jaccard similarity with the hallucinated answer (\textbf{0.0785}) compared to the ground truth (\textbf{0.0099}). This suggests that the model falls back on the strong prior heuristic—\textit{Computer Scientists are often linked to ACM}—overriding the specific factual constraint of the individual named in the prompt.

\begin{table}[h]
    \centering
    \caption{\textbf{Jaccard Similarity Analysis.} We measure the co-occurrence strength between entities in the question and the answers. In this case, the hallucinated answer exhibits a much stronger correlation with the profession entity (``computer scientist'') than the ground truth does.}
    \label{tab:jaccard_case_study}
    \resizebox{\linewidth}{!}{
    \begin{tabular}{lcc}
        \toprule
        & \multicolumn{2}{c}{\textbf{Question Entities}} \\
        \cmidrule(lr){2-3}
        \textbf{Answer Entities} & \textbf{Generic: ``computer scientist''} & \textbf{Specific: ``Sarita Vikram Adve''} \\
        \midrule
        \textit{Model Output (Hallucination):} & & \\
        Association for Computing Machinery & \textbf{0.0785} & 0.0006 \\
        \midrule
        \textit{Ground Truth:} & & \\
        American Academy of Arts and Sciences & 0.0099 & 0.0002 \\
        \bottomrule
    \end{tabular}
    }
\end{table}

\subsection{Spurious Correlation Induced by Style}

\label{app:style_correlation}

In addition to the semantic spurious correlations discussed in the main text (i.e., synthetic surname-attribute mappings and real-world entity co-occurrence), we further investigate the impact of \textit{style correlation}. Here, ``style'' refers to superficial textual properties—such as formality, emotional tone, or template structure—that are not causally related to the ground-truth answer but may act as heuristics for the model.

In real-world data, such correlations naturally arise; for instance, mathematical or scientific content is often concise and formal, whereas literary content tends to be narrative. If a model relies on these stylistic priors rather than factual reasoning, it may hallucinate when the prompt's style superficially resembles contexts historically associated with a certain answer type.
\begin{figure}[b]
    \centering
    \includegraphics[width=0.5\linewidth]{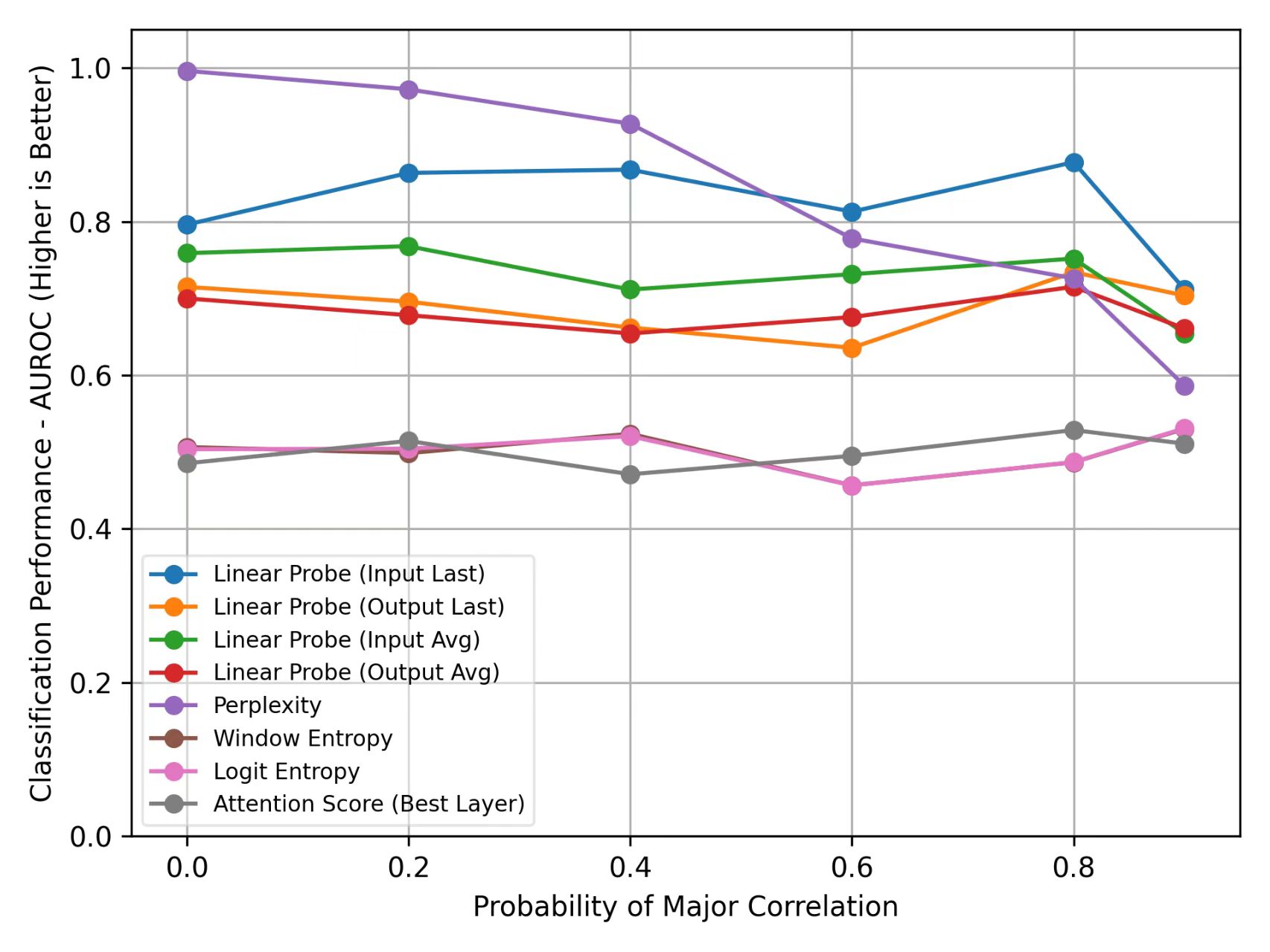}
    \caption{\textbf{Impact of Style Correlation on Detection Performance.} We plot the AUROC of various hallucination detection methods across varying strengths of style correlation $\rho_{\text{style}}$.}
    \label{fig:style_detection}
\end{figure}
\paragraph{Experimental Setup}
To isolate this effect, we extend our synthetic data setting by introducing a controllable correlation between \textit{question templates} and an \textit{auxiliary attribute} (e.g., profession) during the Supervised Fine-Tuning (SFT) stage. 
Specifically, we control the correlation strength $\rho_{\text{style}}$: with probability $\rho_{\text{style}}$, a specific attribute (profession) is queried using a specific, fixed template structure; otherwise, the template is sampled uniformly from all available formats.
For RFT, as in the previous method in section~\ref{sec:exp}, we apply this template correlation only to the unknown person, while uniformly sampling templates for the known person. This creates a spurious correlation that the model might over- or under-reject, mistakenly relying on the question template.
We maintain the same model architecture and training hyperparameters as in the main experiments.

\paragraph{Results of Detection Methods}

As illustrated in Figure~\ref{fig:style_detection}, while Perplexity initially achieves near-perfect performance (AUROC $\approx 1.0$) when correlation is absent, it suffers a severe collapse as the spurious style correlation strengthens; consistently, most other detection methods also exhibit a general performance degradation as the format correlation increases. This corresponds to the results in the Section~\ref{sec:exp}.

\paragraph{Results of RFT}
The experimental results are presented in Figure~\ref{fig:RFT_style}. We observe that stronger style correlations negatively impact model performance in two ways. First, for known individuals (whom the model should answer correctly), the QA accuracy declines as the correlation strength increases. Second, and more critically, the refusal mechanism for unknown individuals degrades: as the correlation intensifies, the model fails to explicitly reject unknown questions (i.e., the refusal rate drops), leading to increased hallucinations.

\begin{figure}[htbp]
\centering
\begin{minipage}{0.46\linewidth}
\centering
\includegraphics[width=\linewidth]{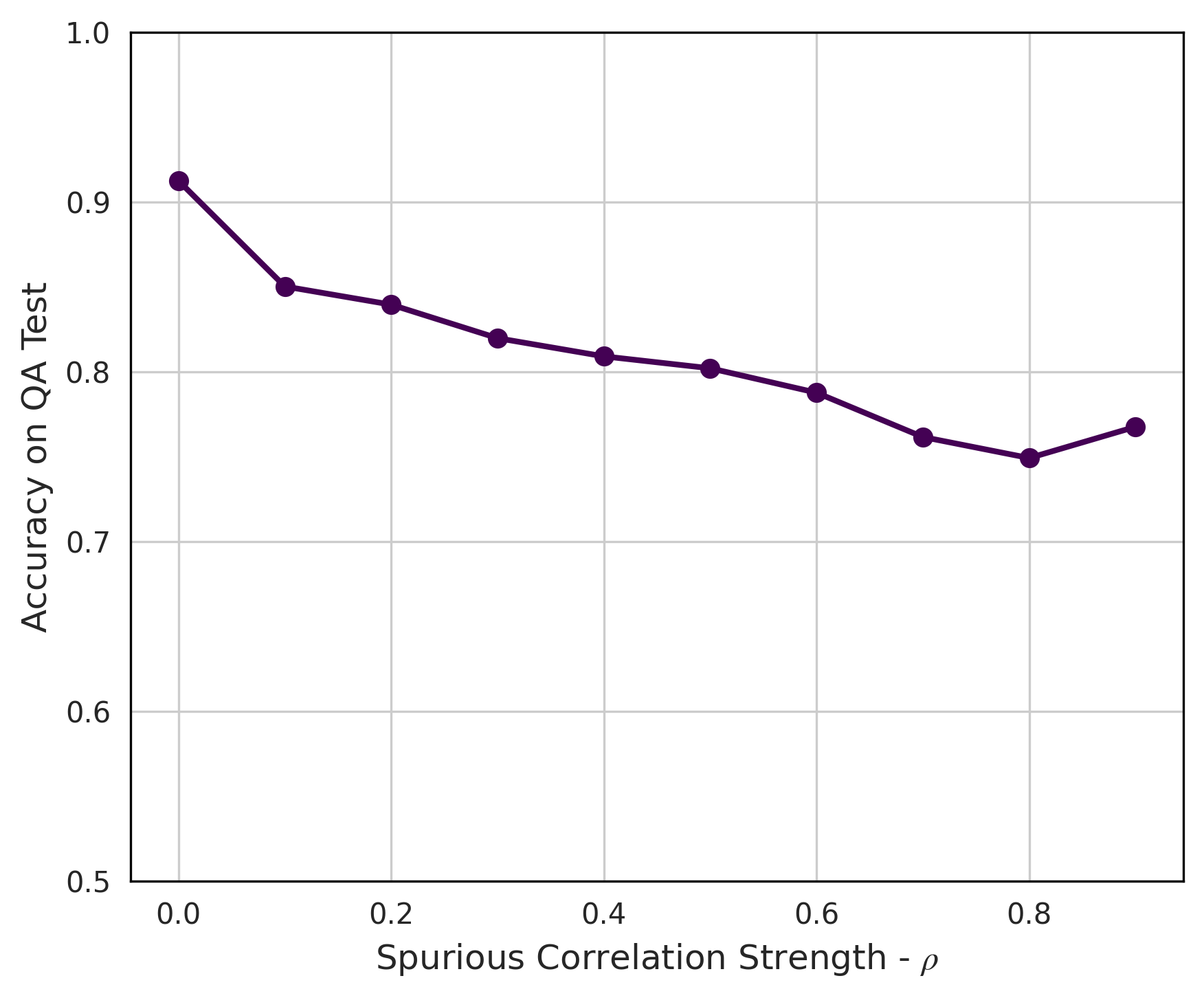}
\end{minipage}
\hfill
\begin{minipage}{0.46\linewidth}
\centering
\includegraphics[width=\linewidth]{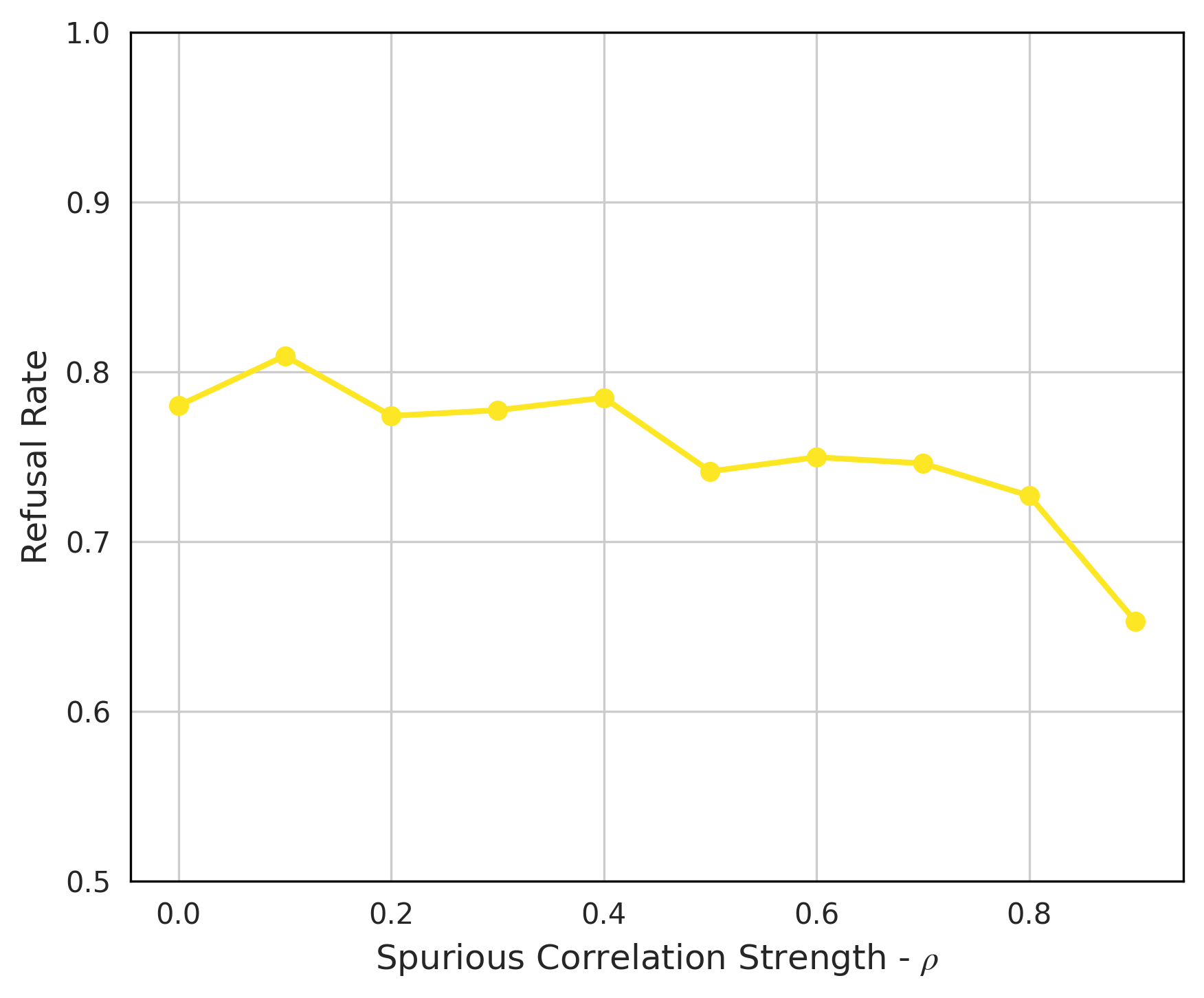}
\end{minipage}
\caption{
\textbf{Impact of Style Correlation on RFT Performance.}
\textbf{Left:} \textbf{Accuracy on Known Individuals:} As the correlation strength increases, the model's ability to correctly answer questions about known entities decreases.
\textbf{Right:} \textbf{Refusal Rate on Unknown Individuals:} The model's safety mechanism is compromised under strong correlation, resulting in a significant drop in refusal rate (i.e., the model fails to say ``I don't know'' and instead hallucinates).
}
\label{fig:RFT_style}
\end{figure}

\subsection{Hallucination Detection Algorithms}

In this section, we provide additional details of the experiments described in \Secref{sec:exp}. 
As mentioned earlier, we introduce a deterministic mapping between a surname and its associated attribute, together with a correlation coefficient $\rho \in [0,1]$ representing the probability that a surname fully determines the attribute. We then examine the probability that the model output exactly matches the attribute specified by this mapping on hallucinated samples (i.e., individuals that do not exist in the training data), in order to evaluate how much the model is influenced by this spurious correlation. \Figref{fig:correspond_probability} shows that when $\rho$ is large, the model tends to generate outputs consistent with the pre-defined mapping.

\begin{figure*}[b]
  \centering
  \begin{minipage}[t]{0.4\linewidth}
    \centering
    \includegraphics[width=\linewidth]{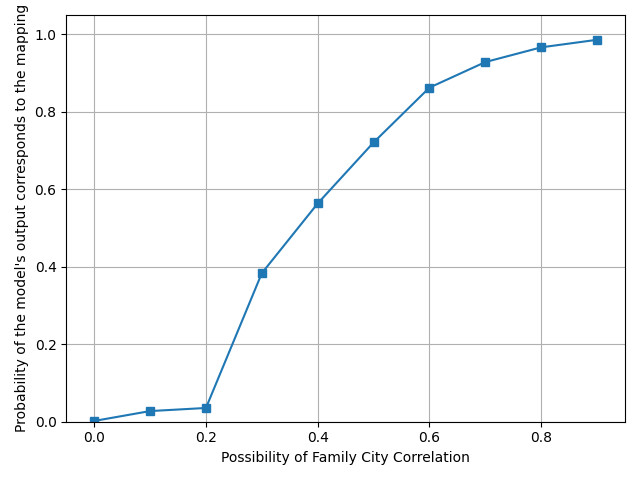}
  \end{minipage}\hfill
  \begin{minipage}[t]{0.4\linewidth}
    \centering
    \includegraphics[width=\linewidth]{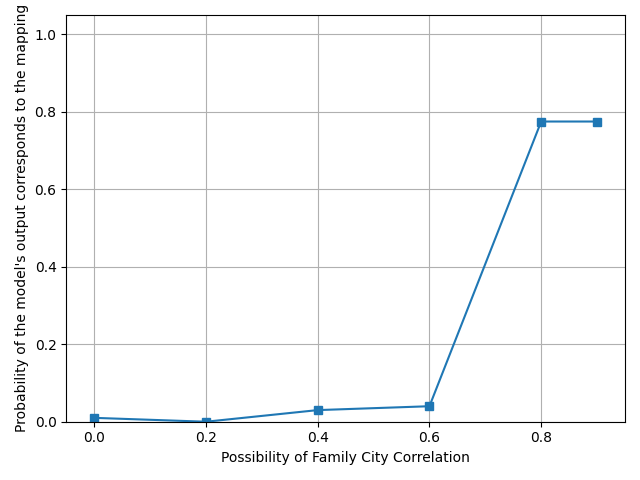}
  \end{minipage}
  \caption{
  Probability that the model's output corresponds to the predefined deterministic mapping versus $\rho$.
  \textbf{Left:} Experimental results of models learned from scratch.
  \textbf{Right:} Experimental results of models finetuned from SmolLM2-1.7B. As $\rho$ increases, the model is more likely to generate outputs that conform to the pre-defined mapping, indicating a stronger reliance on the spurious correlation.}
  \label{fig:correspond_probability}
\end{figure*}

Furthermore, we provide accuracy and TPR@5\%FPR of detection methods for experiments in \Secref{sec:exp} (Figures~\ref{fig:bios_smollm_acc} and \ref{fig:bios_smollm_tpr}). Across all evaluation metrics (accuracy, TPR@5\%FPR, and AUROC in \Secref{sec:exp}), performance consistently declines as $\rho$ increases, suggesting that spurious correlation systematically undermines hallucination detection methods.

\begin{figure*}[t]
  \centering
  \begin{minipage}[t]{0.4\linewidth}
    \centering
    \includegraphics[width=\linewidth]{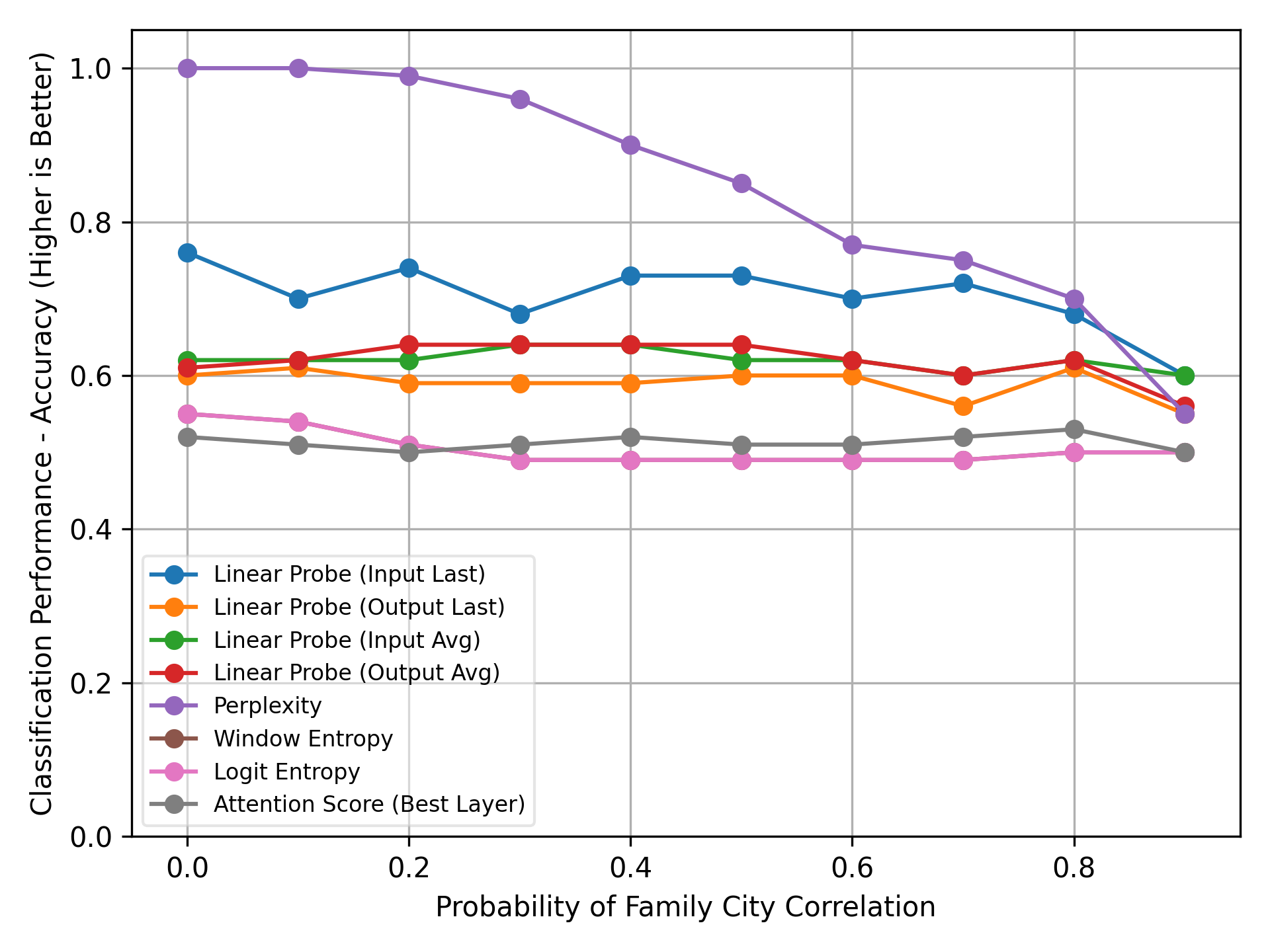}
  \end{minipage}\hfill
  \begin{minipage}[t]{0.4\linewidth}
    \centering
    \includegraphics[width=\linewidth]{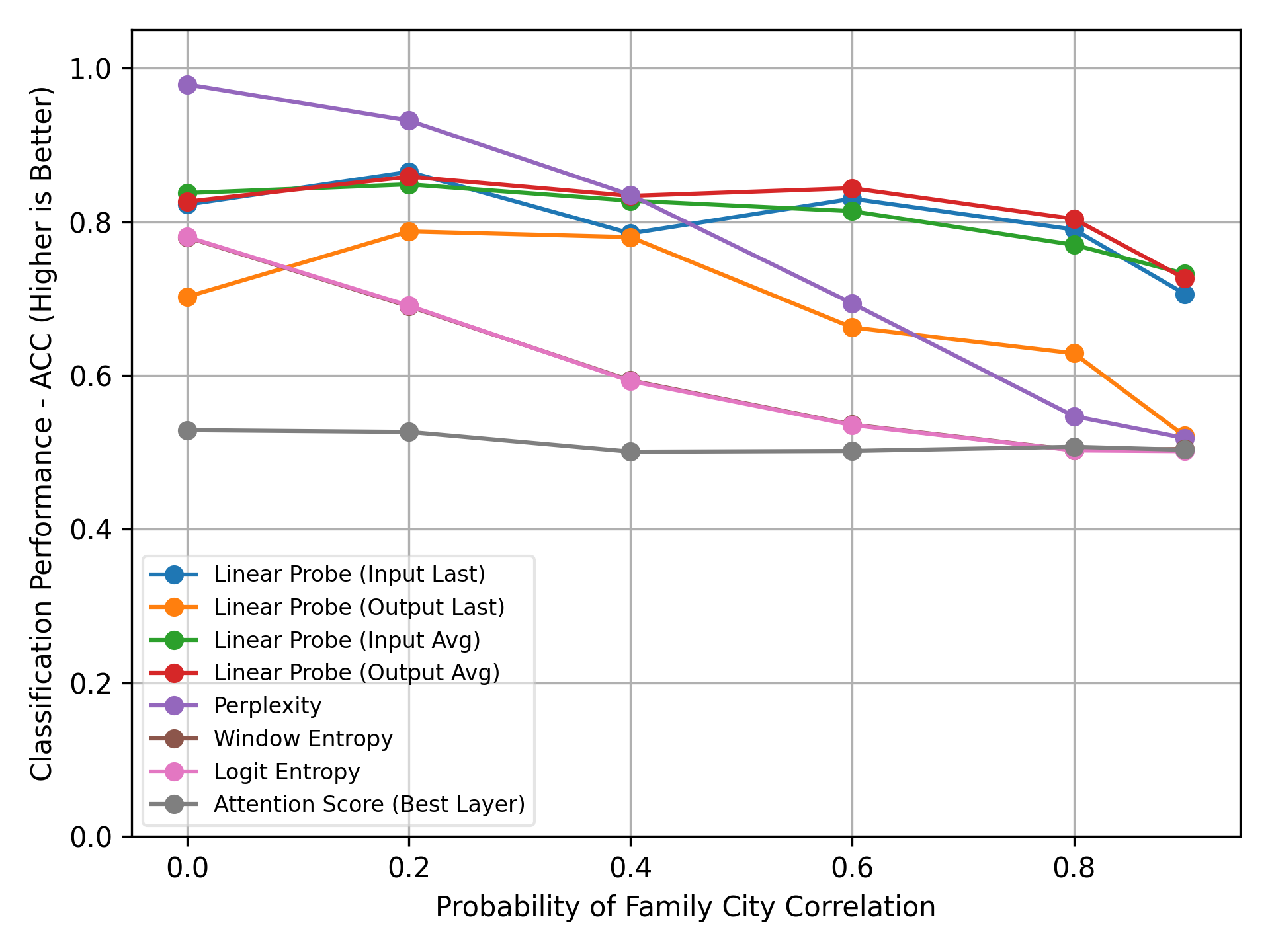}
  \end{minipage}
  \caption{
  Accuracy of different hallucination detection methods versus $\rho$.
  \textbf{Left:} Experimental results of models learned from scratch.
  \textbf{Right:} Experimental results of models finetuned from SmolLM2-1.7B.}
  \label{fig:bios_smollm_acc}
\end{figure*}

\begin{figure*}[t]
  \centering
  \begin{minipage}[t]{0.4\linewidth}
    \centering
    \includegraphics[width=\linewidth]{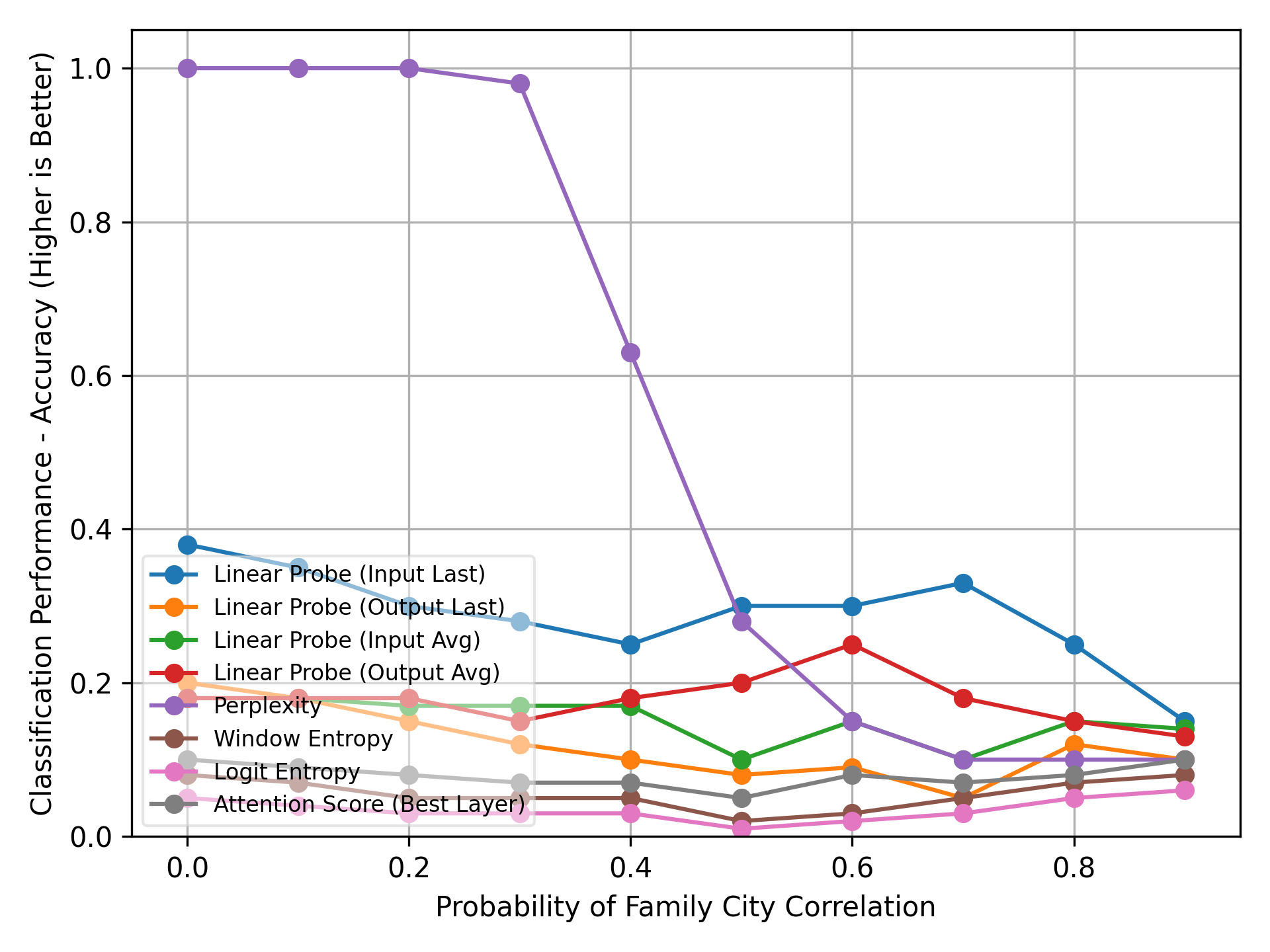}
  \end{minipage}\hfill
  \begin{minipage}[t]{0.4\linewidth}
    \centering
    \includegraphics[width=\linewidth]{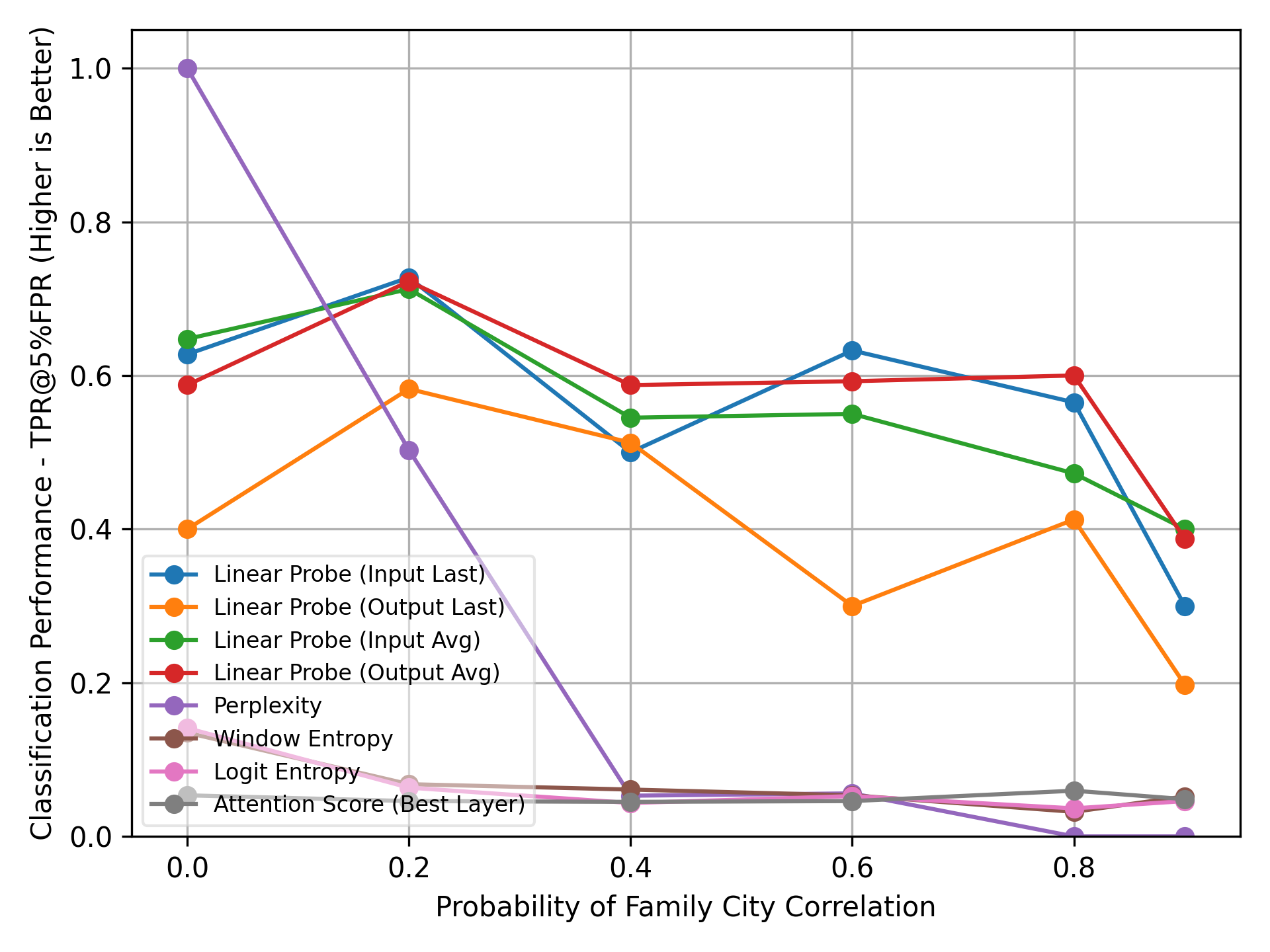}
  \end{minipage}
  \caption{
  TPR@5\%FPR (true positive rate (TPR) when the false positive rate (FPR) is at most 5\%) of different hallucination detection methods versus $\rho$.
  \textbf{Left:} Experimental results of models learned from scratch.
  \textbf{Right:} Experimental results of models finetuned from SmolLM2-1.7B.}
  \label{fig:bios_smollm_tpr}
\end{figure*}

We only show the linear probing results for layer 21 as a representative example in the results above; detailed linear probing results of models trained from scratch and models finetuned from SmolLM2-1.7B under each $\rho$ are provided in Figures~\ref{fig:linear_probing_bios} and \ref{fig:linear_probing_smollm}.

\begin{figure*}[t]
    \centering
    \begin{subfigure}[t]{0.4\textwidth}
        \centering
        \includegraphics[width=\linewidth]{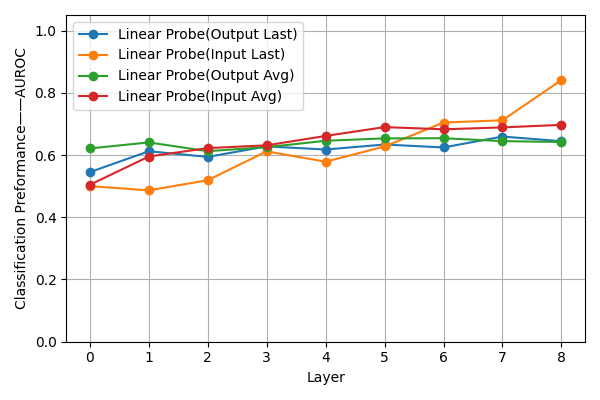}
        \caption{$\rho$=0}
    \end{subfigure}\hfill
    \begin{subfigure}[t]{0.4\textwidth}
        \centering
        \includegraphics[width=\linewidth]{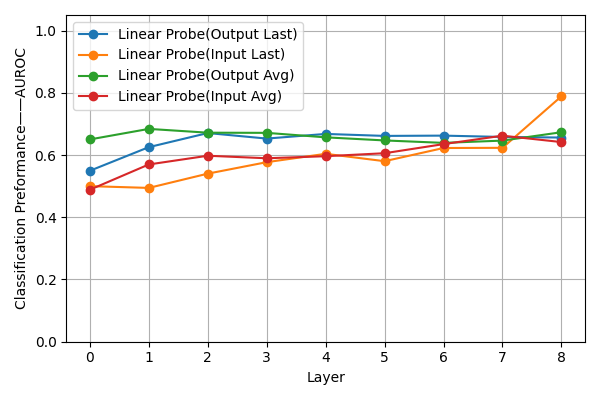}
        \caption{$\rho$=0.1}
    \end{subfigure}
    
    \begin{subfigure}[t]{0.4\textwidth}
        \centering
        \includegraphics[width=\linewidth]{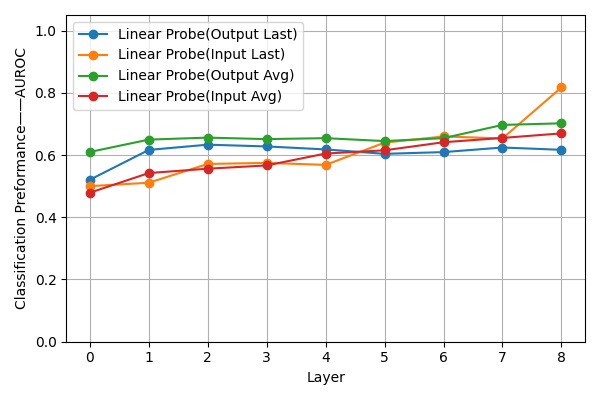}
        \caption{$\rho$=0.2}
    \end{subfigure}\hfill
    \begin{subfigure}[t]{0.4\textwidth}
        \centering
        \includegraphics[width=\linewidth]{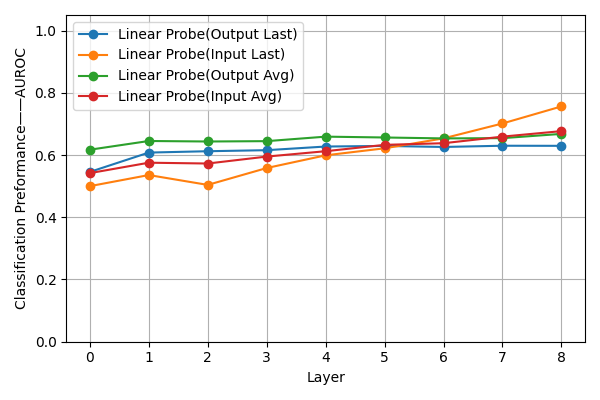}
        \caption{$\rho$=0.3}
    \end{subfigure}
    
    \begin{subfigure}[t]{0.4\textwidth}
        \centering
        \includegraphics[width=\linewidth]{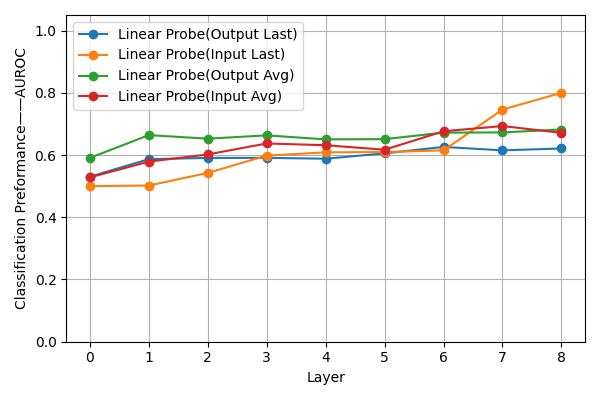}
        \caption{$\rho$=0.4}
    \end{subfigure}\hfill
    \begin{subfigure}[t]{0.4\textwidth}
        \centering
        \includegraphics[width=\linewidth]{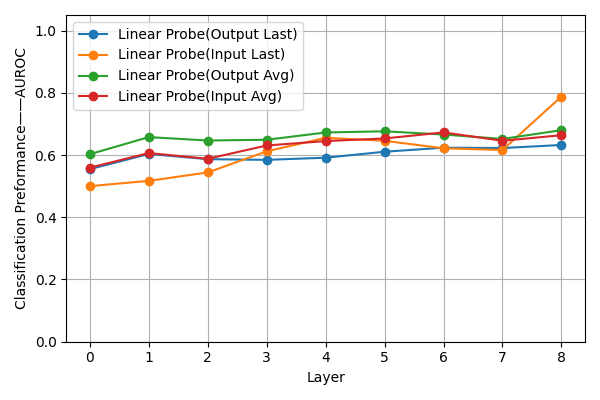}
        \caption{$\rho$=0.5}
    \end{subfigure}

    \begin{subfigure}[t]{0.4\textwidth}
        \centering
        \includegraphics[width=\linewidth]{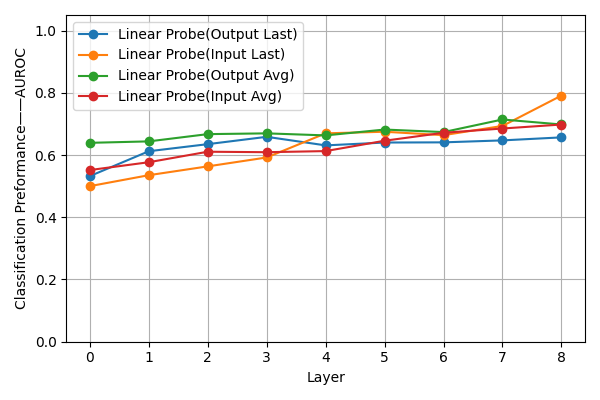}
        \caption{$\rho$=0.6}
    \end{subfigure}\hfill
    \begin{subfigure}[t]{0.4\textwidth}
        \centering
        \includegraphics[width=\linewidth]{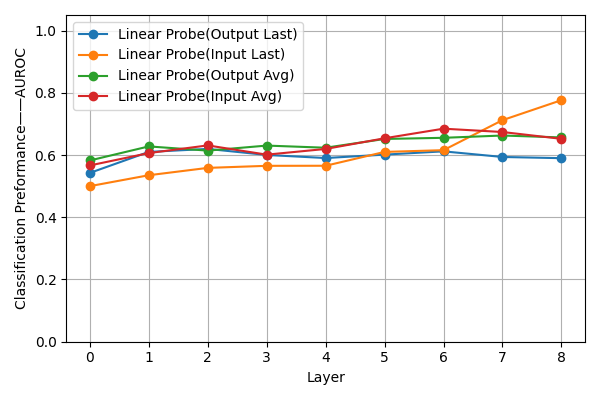}
        \caption{$\rho$=0.7}
    \end{subfigure}

    \begin{subfigure}[t]{0.4\textwidth}
        \centering
        \includegraphics[width=\linewidth]{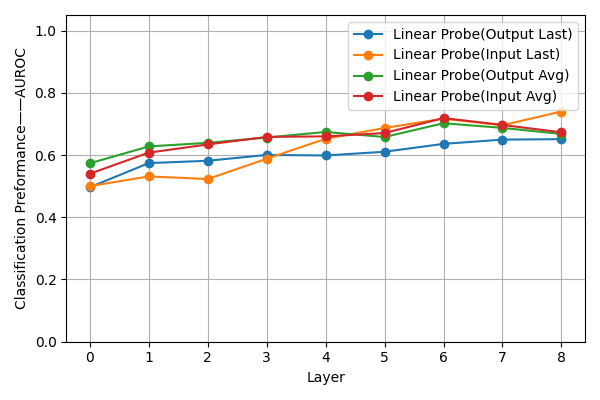}
        \caption{$\rho$=0.8}
    \end{subfigure}\hfill
    \begin{subfigure}[t]{0.4\textwidth}
        \centering
        \includegraphics[width=\linewidth]{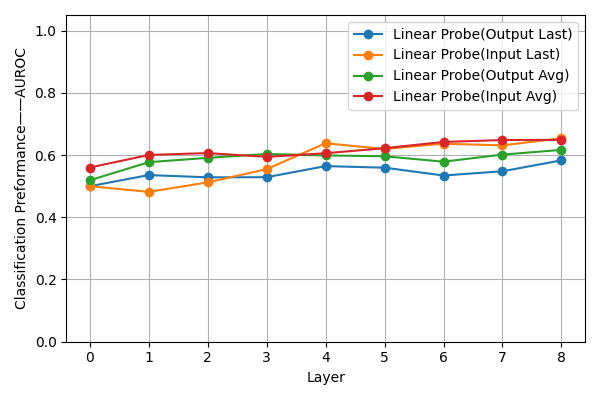}
        \caption{$\rho$=0.9}
    \end{subfigure}

    \caption{Linear probing results for different $\rho$ settings of model trained from scratch. Each subfigure shows the probing performance(AUROC) of single $\rho$.}
    \label{fig:linear_probing_bios}
\end{figure*}

\begin{figure*}[t]
    \centering
    \begin{subfigure}[t]{0.4\textwidth}
        \centering
        \includegraphics[width=\linewidth]{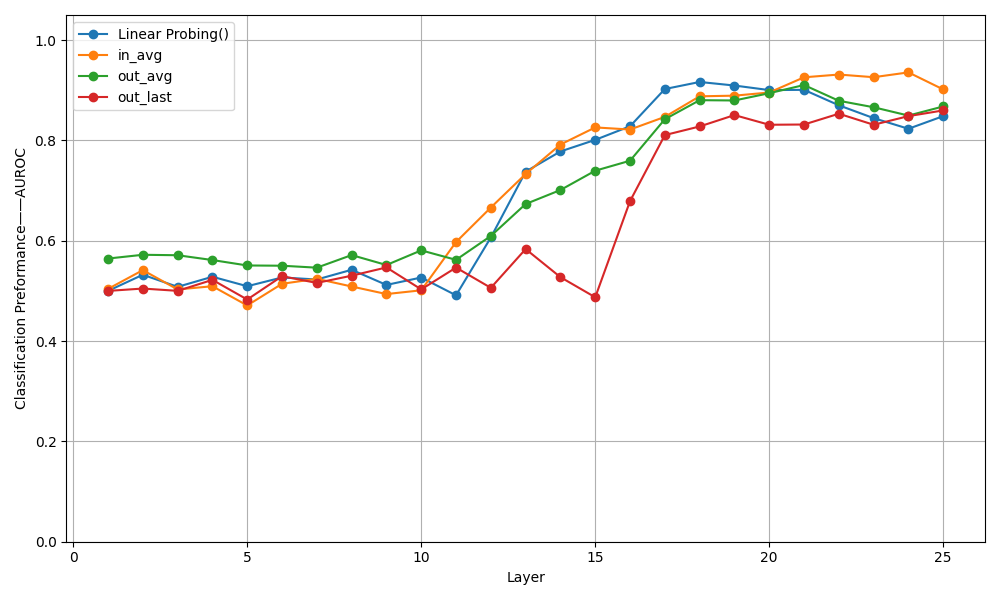}
        \caption{$\rho$=0}
    \end{subfigure}\hfill
    \begin{subfigure}[t]{0.4\textwidth}
        \centering
        \includegraphics[width=\linewidth]{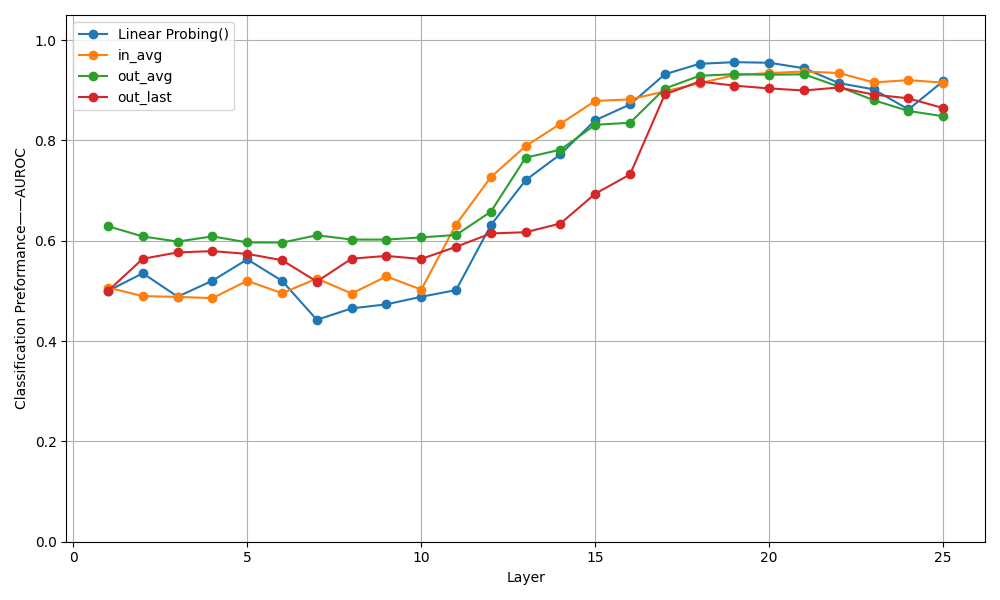}
        \caption{$\rho$=0.2}
    \end{subfigure}
    
    \begin{subfigure}[t]{0.4\textwidth}
        \centering
        \includegraphics[width=\linewidth]{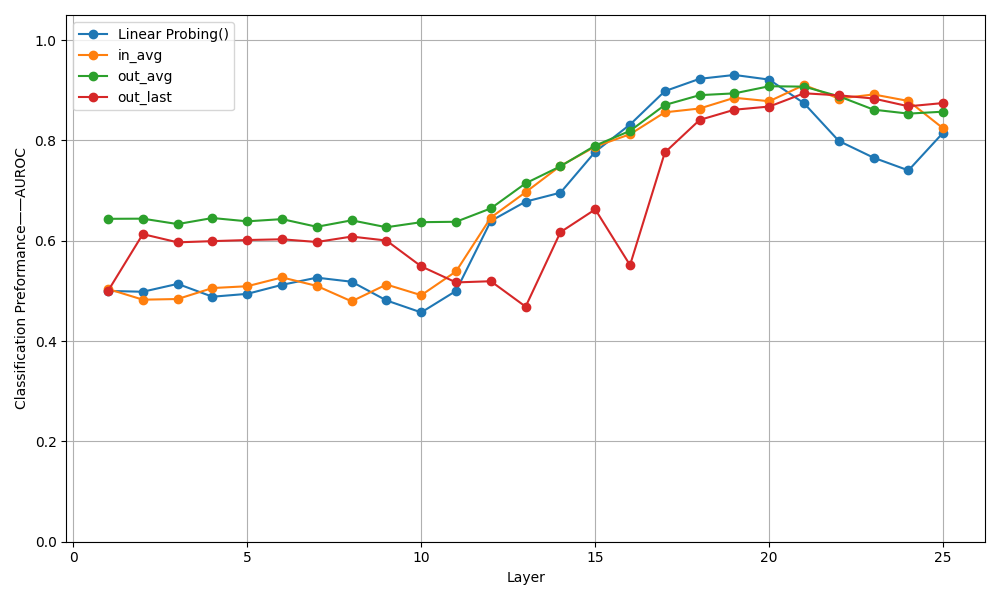}
        \caption{$\rho$=0.4}
    \end{subfigure}\hfill
    \begin{subfigure}[t]{0.4\textwidth}
        \centering
        \includegraphics[width=\linewidth]{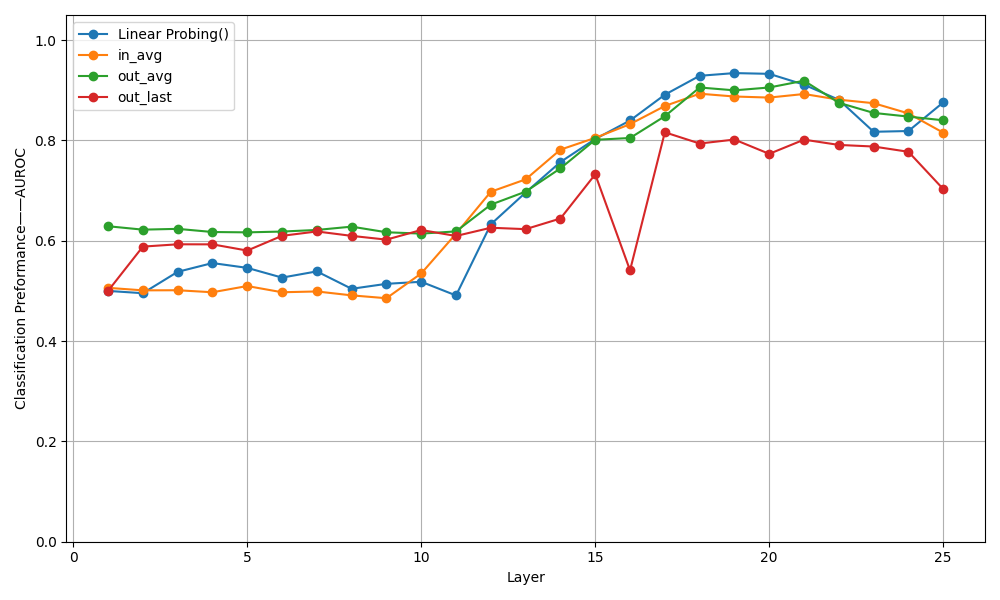}
        \caption{$\rho$=0.6}
    \end{subfigure}
    
    \begin{subfigure}[t]{0.4\textwidth}
        \centering
        \includegraphics[width=\linewidth]{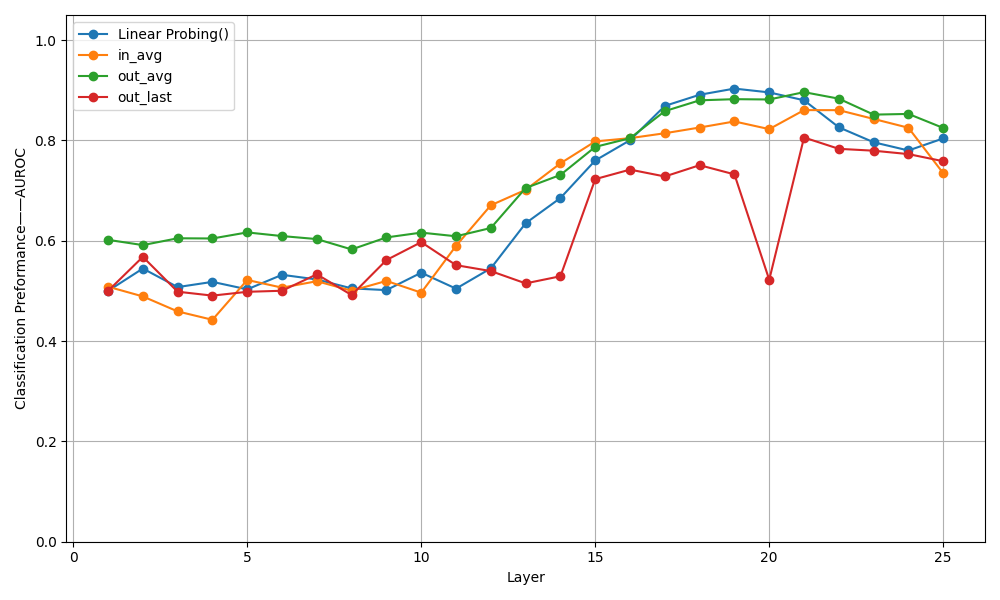}
        \caption{$\rho$=0.8}
    \end{subfigure}\hfill
    \begin{subfigure}[t]{0.4\textwidth}
        \centering
        \includegraphics[width=\linewidth]{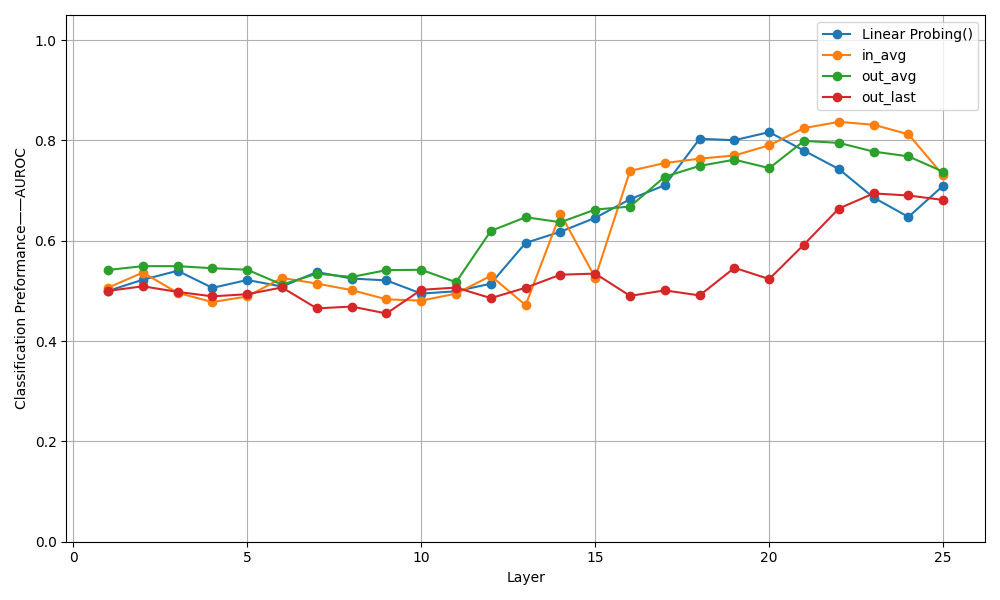}
        \caption{$\rho$=0.9}
    \end{subfigure}

    \caption{Linear probing results for different $\rho$ settings of model continual-pretraining and SFT from SmolLM2-1.7B. Each subfigure shows the probing performance(AUROC) of single $\rho$.}
    \label{fig:linear_probing_smollm}
\end{figure*}

\subsection{Refusal Fine-tuning}

In this subsection, we carefully analyze the generalization effect between classes and the third possibility mentioned in the previous discussion. All the following results are completed in a moderated size GPT-2 setting.

The setting here is more detailed, we consider the model fine-tuned after mixing in refusal data that are comprised of $1$ to $6$ classes of attributes, and evaluate the model on each attritbute class separately. For one specific statistics, this procedure generates a $6\times 6$ heat maps, displays the generalization ability of training on one subset and evaluating on others.

For example, \Figref{fig:app1example} shows the case of $\rho=0.0$. We list $6$ tables, each corresponds to an amount under a further fine grained setting. For the left columns, the \textit{same} represents the refusal data is constructed by using same individuals for $6$ classes, the right column \textit{different} represents refusal data of $6$ classes contain different individuals, we expect the \textit{different} setting has more effect and that is indeed the truth. Then the attribute class caption labeled at the bottom of the heatmap from left to right illustrates the adding order of attributes when the attribute becomes a part of the refusal data. The rows of the heat map from top to bottom corresponds $6$ separately fine-tuned model on mixed SFT data when the corresponding number of classes are added into refusal data. For each row, the columns displayed the corresponding metric evaluated on each test data of attribute class. It can be seen that there is no generalization here.

Further more, we add a new hallucination rate metric corresponds the third possiblity mentioned before, it is obtained simultaneously with SFT accuracy, means it is the pure hallucination rate
$$
\frac{\#\{\text{wrong responses on QA pairs of attribute class }i\}\cap \{\text{not refusal responses}\}}{\#\{\text{QA pairs on attribute class }i\}}
$$
on the same test data as in the SFT accuracy heat maps, while refusal rate is tested on another separate data with purely unknown individuals. Notice that each data point discussed in \Secref{exp:refusalft} is of a form as an average of the last row of one heat map under the \textit{same} part.

\begin{figure}[htbp]
\centering
\includegraphics[width=0.7\textwidth]{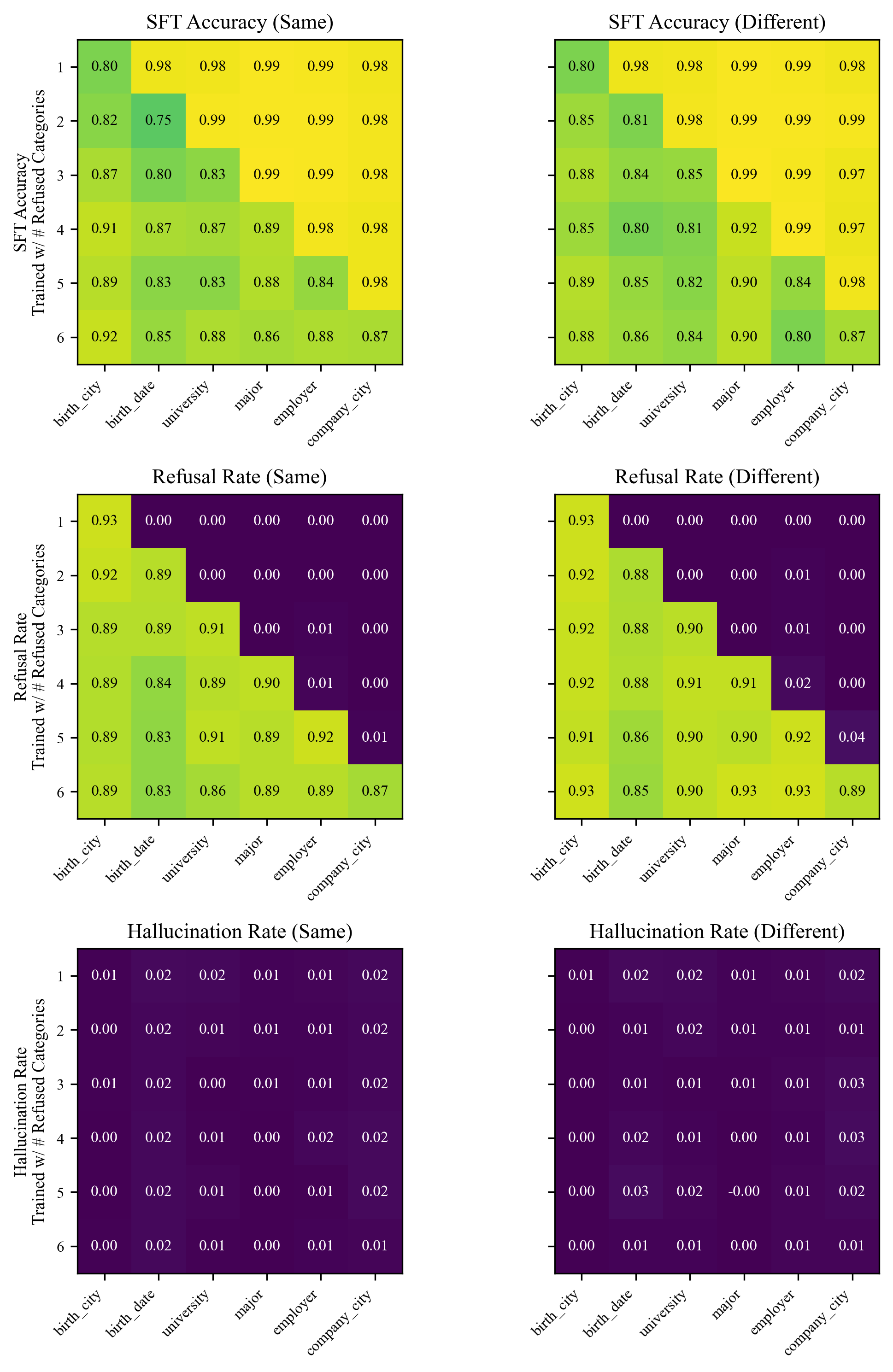}
\caption{Example of detailed experiment under class-alone level testing. The left part shows the result tested on providing same individuals to each refusal data, the right part distribute different individuals to the refusal data of each class. We make this distinction here to study the effect of the difference in capacity occupancy caused by different names( use different people to construct data of different classes will occupy more parameter capacity). From top to the bottom are the result of accuracy value, refusal rate, hallucination respectively. We found here the generalization effect is minimal and there almost correct answer or \textit{I don't know.} during testing on known individuals.}
\label{fig:app1example}
\end{figure}

In Figures~\ref{fig:longacc},\ref{fig:longref},\ref{fig:longhal} we show the result that varies the correlation from $0.0$ to $0.9$ and each metric has the same meaning as before.

\begin{figure}[htbp]
\centering
\includegraphics[width=\textwidth]{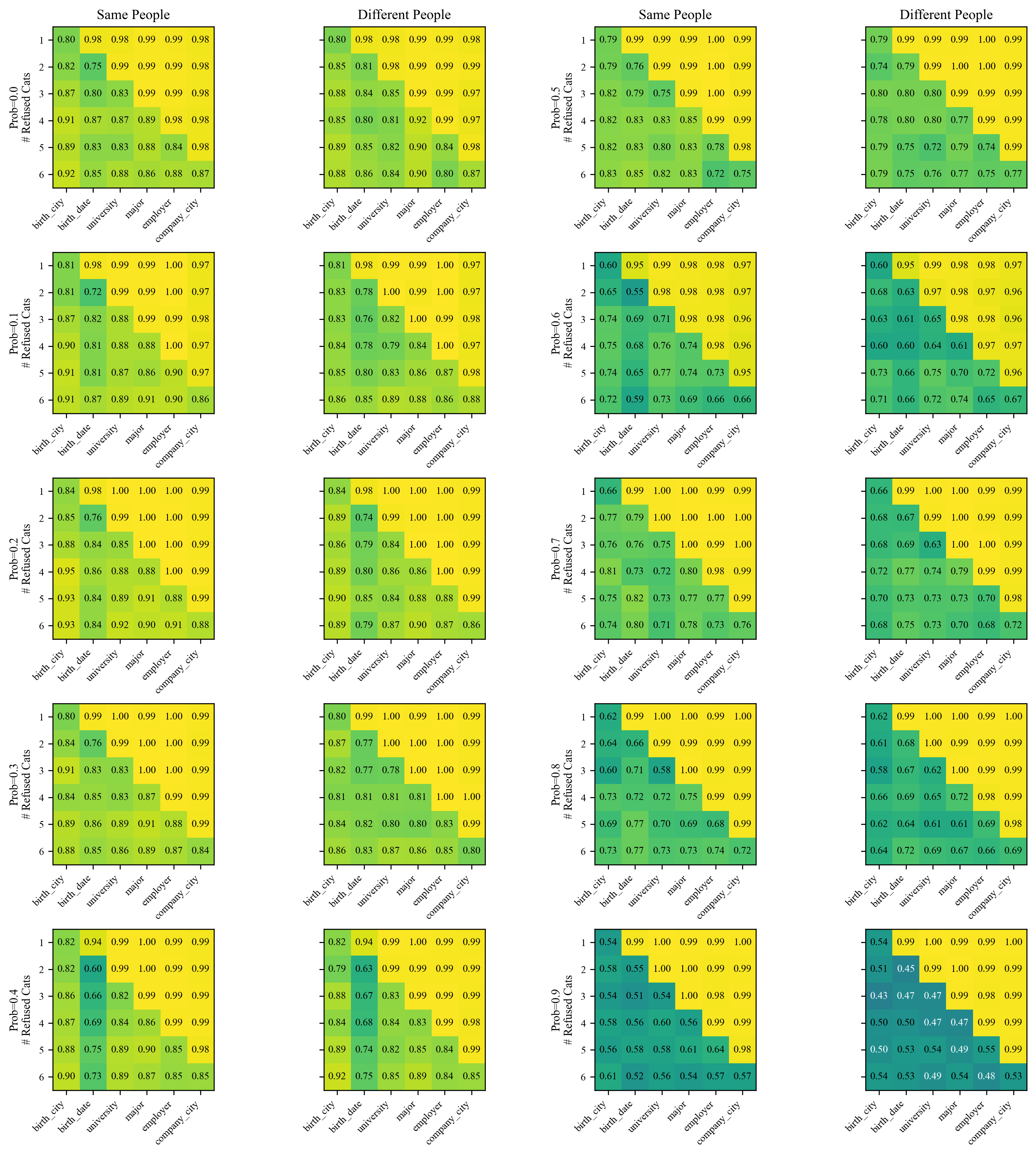}
\caption{Results of accuracy with correlation intensity from $0.0$ to $0.9$. High correlation level heavily damage the accuracy, and training on some subset of attributes does not harm the others.}
\label{fig:longacc}
\end{figure}

\begin{figure}[htbp]
\centering
\includegraphics[width=\textwidth]{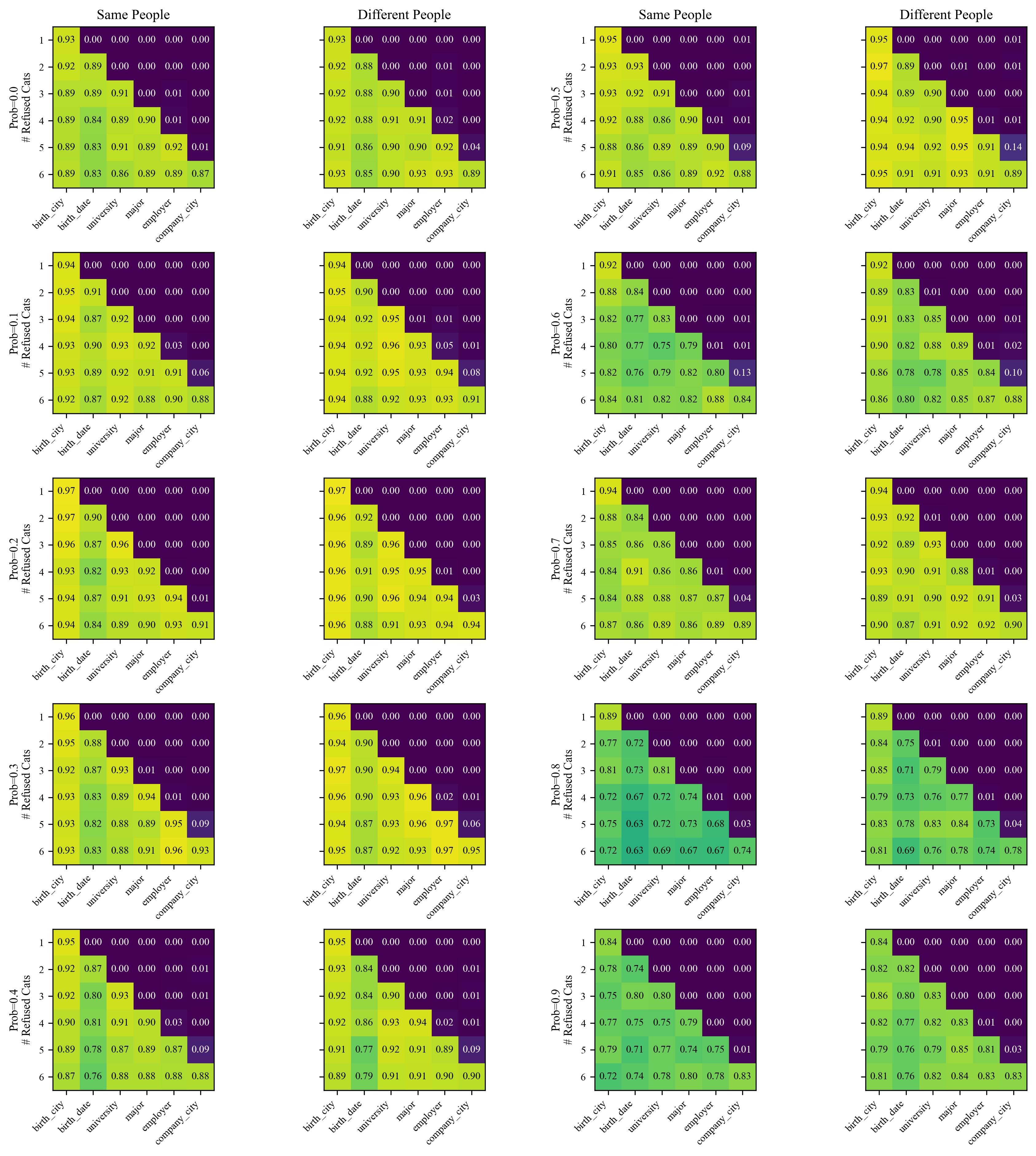}
\caption{Results of refusal rate with correlation intensity from $0.0$ to $0.9$. High correlation also causes a decline in refusal rate, and training on some subset of attributes does not contribute to the others. In each heat map, the performance of first row is better than the last row, this is due to our fixed volume data mixing scheme that maintains the refusal data proportion at $12\%$. More classes share a fixed total amount, this results in a reduced amount of data allocated to each class.}
\label{fig:longref}
\end{figure}

\begin{figure}[htbp]
\centering
\includegraphics[width=\textwidth]{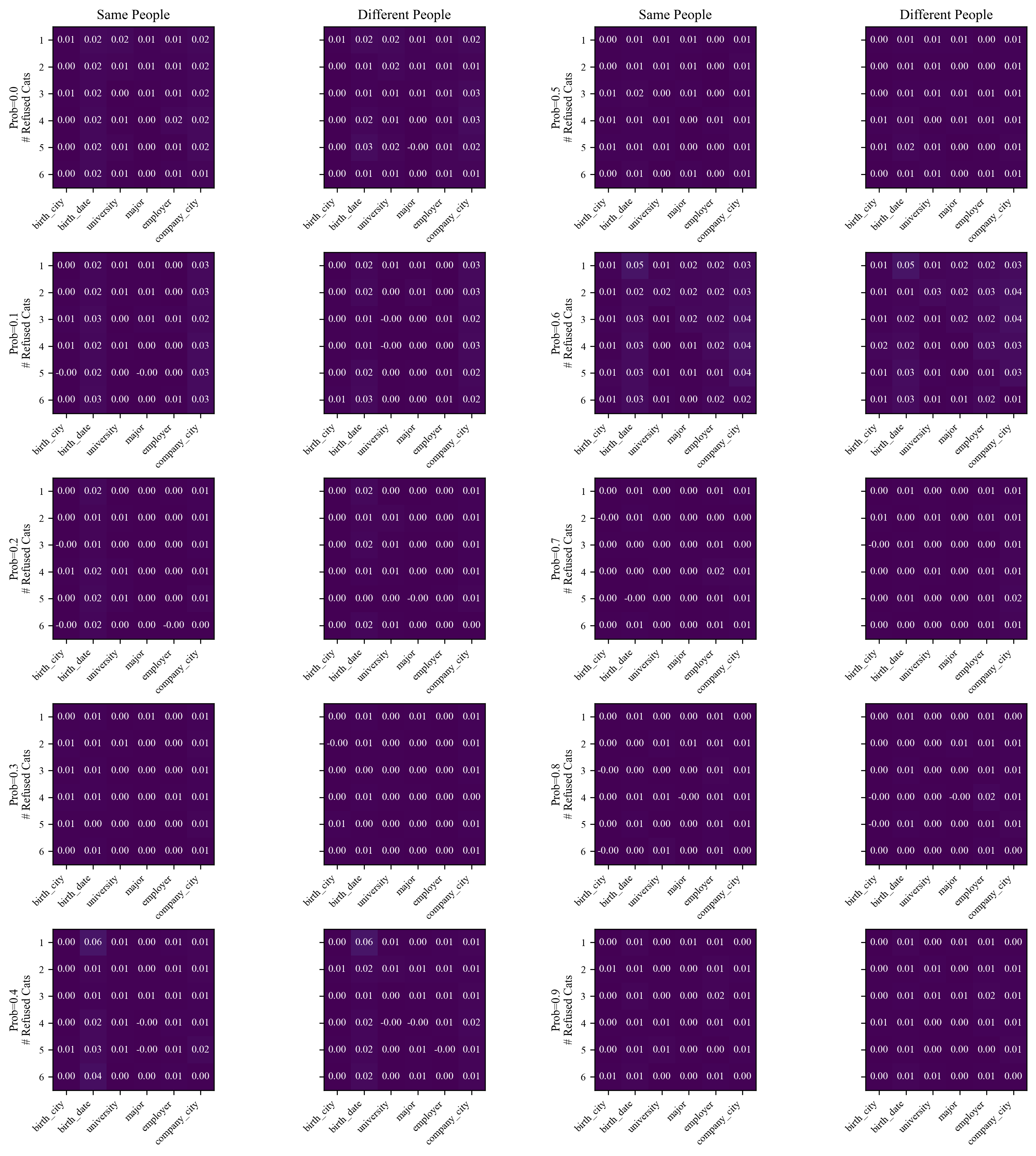}
\caption{Results of hallucinaion rate, obtained simutaneously while testing accuracy. It shows there is almost no hallucination occurs when evaluating at known individuals. The deteriorated accuracy almost stem from over-refusal.}
\label{fig:longhal}
\end{figure}

\section{Implementation details}

\label{appendix:detail}

\subsection{Dataset Overview}

\paragraph{Basic setting}

\label{datacomp}

We uniformly distribute the frequency of each individual's occurrence across all dataset splits, ensuring that each person is represented approximately equally across training, fine-tuning, and testing subsets. Specifically, for our pretraining dataset, we select the first 10,000 individuals and apply 50 templates to each individual; for the instruction fine-tuning dataset, we select the first 5,000 individuals and generate a set of 30 question–answer pairs per individual. The remaining individuals are reserved exclusively for testing purposes to evaluate model performance and hallucination detection.

\paragraph{Varying the middle name}

\label{varydata}

For the purpose of evaluating the ability of various hallucination detection algorithms, we build a test set using 2,000 individuals from the pretraining dataset. This set includes factual samples based on the original individuals and hallucinated samples generated by altering their middle names to create novel identities absent from training. Using birthplace questions for both groups, detection methods are supposed to classify model outputs as factual or hallucinated without ground-truth access.

\paragraph{Data for training and testing SmolLM}

\label{datasmoll}

For continual pre-training, we use a mixture of FineWeb~\citep{penedo2024fineweb} and the pre-training dataset of our synthesized basic setting~\ref{datacomp}. To enhance data diversity and improve alignment with natural language, we rewrite our basic synthesized pretraining dataset using Qwen-2.5-3B~\citep{qwen2025qwen25technicalreport}, generating more natural and coherent text representations.

For the instruction fine-tuning dataset, we directly use our synthesized Q\&A format applied to the entire 10,000-individual pretraining dataset in the basic setting.

For evaluation, in contrast to the previous setting, we use 2,000 individuals from the instruction fine-tuning dataset as truth samples, and 2,000 random individuals as hallucinated samples (guaranteed not to exist in the training dataset). The test data consist of Q\&A questions about their birthplaces.

\subsection{Training details}

\paragraph{Basic training detail}

\label{trainingdetail}

For training, we adopt Adam optimizer~\citep{kingma2014adam}, use a sequence length of 512 and batch size of 32. We apply a warmup ratio of 0.05 and a warmdown ratio of 0.1. Pretraining runs for 4 epoch with a learning rate of 0.0006, while fine-tuning runs for 1 epochs with a reduced learning rate of 0.0003. We use no weight decay and use bf16 precision. To enhance parallelism, multiple sequences are packed into 512-token sequences, but cross-sequence attention is masked out.

\begin{table}[htbp]
    \centering
    \caption{Examples of Pretraining and Instruction Fine-Tuning Data}
    \begin{tabular}{c p{10cm}}
        \toprule
        \textbf{Dataset Type} & \multicolumn{1}{c}{\textbf{Example}} \\
        
        \hline
        Pretraining & ``Gracie Tessa Howell is born in Camden, NJ. He studies Biomedical Engineering and works at UnitedHealth Group. He enters the world on April 15, 2081, and is employed in Minnetonka. He is an alumnus/alumna of Buena Vista College.'' \\
        \hline
        Instruction Fine-Tuning & ``Q: What area of study did Gracie Tessa Howell focus on? A: Biomedical Engineering'' \\
        \hline
        Refusal Fine-Tuning & ``Q: What academic discipline did Daniela Yasmin Marshall focus on? A: I don't know.'' \\
        \bottomrule
    \end{tabular}
    
    \label{tab:datasets}
\end{table}

\begin{table}[h]
    \centering
    \caption{Model Configurations with Parameter Counts}
    \begin{tabular}{cccc}
        \toprule \textbf{Layers} &\textbf{Heads} & \textbf{Emb Dim} & \textbf{Params (M)} \\
        \hline
         4       & 3       & 192        & 11.4 \\
         5       & 4       & 256        & 16.8 \\
         6       & 5       & 320        & 23.5 \\
         7       & 6       & 384        & 31.7 \\
         8       & 7       & 448        & 41.8 \\
         8       & 8       & 512        & 50.9 \\
         9       & 9       & 576        & 64.8 \\
         10      & 10      & 640        & 81.3 \\
         11      & 11      & 704        & 100.8 \\
         12      & 12      & 768        & 123.6 \\
         16      & 16      & 1024       & 252.8 \\
         20      & 16      & 1024       & 303.2 \\
         24      & 20      & 1440       & 669.6 \\
         32      & 25      & 1600       & 1063.5 \\
        \bottomrule
    \end{tabular}
    \label{tab:model_sizes}
\end{table}

\newpage
\subsection{Prompts}

\label{sec:appendix_prompts}

This section details the prompts we use for entity extraction and consistency clustering tasks.

\paragraph{Entity Extraction Prompt}

We use the following prompt to instruct the model to extract all possible entities from a given question, preserving their exact text and character offsets.

\begin{promptbox}{Entity Extraction Prompt (QUESTION PROMPT)}
Task: From QUESTION, extract ALL possible entities (people, orgs, works, locations, events, dates, numbers, titles, etc). Include overlapping/nested spans (e.g., \textit{University of California} and \textit{University of California, Berkeley}).

Rules:
- Return unique items but keep overlaps as separate entries.
- Preserve the exact surface text and its character start/end offsets.
- Add a coarse type: ["PERSON","ORG","WORK","LOC","EVENT","DATE","NUM","TITLE","OTHER"].
- Do NOT infer beyond the question's text; no web lookup.
- If uncertain, include as OTHER.
- Keep it terse.

Output ONLY valid JSON:
{{
  "question": "{{will be filled}}",
  "entities": [
    {{
      "text": "surface form",
      "start": <int>,  // char index
      "end": <int>,    // exclusive
      "type": "PERSON|ORG|WORK|LOC|EVENT|DATE|NUM|TITLE|OTHER"
    }}
  ]
}}

Input:
QUESTION: {question}
\end{promptbox}

\paragraph{Consistency Clustering Prompt}
To evaluate the consistency of model predictions, we use the following prompt to cluster semantically equivalent answers and compare them against a gold label.

\begin{promptbox}{Consistency Clustering Prompt (CONSISTENCE PROMPT)}
Task: Given a QUESTION, a gold LABEL, and 10 PREDICTIONS (prediction_0...prediction_9), cluster PREDICTIONS by semantic equivalence (same core answer). For each cluster, set a short canonical **entity name only** (no sentences) so it can match LABEL cleanly. Also judge whether the cluster matches LABEL (substantive equivalence; wording may differ).

Rules:
- Canonical MUST be just the entity name (e.g., \textit{Michio Sugeno}, \textit{October 2010}) -- no verbs, no extras.
- Ignore casing, punctuation, formatting, honorifics, and minor phrasing.
- Numbers/dates must agree (same value or clearly equivalent).
- Empty/unknown/irrelevant predictions -> their own cluster, not matching LABEL.
- Keep reasons brief.

Output ONLY valid JSON with these fields:
{{
  "question": "{{will be filled}}",
  "label": "{{will be filled}}",
  "clusters": [
    {{
      "canonical": "short canonical phrasing of this cluster's meaning",
      "count": <int>,
      "members": [<int indices of predictions in this cluster>],
      "matches_label": true|false,
    }}
  ],
}}

Inputs:
QUESTION: {question}
LABEL: {label}
PREDICTIONS:
0: {prediction_0}
1: {prediction_1}
2: {prediction_2}
3: {prediction_3}
4: {prediction_4}
5: {prediction_5}
6: {prediction_6}
7: {prediction_7}
8: {prediction_8}
9: {prediction_9}
\end{promptbox}

\end{document}